\documentclass{article}

\usepackage[preprint]{neurips_2026}

\usepackage{microtype}
\usepackage{graphicx}
\usepackage{subcaption}
\usepackage{booktabs}
\usepackage{hyperref}
\usepackage{url}

\usepackage{amsmath}
\usepackage{amssymb}
\usepackage{mathtools}
\usepackage{amsthm}

\usepackage[capitalize,noabbrev]{cleveref}

\theoremstyle{plain}
\newtheorem{theorem}{Theorem}[section]

\newtheorem{lemma}[theorem]{Lemma}

\theoremstyle{definition}

\newtheorem{assumption}[theorem]{Assumption}
\theoremstyle{remark}

\usepackage[disable,textsize=tiny]{todonotes}

\usepackage{threeparttable}
\usepackage{algorithm}
\usepackage{algorithmic}
\usepackage{multirow}

\usepackage{wrapfig}

\usepackage[table]{xcolor}

\usepackage{xcolor}
\usepackage{hyperref}

\begin{document}

\title{Self-Improving Tabular Language Models\\
via Iterative Reward-Guided Post-Training}

\author{%
Yunbo Long\textsuperscript{1}\thanks{Equal contribution. Email: \texttt{yl892@cam.ac.uk}, \texttt{tejumade.afonja@cispa.de}}
\quad
Tejumade Afonja\textsuperscript{2}\footnotemark[1]
\quad
Guangya Hao\textsuperscript{1}
\quad
Alexandra Brintrup\textsuperscript{1,3}
\quad
Mario Fritz\thanks{Corresponding author. Email: \texttt{fritz@cispa.de}}
\textsuperscript{2}
\\[0.5em]
\textsuperscript{1}Department of Engineering, University of Cambridge
\\
\textsuperscript{2}CISPA Helmholtz Center for Information Security, Saarbrücken, Germany
\\
\textsuperscript{3}The Alan Turing Institute, London
}

\maketitle

\begin{abstract}
Tabular language models can generate synthetic tables by modeling rows as token sequences, but they are typically trained once with supervised fine-tuning and then used as static synthesizers. This is limiting because next-token likelihood does not directly optimize the distributional, utility, and indistinguishability properties used to evaluate synthetic data. We study iterative reward-guided post-training for tabular language models through a generate--score--align protocol, where a generator samples synthetic rows, a task-specified reward ranks them, and the model is updated relative to a fixed supervised reference. Within this protocol, we propose \textbf{TabGRAA} (\textbf{Tab}ular \textbf{G}roup-\textbf{R}elative \textbf{A}dvantage \textbf{A}lignment), a group-relative alignment method that compares high- and low-reward generated groups using group-averaged policy/reference log-ratios rather than one-to-one preference pairs.
Across five mixed-type benchmarks, TabGRAA improves a GReaT backbone beyond additional supervised fine-tuning and achieves the strongest average trade-off among adapted DPO, KTO, and NPO baselines on fidelity and downstream utility, while maintaining empirical privacy diagnostics near the supervised baseline. Ablations show that the gains depend on meaningful reward ranking and stable group-level updates rather than extra training alone. Reward-substitution and scorer-separation studies further show that the post-training loop can use both classifier-based and classifier-free rewards, and that proper scorer separation is important for preserving the fidelity--utility--privacy trade-off. These results position TabGRAA as a self-improving post-training method for tabular language-model generators, complementary to strong static tabular synthesizers. Our code is available at
\href{https://github.com/Yunbo-max/Post-Training-Tabular-Language-Models}
{\textcolor{red}{https://github.com/Yunbo-max/Post-Training-Tabular-Language-Models}}.
\end{abstract}



\section{Introduction}

\begin{figure*}[t]
    \centering
    \includegraphics[width=\linewidth]{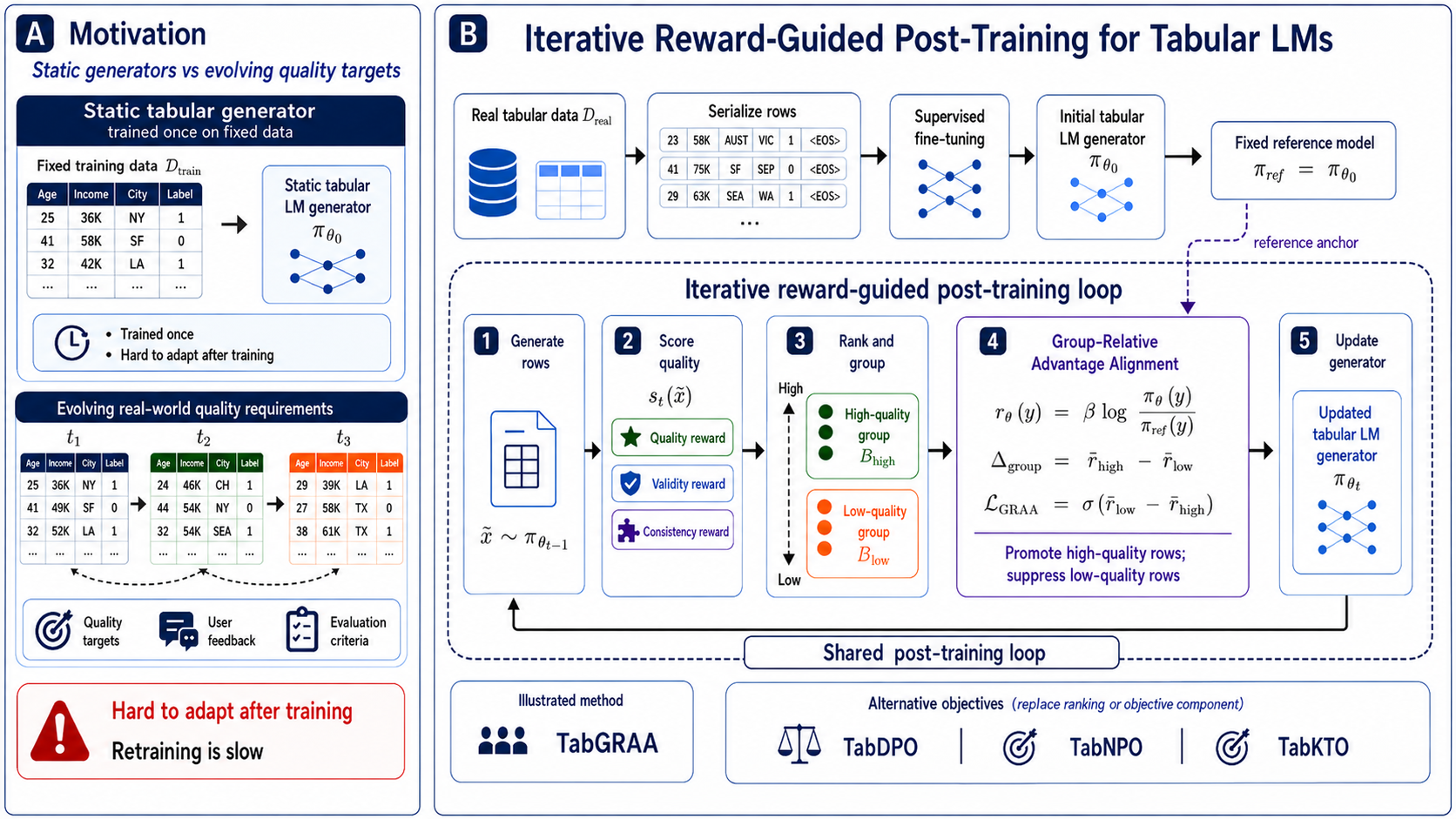}
    \caption{\textbf{Iterative reward-guided post-training framework for tabular language models.} Starting from a supervised fine-tuned tabular LM, the framework repeatedly generates synthetic rows, scores them with a task-specified reward, forms high- and low-reward groups, and updates the generator. The figure illustrates TabGRAA within the shared post-training loop. }
    \label{fig:workflow}
\end{figure*}


Tabular data generation has advanced substantially through GAN-based~\citep{xu2019modeling}, diffusion-based~\citep{kotelnikov2023tabddpm,shi2024tabdiff}, and language-model-based approaches~\citep{borisov2022language}. However, most tabular generators are still trained as static synthesizers: after fitting a fixed training table, they are used without a principled post-training mechanism for refinement. This limitation is especially relevant for tabular language models, which serialize rows as token sequences and optimize next-token likelihood. Although this objective learns plausible row formats, it does not directly optimize the dataset-level properties used to evaluate synthetic tables, such as marginal distributions, feature dependence, downstream utility, and empirical distinguishability.

Reward-guided post-training offers a natural way to refine tabular language models beyond supervised fine-tuning. Unlike many tabular generators, autoregressive tabular LMs expose sequence log-probabilities, so generated rows can be compared against a fixed reference model through policy/reference log-ratios. This makes them compatible with alignment-style objectives after supervised training. The key challenge is that tabular generation does not naturally provide human preference labels over individual rows. Synthetic table quality is primarily statistical and distributional: individual rows may appear valid even when the generated table fails to preserve global structure.

Reinforcement learning provides a natural direction for improving generators beyond maximum-likelihood training. In language and image generation, methods such as RLHF, PPO-style optimization, and preference-based alignment have been used to refine pretrained models using reward or preference feedback~\citep{schulman2017proximal,rafailov2024directpreferenceoptimizationlanguage}. Recent work has also studied preference alignment for diffusion models by formulating denoising as a multi-step decision process or by deriving diffusion-specific preference objectives~\citep{black2023training,wu2025preferencealignment}. These developments show the broad relevance of reward-guided generation, but they also highlight why tabular LMs provide a cleaner interface for post-training. DPO-style objectives cannot be applied to diffusion models in the same direct sequence-likelihood form used for autoregressive language models; they require diffusion-specific reformulations based on denoising trajectories or likelihood bounds. Moreover, aggressive reward optimization in diffusion models can harm fidelity or diversity if the reward is overemphasized without sufficient regularization. For mixed-type tabular data, where categorical validity, numerical support, feature dependence, privacy must be preserved simultaneously, direct diffusion post-training remains less straightforward than autoregressive LM post-training.
Tabular language models therefore offer a useful setting for studying iterative reward-guided refinement. Because each synthetic row has an explicit autoregressive log-probability, the generator can be compared against a fixed reference model through policy/reference log-ratios. At the same time, tabular data creates a different challenge from text and images. Human preference labels over individual rows are not naturally available, and tabular quality is primarily statistical rather than semantic. It is usually evaluated over collections of rows through distributional fidelity, feature dependence, downstream utility, and privacy.

General alignment methods such as DPO~\citep{rafailov2024directpreferenceoptimizationlanguage}, KTO~\citep{ethayarajh2024ktomodelalignmentprospect}, and NPO~\citep{zhang2024negative} provide useful starting points for tabular language-model post-training. However, applying them to tabular generation requires converting reward-ranked samples into instance-level supervision: DPO needs preferred--rejected pairs, KTO needs desirable/undesirable labels, and NPO needs a negative set. In tabular generation, such labels are not naturally given, because sample quality is usually evaluated through distribution-level properties rather than human preferences over individual rows. We therefore construct reward-ranked adaptations of these objectives as strong baselines, while asking whether the ranked pool can be used more directly.

Motivated by these gaps, we study iterative reward-guided post-training for tabular language models. Starting from a supervised fine-tuned tabular LM, the post-training protocol repeatedly samples synthetic rows, scores them with a task-specified reward, forms high- and low-reward subsets, and updates the generator using a policy/reference alignment objective. This turns tabular LM generation from one-shot supervised fine-tuning into an iterative refinement process driven by feedback on the model's own generated samples.
Within this protocol, we propose \textbf{TabGRAA} (\textbf{Tab}ular \textbf{G}roup-\textbf{R}elative \textbf{A}dvantage \textbf{A}lignment), a group-relative alignment method that compares high- and low-reward generated groups through group-averaged policy/reference log-ratios, avoiding arbitrary one-to-one preference pairing. The same generate--score--align protocol can be instantiated with different reward signals; in this paper, we focus on generation-quality refinement using classifier-based, classifier-free distance-based, and scorer-separated reward settings.
Our contributions are:
\begin{itemize}
    \item We formulate tabular language-model generation as an iterative reward-guided post-training problem, enabling a supervised tabular generator to refine itself using feedback on its own synthetic samples.

    \item We propose \textbf{TabGRAA}, a group-relative policy/reference alignment method that uses reward-ranked high/low groups without requiring one-to-one preference pairs. We also construct reward-ranked adaptations of DPO, KTO, and NPO as strong tabular post-training baselines under the same generate--score--align protocol.

    \item Across five mixed-type benchmarks, TabGRAA improves the GReaT backbone beyond additional supervised fine-tuning and achieves the strongest average trade-off among adapted alignment methods under fidelity and downstream-utility metrics, while maintaining empirical attack diagnostics near the supervised baseline. Ablations show that meaningful reward ranking and group averaging are important for stable gains, and scorer-separation experiments identify reward leakage as a key failure mode.
\end{itemize}

\section{Related work}

\paragraph{Reward fine-tuning for diffusion models.}
Reward-guided fine-tuning has also been explored for diffusion models, including RL-based diffusion policy optimization and diffusion-specific preference optimization~\citep{black2023training,wallace2023diffusionmodelalignmentusing,wu2025preferencealignment}. However, applying RL or DPO-style updates to diffusion models is less direct than for autoregressive language models. Diffusion alignment typically treats denoising as a multi-step decision process or derives a diffusion-specific objective through likelihood or ELBO-style reformulations. Direct reward maximization can also over-optimize imperfect rewards, leading to reduced diversity, mode collapse, or degraded sample quality without careful regularization~\citep{barcel2024avoidingmodecollapsediffusion,uehara2024finetuningcontinuoustimediffusionmodels}. These issues are especially relevant for mixed-type tabular data, where numerical support, categorical validity, and feature dependence must be preserved simultaneously. We therefore study reward-guided post-training in the tabular LM setting, where sequence log-probabilities provide a simpler alignment interface.

\paragraph{Preference optimization for tabular LMs.}
Preference-optimization methods such as DPO~\citep{rafailov2024directpreferenceoptimizationlanguage}, KTO~\citep{ethayarajh2024ktomodelalignmentprospect}, and NPO~\citep{zhang2024negative} provide natural baselines for autoregressive generators. However, applying them to tabular generation requires constructing supervision that is not naturally present. DPO assumes preferred--rejected pairs, usually for responses to the same prompt, while tabular generation produces rows whose quality is judged through dataset-level statistics rather than human preferences over individual samples. To build strong baselines, we adapt DPO/KTO/NPO by ranking generated samples with the reward and constructing top-vs-bottom preferences or desirability labels. This is a reasonable adaptation, but it converts a distributional reward signal into artificial instance-level labels. TabGRAA instead uses the ranked pool directly by aligning high- and low-reward groups through group-level policy/reference log-ratios.

\paragraph{Reward overfitting and leakage.}
A fixed reward filter, such as a real-vs-synthetic classifier trained once, can
provide useful feedback for selecting synthetic samples that appear more
realistic. However, optimizing an imperfect or stale reward can lead to
overoptimization or Goodhart-style failure, where the proxy reward improves while
the intended data quality does not~\citep{gao2023rewardoveroptimization,
karwowski2024goodhart}. This issue is especially relevant when the reward is a
classifier-based two-sample signal, since such classifiers are useful for
detecting distributional differences but are still only one proxy for synthetic
data quality~\citep{lopezpaz2017c2st}. 
Using such a scorer for post-training introduces two risks. First, if the same
generated samples are used both to train the reward scorer and to update the
generator, the reward ranking may reflect sample-specific scorer leakage rather
than transferable tabular quality. Second, if the same fixed scorer is reused
throughout post-training, the generator may overfit to the scorer's blind spots:
the alignment loss can decrease while the model learns to produce samples that
the scorer judges as realistic but that do not improve the true tabular
distribution. This risk is especially important for synthetic tabular data,
where quality depends on feature support, correlations, downstream utility, and
empirical attack indistinguishability rather than on a single detector
score~\citep{Lautrup_2024,ganev2025inadequacy}. TabGRAA is designed to reduce these risks by refreshing the reward scorer across
post-training rounds and separating scorer-training samples from alignment samples.

\section{Reward-guided post-training for tabular language models}
\label{sec:method}

As shown in Figure \ref{fig:workflow}, TabGRAA post-trains an autoregressive tabular language model through an iterative reward-guided loop. Starting from a supervised fine-tuned generator, each iteration samples synthetic rows, evaluates them with a task-specified reward signal, forms high- and low-reward groups, and updates the model to increase the relative likelihood of high-reward generations. Generation-quality self-improvement is the primary setting studied in this paper; deployment-oriented settings can be handled by adapting the reward signal, and in some cases by changing the alignment objective within the same post-training framework.

\subsection{Initial tabular language model fine-tuning}

Given a real tabular dataset $\mathcal{D}_{\mathrm{real}}=\{x_i\}_{i=1}^N$, we serialize each row $x_i$ into a token sequence
$y_i=w^{(i)}_{1:j}=\mathrm{TOKENIZE}(x_i)$. Starting from a pretrained autoregressive language model $\pi_{\mathrm{base}}$, we obtain an initial tabular generator $\pi_{\theta_0}$ by supervised fine-tuning:
\begin{equation}
\mathcal{L}_{\mathrm{SFT}}(\theta)
=
-\mathbb{E}_{x\sim\mathcal{D}_{\mathrm{real}}}
\left[
\sum_{k=1}^{j}
\log \pi_{\theta}(w_k \mid w_{1:k-1})
\right].
\end{equation}
We use $\pi_{\theta_0}$ as the initial policy and as a fixed reference model $\pi_{\mathrm{ref}}=\pi_{\theta_0}$ during alignment. The reference model anchors post-training updates to the distribution learned during supervised fine-tuning.

\subsection{Task-specified reward signals}
\label{sec:quality_signal}

At iteration $t$, the current generator $\pi_{\theta_{t-1}}$ produces a pool of
synthetic rows. Each generated row $\tilde{x}$ receives a scalar reward score
\begin{equation}
\rho_t(\tilde{x})\in\mathbb{R},
\end{equation}
defined by the current post-training objective. TabGRAA only requires that
generated samples can be ranked by this score. The reward scorer need not be
differentiable, and gradients are not propagated through the scorer.
For the primary generation-quality refinement task, we use distributional
rewards. The default reward is based on real-vs-synthetic distinguishability.
At iteration $t$, we independently sample a scorer-training pool
$\tilde{\mathcal D}^{(t)}_{\mathrm{score}} \sim \pi_{\theta_{t-1}}$ and an
alignment pool
$\tilde{\mathcal D}^{(t)}_{\mathrm{align}} \sim \pi_{\theta_{t-1}}$.
We train a binary classifier
$\phi_t:\mathcal X\rightarrow[0,1]$ to distinguish real rows from samples in
$\tilde{\mathcal D}^{(t)}_{\mathrm{score}}$, and use it only to score generated
rows in the disjoint alignment pool $\tilde{\mathcal D}^{(t)}_{\mathrm{align}}$.
The classifier output is converted into a sample-level indistinguishability
reward:
\begin{equation}
\rho_{\mathrm{cls}}(\tilde x)
=
1-2|0.5-\phi_t(\tilde x)|.
\end{equation}
Higher values indicate samples for which the classifier is more uncertain, and
therefore harder to distinguish from real rows. Because this reward is based on
real-vs-synthetic distinguishability, improvements in classifier-based
distinguishability metrics should be interpreted as performance on the intended
reward family.

The reward signal is modular. A classifier-free alternative is
Distance-to-Closest-Record (DCR):
\begin{equation}
d(\tilde{x})
=
\min_{x\in\mathcal{D}_{\mathrm{real}}}
\|\tilde{x}-x\|_2,
\qquad
\rho_{\mathrm{dcr}}(\tilde{x})
=
1-
\frac{d(\tilde{x})-d_{\min}}{d_{\max}-d_{\min}}.
\end{equation}
Here, $d_{\min}$ and $d_{\max}$ are computed over the generated alignment pool
$\tilde{\mathcal D}^{(t)}_{\mathrm{align}}$ in the current post-training round.
For deployment adaptation, the same post-training framework can instead use
rewards computed against a shifted target distribution, or constraint-based
rewards. Further reward definitions and implementation details are provided in
Appendix~\ref{app:reward_taxonomy}.

\subsection{Group-relative advantage alignment}
\label{sec:graa}

We derive GRAA from a group-level Bradley--Terry model. Standard preference optimization compares an individual preferred sample $y^+$ with a rejected sample $y^-$. In tabular generation, however, quality is often expressed at the distributional level, so we compare reward-stratified groups instead of individual pairs.
Let $y$ denote the serialized token sequence corresponding to a generated row. Following the implicit-reward view of preference optimization, we define the policy-reference log-ratio reward:
\begin{equation}
r_\theta(y)
=
\beta
\log
\frac{\pi_\theta(y)}{\pi_{\mathrm{ref}}(y)},
\end{equation}
where $\beta$ controls alignment strength. For the high- and low-reward groups, we compute group-averaged implicit rewards:
\begin{equation}
\bar r_\theta^{\mathrm{high}}
=
\frac{1}{B}
\sum_{y\in\mathcal{B}_{\mathrm{high}}}
r_\theta(y),
\qquad
\bar r_\theta^{\mathrm{low}}
=
\frac{1}{B}
\sum_{y\in\mathcal{B}_{\mathrm{low}}}
r_\theta(y).
\end{equation}
The group-relative advantage is
\begin{equation}
\Delta_{\mathrm{group}}(\theta)
=
\bar r_\theta^{\mathrm{high}}
-
\bar r_\theta^{\mathrm{low}}.
\end{equation}
A group-level Bradley--Terry model assigns the probability that the high-reward group is preferred to the low-reward group as
\begin{equation}
P_\theta
\left(
\mathcal{B}_{\mathrm{high}}
\succ
\mathcal{B}_{\mathrm{low}}
\right)
=
\sigma
\left(
\Delta_{\mathrm{group}}(\theta)
\right).
\end{equation}
Maximizing this preference probability encourages the policy to assign higher relative likelihood to high-reward groups than to low-reward groups, both measured against the fixed reference model.

In the main experiments, we use the bounded sigmoid surrogate
\begin{equation}
\mathcal{L}_{\mathrm{GRAA}}(\theta)
=
\sigma
\left(
-\Delta_{\mathrm{group}}(\theta)
\right)
=
\sigma
\left(
\bar r_\theta^{\mathrm{low}}
-
\bar r_\theta^{\mathrm{high}}
\right).
\label{eq:graa_loss}
\end{equation}
Minimizing Eq.~\eqref{eq:graa_loss} increases $\Delta_{\mathrm{group}}(\theta)$ and therefore increases the group-level Bradley--Terry preference probability. This bounded surrogate preserves the preference direction of the standard Bradley--Terry log-likelihood while providing a stable update for iterative post-training; a full derivation is provided in Appendix~\ref{app:graa_bt_derivation}.
The loss differentiates through both the high- and low-reward group log-ratios:
\begin{equation}
\nabla_\theta \mathcal{L}_{\mathrm{GRAA}}
=
\sigma'
\left(
\bar r_\theta^{\mathrm{low}}
-
\bar r_\theta^{\mathrm{high}}
\right)
\left(
\nabla_\theta \bar r_\theta^{\mathrm{low}}
-
\nabla_\theta \bar r_\theta^{\mathrm{high}}
\right).
\end{equation}
Thus, the update simultaneously promotes high-reward generations and suppresses low-reward generations. The fixed reference model helps limit uncontrolled drift from the supervised fine-tuned generator, while group averaging reduces the variance of the alignment signal compared with isolated instance-level comparisons.

\paragraph{Reward-scorer separation.}
Algorithm~\ref{alg:iterative_alignment_framework} separates scorer construction
from alignment for learned rewards. For learned rewards such as the
real-vs-synthetic classifier,
$\tilde{\mathcal D}_{\mathrm{score}}^{(t)}$ and
$\tilde{\mathcal D}_{\mathrm{align}}^{(t)}$ are independently sampled from
$\pi_{\theta_{t-1}}$. The scorer is trained using
$\tilde{\mathcal D}_{\mathrm{score}}^{(t)}$ and is then used only to score rows in
$\tilde{\mathcal D}_{\mathrm{align}}^{(t)}$, which are used for alignment. This
prevents the reward scorer from ranking the same generated rows used to train it.
For training-free rewards such as DCR, no scorer
training pool is needed because the reward is computed directly.



\subsection{Stability properties of the GRAA surrogate}

We analyze GRAA within a fixed post-training round, after the synthetic pool has
been generated, scored, and partitioned into high- and low-reward groups. In this
fixed-round setting, reward scores and group assignments are treated as fixed,
and gradients are taken only through the policy--reference log-ratio terms.
Let $
\mathcal J(\theta)
=
\mathbb E[
\mathcal L_{\mathrm{GRAA}}(\theta)
]$
denote the expected fixed-round GRAA surrogate over mini-batches sampled from the
fixed reward-stratified groups. Under the bounded per-sequence policy-gradient
and smoothness assumptions stated in Appendix~\ref{app:theoretical_proofs}, if
$\hat g_k$ is an unbiased stochastic gradient with variance bounded by
$\sigma_{\mathcal L}^2$, then for step size $\eta\leq 1/L$,
\begin{equation}
\min_{0\leq k<K}
\mathbb E
\left[
\|\nabla \mathcal J(\theta^{(k)})\|^2
\right]
\leq
\frac{
2(\mathcal J(\theta^{(0)})-\mathcal J^*)
}{
\eta K
}
+
\eta L\sigma_{\mathcal L}^2,
\label{eq:main_graa_sgd_bound}
\end{equation}
where $\mathcal J^*=\inf_\theta \mathcal J(\theta)$.
This bound characterizes stochastic optimization of each fixed alignment round:
the first term decreases with the number of gradient steps, while the second term
is the stochastic-variance floor. The appendix further shows that the fixed-batch
GRAA loss has bounded gradients and that group averaging reduces the variance of
the group-averaged log-ratio gradient contribution as $\mathcal O(1/B)$. The
result does not claim global convergence of the full TabGRAA loop, since the
generator, reward scores, and group assignments are refreshed across iterations.
Full statements and proofs are provided in
Appendix~\ref{app:theoretical_proofs}.

\section{Experimental settings}
\label{sec:setting}

\paragraph{Datasets and baselines.}
The primary self-improvement experiments use five mixed-type tabular benchmarks: Adult, Default, Shoppers, Magic, and Beijing. Dataset statistics and preprocessing are provided in Appendix~\ref{app:dataset}. We use GReaT as the tabular language-model backbone and compare against GReaT-FT+, an extended supervised fine-tuning baseline matched for additional training steps. We further adapt general alignment objectives to tabular generation, including DPO, KTO, and NPO, as strong post-training baselines. Full baseline details are provided in Appendix~\ref{app:baseline}. Comparisons with GAN-, VAE-, and diffusion-based tabular synthesizers are reported in Appendix~\ref{app:sota} for context; the main evaluation focuses on post-training tabular language models.

\paragraph{Post-training protocol.}
All iterative methods start from the same supervised fine-tuned GReaT checkpoint. At each iteration, the current generator samples a synthetic pool and a task-specified reward scores every generated row. The same ranked pool is then converted into method-specific alignment data: TabGRAA forms high- and low-reward groups, DPO forms top-vs-bottom preferred--rejected pairs, KTO treats high-reward rows as desirable and low-reward rows as undesirable, and NPO uses low-reward rows as the negative set. Thus, all methods share the same generated pools and reward scores, differing only
in the alignment objective. Mathematical definitions are provided in
Appendix~\ref{app:alignment_baselines}, implementation details for the loss
variants are provided in Appendix~\ref{app:loss_variants}, and group construction
details are provided in Appendix~\ref{app:group_formation}.
The default quality-refinement reward is a Random Forest real-vs-synthetic distinguishability scorer retrained at each iteration. For learned rewards, we independently sample a scorer-training pool and an alignment pool, so the reward scorer does not rank the same generated rows used to train it. The hyperparameters and implementation details are listed in Appendix~\ref{app:implementation}.

\paragraph{Evaluation metrics.}
We evaluate synthetic data along fidelity, utility, and empirical attack diagnostics.
Fidelity is measured by column-density similarity, pairwise-correlation similarity,
Wasserstein distance, MMD, JSD, C2ST, $\alpha$-precision, and $\beta$-recall.
Utility is measured by machine-learning efficiency (MLE), i.e., downstream
performance when training on synthetic data and testing on real data.
For empirical attack diagnostics, we report DA and MIA as raw AUC values.
DA is the AUC of a binary classifier distinguishing real from synthetic rows, while
MIA is the AUC of a membership-inference classifier distinguishing training records
from held-out records. For both metrics, values closer to $0.5$ indicate weaker
attacks, with $0.5$ corresponding to random guessing. DA and MIA are empirical
diagnostics and do not imply formal privacy guarantees. Main tables report
mean $\pm$ standard deviation over 10 seeds; detailed metric definitions are
provided in Appendix~\ref{app:evaluation}.

\section{Experimental results}
\label{sec:results}

We organize the experiments around three questions: (i) whether iterative reward-guided post-training improves a supervised tabular LM beyond additional fine-tuning and adapted alignment baselines; (ii) whether reward ranking is necessary and whether grouping size improves generation performance; and (iii) whether the same framework can use different reward signals, and how data leakage in reward scoring affects generated data quality.

\subsection{RQ1: iterative reward-guided post-training improves tabular LMs}
\label{sec:self_improvement_results}

Table~\ref{tbl:ablation_loss_variants} reports the primary self-improvement
results averaged across five mixed-type benchmarks. We compare the GReaT
backbone, extended supervised fine-tuning, reward-ranked top-vs-bottom
adaptations of DPO/KTO/NPO, and TabGRAA loss variants under the same iterative
post-training protocol. All post-training methods start from the same supervised
fine-tuned checkpoint and use the same reward-ranked synthetic pools, so the
comparison isolates the effect of the post-training objective. Full per-dataset
results are provided in Appendix~\ref{app:finetuning-baselines}, and
implementation details are provided in Appendix~\ref{app:implementation}.

\begin{table*}[t!]
\centering
\caption{
\textbf{Iterative Self-Improvement Comparison:} Baseline vs. Alignment Methods and
Ablation Study of Loss Variants, averaged across five datasets. For MLE, Beijing
is excluded because it is a regression dataset and reports RMSE rather than AUC.
Best results are in \textbf{bold}.}
\label{tbl:ablation_loss_variants}
\begin{threeparttable}
\resizebox{\textwidth}{!}{
\begin{tabular}{lccccccc}
\toprule[1pt]
\textbf{Method} & \textbf{CDE$\uparrow$} & \textbf{PCC$\uparrow$} &
\textbf{$\alpha$$\uparrow$} & \textbf{$\beta$$\uparrow$} &
\textbf{C2ST$\uparrow$} & \textbf{DA $\to 0.5$} & \textbf{MLE$\uparrow$} \\
\midrule
\multicolumn{8}{l}{\textit{Baselines}} \\
GReaT
& $86.86{\tiny\pm4.78}$
& $60.76{\tiny\pm24.06}$
& $83.40{\tiny\pm9.28}$
& $47.10{\tiny\pm6.70}$
& $32.75{\tiny\pm19.12}$
& $0.8174{\tiny\pm0.0533}$
& $0.8686{\tiny\pm0.0627}$ \\
GReaT-FT+
& $87.42{\tiny\pm4.88}$
& $60.55{\tiny\pm24.01}$
& $86.02{\tiny\pm8.91}$
& $46.32{\tiny\pm5.88}$
& $33.97{\tiny\pm20.00}$
& $0.8151{\tiny\pm0.0531}$
& $0.8647{\tiny\pm0.0599}$ \\
\midrule

\multicolumn{8}{l}{\textit{DPO variants}} \\
TabDPO (base)
& $94.94{\tiny\pm3.65}$
& $60.34{\tiny\pm24.45}$
& $96.01{\tiny\pm2.03}$
& $50.07{\tiny\pm4.53}$
& $48.20{\tiny\pm34.97}$
& $0.6930{\tiny\pm0.0932}$
& $0.8780{\tiny\pm0.0587}$ \\
+ KL penalty
& $94.69{\tiny\pm3.91}$
& $59.90{\tiny\pm24.41}$
& $96.30{\tiny\pm2.16}$
& $49.59{\tiny\pm4.67}$
& $48.05{\tiny\pm35.11}$
& $0.6936{\tiny\pm0.0948}$
& $0.8778{\tiny\pm0.0578}$ \\
+ Gradient diff.
& $95.00{\tiny\pm3.57}$
& $60.48{\tiny\pm25.29}$
& $96.55{\tiny\pm1.51}$
& $49.85{\tiny\pm4.28}$
& $48.12{\tiny\pm34.78}$
& $0.6912{\tiny\pm0.0891}$
& $\mathbf{0.8795{\tiny\pm0.0568}}$ \\
\midrule

\multicolumn{8}{l}{\textit{NPO variants}} \\
TabNPO (base)
& $88.71{\tiny\pm4.27}$
& $59.38{\tiny\pm23.16}$
& $85.80{\tiny\pm8.54}$
& $49.32{\tiny\pm5.84}$
& $37.84{\tiny\pm23.85}$
& $0.7676{\tiny\pm0.0602}$
& $0.8775{\tiny\pm0.0576}$ \\
+ KL penalty
& $88.78{\tiny\pm4.05}$
& $56.70{\tiny\pm21.22}$
& $85.40{\tiny\pm8.17}$
& $49.17{\tiny\pm5.71}$
& $39.44{\tiny\pm25.59}$
& $0.7651{\tiny\pm0.0616}$
& $0.8754{\tiny\pm0.0590}$ \\
+ Gradient diff.
& $89.06{\tiny\pm3.93}$
& $57.29{\tiny\pm21.61}$
& $85.22{\tiny\pm8.34}$
& $49.52{\tiny\pm5.95}$
& $38.59{\tiny\pm24.31}$
& $0.7619{\tiny\pm0.0612}$
& $0.8751{\tiny\pm0.0577}$ \\
\midrule

\multicolumn{8}{l}{\textit{KTO variants}} \\
TabKTO (base)
& $87.89{\tiny\pm4.31}$
& $58.92{\tiny\pm23.24}$
& $85.14{\tiny\pm8.59}$
& $48.98{\tiny\pm6.20}$
& $35.81{\tiny\pm22.17}$
& $0.7748{\tiny\pm0.0613}$
& $0.8740{\tiny\pm0.0587}$ \\
+ Logsigmoid
& $87.89{\tiny\pm4.35}$
& $58.24{\tiny\pm22.51}$
& $84.93{\tiny\pm8.55}$
& $48.73{\tiny\pm6.23}$
& $36.33{\tiny\pm22.70}$
& $0.7742{\tiny\pm0.0600}$
& $0.8760{\tiny\pm0.0570}$ \\
+ Logs. + Grad. diff
& $87.87{\tiny\pm4.38}$
& $58.47{\tiny\pm22.75}$
& $84.75{\tiny\pm8.66}$
& $48.82{\tiny\pm6.23}$
& $36.10{\tiny\pm22.48}$
& $0.7757{\tiny\pm0.0590}$
& $0.8787{\tiny\pm0.0548}$ \\
\midrule

\multicolumn{8}{l}{\textit{GRAA variants (Ours)}} \\
TabGRAA (base)
& $\mathbf{95.47{\tiny\pm2.99}}$
& $60.70{\tiny\pm24.24}$
& $96.73{\tiny\pm2.25}$
& $\mathbf{50.86{\tiny\pm4.67}}$
& $\mathbf{48.34{\tiny\pm34.62}}$
& $\mathbf{0.6814{\tiny\pm0.0959}}$
& $0.8783{\tiny\pm0.0598}$ \\
+ Logsigmoid
& $94.86{\tiny\pm4.16}$
& $\mathbf{61.70{\tiny\pm25.46}}$
& $\mathbf{97.70{\tiny\pm1.67}}$
& $49.72{\tiny\pm4.83}$
& $47.48{\tiny\pm34.88}$
& $0.6900{\tiny\pm0.0957}$
& $0.8792{\tiny\pm0.0590}$ \\
+ Logs. + Grad. diff
& $95.04{\tiny\pm3.95}$
& $59.55{\tiny\pm23.32}$
& $96.84{\tiny\pm1.76}$
& $50.53{\tiny\pm4.54}$
& $47.79{\tiny\pm34.57}$
& $0.6871{\tiny\pm0.0895}$
& $0.8790{\tiny\pm0.0561}$ \\

\bottomrule[1pt]
\end{tabular}}
\end{threeparttable}
\end{table*}

\begin{figure*}[h!]
    \centering
    \includegraphics[width=\linewidth]{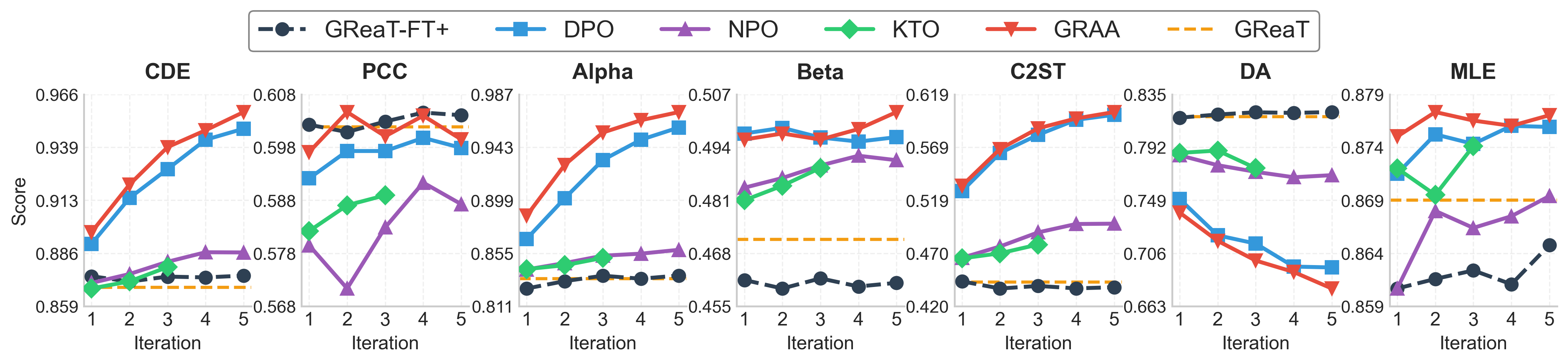}
    \caption{\textbf{Iterative post-training trajectory across 5 rounds.}
    Metrics are averaged across five benchmark datasets. For MLE, Beijing is
    excluded because it is a regression task with a different scale. TabGRAA
    remains stable, while KTO becomes unstable under iterative post-training.}
    \label{fig:iterative}
\end{figure*}

TabGRAA improves the GReaT backbone across the main fidelity, support, and
downstream-utility metrics, while also moving DA AUC closer to random-guessing
levels. Compared with GReaT-FT+, these gains show that reward-guided
post-training provides a distinct refinement signal beyond simply adding more
supervised training steps.
Among adapted alignment methods, TabGRAA achieves the strongest overall
trade-off, obtaining the best average CDE, $\beta$-recall, C2ST, and DA AUC,
while remaining competitive on MLE. Although DPO+GD gives the highest average
MLE, its margin over TabGRAA is small. This supports the benefit of TabGRAA's
group-level objective over arbitrary one-to-one preference pairing. TabNPO and
TabKTO provide more limited gains, suggesting that negative-only or asymmetric
instance-level updates are less effective for iterative tabular LM post-training.
For TabGRAA variants, the base sigmoid objective performs best on CDE,
$\beta$-recall, C2ST, and DA AUC, while the log-sigmoid variant gives the best
PCC, $\alpha$-precision, and slightly higher MLE. We use the base TabGRAA
objective as the default configuration because it gives the strongest overall
trade-off and is the most stable across the headline metrics.

Figure~\ref{fig:iterative} shows the iterative trajectory over five
post-training rounds. TabGRAA remains stable across iterations, whereas TabKTO
becomes unstable in later rounds, consistent with the concern that asymmetric
updates can damage the generator during iterative refinement. These results
support the claim that group-relative post-training is a suitable refinement
mechanism for tabular LMs. Appendix~\ref{app:further_results} reports full
per-dataset results and per-round trajectories. Appendix~\ref{app:sota} further
compares with other tabular synthesizers, showing that
post-trained tabular LMs can become competitive with strong non-LM generators on
several fidelity and utility metrics.

\subsection{RQ2: reward quality and group size shape post-training gains}
\label{sec:component_ablation}

\begin{table}[ht!]
\centering
\caption{\textbf{Reward guidance is necessary.} Adult dataset, 5 iterations, $B=4$. Random scoring and random group assignment stay close to the baseline, while TabGRAA substantially improves distributional metrics. Lower Wasserstein/MMD/JSD is better; MIA closer to $0.5$ is better.}
\label{tbl:ablation-signal-necessity}
\small
\setlength{\tabcolsep}{4pt}
\begin{tabular}{lcccc}
\toprule
\textbf{Method} & \textbf{Wasserstein$\downarrow$} & \textbf{MMD$\downarrow$} & \textbf{JSD$\downarrow$} & \textbf{MIA $\to0.5$} \\
\midrule
Baseline (GReaT)        
& $0.0585{\tiny\pm0.0002}$ 
& $0.0068{\tiny\pm0.0001}$ 
& $0.0118{\tiny\pm0.0005}$ 
& $0.5096{\tiny\pm0.0004}$ \\
Random scoring          
& $0.0589{\tiny\pm0.0047}$ 
& $0.0060{\tiny\pm0.0004}$ 
& $0.0098{\tiny\pm0.0002}$ 
& $0.5090{\tiny\pm0.0007}$ \\
Random group assignment 
& $0.0564{\tiny\pm0.0003}$ 
& $0.0058{\tiny\pm0.0006}$ 
& $0.0096{\tiny\pm0.0003}$ 
& $\mathbf{0.5075{\tiny\pm0.0007}}$ \\
\midrule
\textbf{TabGRAA}        
& $\mathbf{0.0278{\tiny\pm0.0004}}$ 
& $\mathbf{0.0012{\tiny\pm0.0002}}$ 
& $\mathbf{0.0034{\tiny\pm0.0001}}$ 
& $0.5091{\tiny\pm0.0013}$ \\
\bottomrule
\end{tabular}
\end{table}

\begin{figure*}[h!] \centering \includegraphics[width=\linewidth]{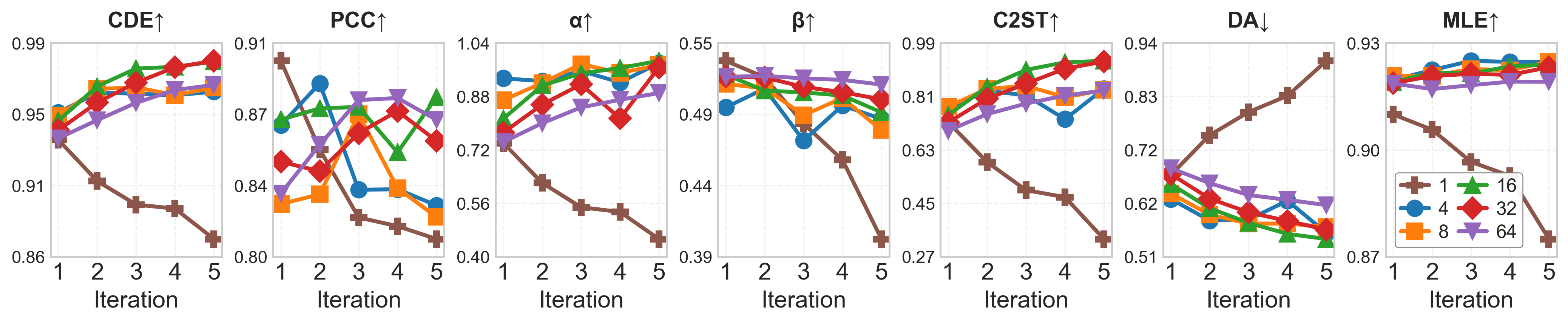} \caption{\textbf{Effect of GRAA group size.} Iterative performance of TabGRAA using different group sizes $B\in\{1,4,8,16,32,64\}$ on Adult. Group size controls the trade-off between sensitivity to reward differences and stability of the group-level update.} \label{fig:batch} \end{figure*}

We next test whether TabGRAA's gains come from meaningful reward-guided alignment or from arbitrary additional training on generated samples. Table~\ref{tbl:ablation-signal-necessity} compares full TabGRAA with two controls: \emph{random scoring}, where generated samples receive uninformative scores, and \emph{random group assignment}, where valid scores are computed but high/low group membership is randomized. Both controls remain close to the GReaT baseline, whereas full TabGRAA substantially improves the distributional metrics. This shows that both the reward signal and the reward-induced high/low structure are load-bearing components, rather than artifacts of additional training alone.
Figure~\ref{fig:batch} evaluates the effect of group size $B$ on Adult. Group size controls a stability--sensitivity trade-off: small groups produce noisy high/low reward contrasts, while overly large groups can smooth away useful reward differences. The singleton case $B=1$ removes group averaging and leads to performance degradation across iterations, suggesting that per-sample contrastive updates are less reliable for our iterative tabular post-training.Moderate group sizes provide a more stable signal, with the best trade-off observed around $B\in\{16,32\}$. Additional group-construction and hyperparameter analyses, including alignment strength $\beta$, are reported in Appendices~\ref{app:ablation_group} and~\ref{app:beta}.

\subsection{RQ3: multiple reward signals and data leakage in reward scoring}
\label{sec:alternative_rewards}

Because the default reward is based on real-vs-synthetic distinguishability, classifier-family evaluation metrics should not be viewed as fully independent evidence. We therefore evaluate two stress tests: replacing the learned classifier reward with a DCR reward, and removing scorer-sample separation to test for reward leakage.
Table~\ref{tbl:reward_comparison} compares separated classifier and DCR rewards, and includes a gray data-leakage row in which the same generated samples are used both to train and score the reward model. Both separated rewards improve several distributional and downstream-utility metrics over GReaT, although the gains are dataset- and metric-dependent. This suggests that TabGRAA is not tied to a single learned detector, while also showing that reward choice affects the resulting fidelity--utility trade-off.
The data-leakage control reveals an important failure mode: it often achieves deceptively low distributional distances, but substantially worsens MIA AUC and can reduce downstream utility. Thus, making samples look closer under some distributional metrics does not necessarily yield better synthetic data. In contrast, scorer-sample separation gives more reliable feedback and better
preserves the fidelity--utility--privacy trade-off. Reward details are provided
in Appendices~\ref{app:reward_taxonomy} and~\ref{app:implementation}. We further include several component variants and robustness analyses. 
Appendix~\ref{app:architecture} evaluates whether the method transfers across autoregressive LM backbones, including DistilGPT-2, GPT-2, and GPT-Neo. Appendix~\ref{app:classifier_variants} studies reward-scorer design, including classifier choice and fixed versus retrained scorer strategies. Appendix~\ref{app:beta} analyzes sensitivity to the alignment strength $\beta$, which controls how strongly the model follows the high/low reward contrast during post-training.
Appendix~\ref{app:compute} reports computational cost, including per-iteration overhead relative to full supervised fine-tuning.


\begin{table*}[t]
\centering
\caption{\textbf{Effect of reward choice.} We compare separated classifier and classifier-free DCR rewards under the same TabGRAA setting. The gray data leakage rows omit scorer-sample separation, excluded from bolding. For MLE, Beijing reports RMSE. Best results are in \textbf{bold}.}
\label{tbl:reward_comparison}
\resizebox{\textwidth}{!}{
\begin{tabular}{llccccc}
\toprule
\textbf{Dataset} & \textbf{Reward} & \textbf{Wasserstein$\downarrow$} & \textbf{MMD$\downarrow$} & \textbf{JSD$\downarrow$} & \textbf{MIA AUC$\to$0.5} & \textbf{MLE} \\
\midrule
\multirow{4}{*}{Adult}
& Baseline (GReaT) & $0.0585{\tiny\pm0.0009}$ & $0.0068{\tiny\pm0.0005}$ & $0.0118{\tiny\pm0.0007}$ & $0.5096{\tiny\pm0.0004}$ & $0.912{\tiny\pm0.001}$ \\
& DCR ($\rho_{\mathrm{dcr}}$) & $\mathbf{0.0379{\tiny\pm0.0052}}$ & $\mathbf{0.0024{\tiny\pm0.0009}}$ & $0.0054{\tiny\pm0.0014}$ & $\mathbf{0.5087{\tiny\pm0.0009}}$ & $0.918{\tiny\pm0.002}$ \\
& Classifier ($\rho_{\mathrm{cls}}$) & $0.0384{\tiny\pm0.0077}$ & $0.0025{\tiny\pm0.0012}$ & $\mathbf{0.0053{\tiny\pm0.0016}}$ & $0.5094{\tiny\pm0.0010}$ & $\mathbf{0.920{\tiny\pm0.001}}$ \\
\rowcolor{gray!20}
& Data Leakage ($\rho_{\mathrm{leak}}$) & $0.0183{\tiny\pm0.0007}$ & $0.0131{\tiny\pm0.0013}$ & $0.0089{\tiny\pm0.0007}$ & $0.6609{\tiny\pm0.0142}$ & $0.904{\tiny\pm0.005}$ \\
\midrule

\multirow{4}{*}{Shoppers}
& Baseline (GReaT) & $0.2543{\tiny\pm0.0003}$ & $0.1410{\tiny\pm0.0005}$ & $0.1760{\tiny\pm0.0006}$ & $\mathbf{0.4887{\tiny\pm0.0007}}$ & $0.899{\tiny\pm0.002}$ \\
& DCR ($\rho_{\mathrm{dcr}}$) & $\mathbf{0.2483{\tiny\pm0.0054}}$ & $\mathbf{0.1408{\tiny\pm0.0065}}$ & $\mathbf{0.1700{\tiny\pm0.0023}}$ & $0.4853{\tiny\pm0.0014}$ & $\mathbf{0.905{\tiny\pm0.002}}$ \\
& Classifier ($\rho_{\mathrm{cls}}$) & $0.2538{\tiny\pm0.0039}$ & $0.1418{\tiny\pm0.0079}$ & $0.1719{\tiny\pm0.0016}$ & $0.4849{\tiny\pm0.0022}$ & $0.904{\tiny\pm0.003}$ \\
\rowcolor{gray!20}
& Data Leakage ($\rho_{\mathrm{leak}}$) & $0.0355{\tiny\pm0.0008}$ & $0.0174{\tiny\pm0.0009}$ & $0.0201{\tiny\pm0.0005}$ & $0.7836{\tiny\pm0.0061}$ & $0.896{\tiny\pm0.004}$ \\
\midrule

\multirow{4}{*}{Beijing\textsuperscript{$\dagger$}}
& Baseline (GReaT) & $0.3582{\tiny\pm0.0004}$ & $0.1011{\tiny\pm0.0003}$ & $0.2371{\tiny\pm0.0002}$ & $0.5164{\tiny\pm0.0006}$ & $0.618{\tiny\pm0.011}$ \\
& DCR ($\rho_{\mathrm{dcr}}$) & $0.3583{\tiny\pm0.0006}$ & $0.1006{\tiny\pm0.0006}$ & $0.2371{\tiny\pm0.0003}$ & $\mathbf{0.5105{\tiny\pm0.0027}}$ & $0.648{\tiny\pm0.013}$ \\
& Classifier ($\rho_{\mathrm{cls}}$) & $\mathbf{0.3501{\tiny\pm0.0039}}$ & $\mathbf{0.0992{\tiny\pm0.0006}}$ & $\mathbf{0.2344{\tiny\pm0.0011}}$ & $0.5126{\tiny\pm0.0023}$ & $\mathbf{0.592{\tiny\pm0.005}}$ \\
\rowcolor{gray!20}
& Data Leakage ($\rho_{\mathrm{leak}}$) & $0.0246{\tiny\pm0.0001}$ & $0.0023{\tiny\pm0.0003}$ & $0.0065{\tiny\pm0.0001}$ & $0.6770{\tiny\pm0.0069}$ & $0.706{\tiny\pm0.010}$ \\
\midrule

\multirow{4}{*}{Default}
& Baseline (GReaT) & $0.4213{\tiny\pm0.0008}$ & $0.3590{\tiny\pm0.0005}$ & $0.2964{\tiny\pm0.0006}$ & $0.5085{\tiny\pm0.0002}$ & $\mathbf{0.780{\tiny\pm0.003}}$ \\
& DCR ($\rho_{\mathrm{dcr}}$) & $0.4218{\tiny\pm0.0006}$ & $\mathbf{0.3589{\tiny\pm0.0004}}$ & $0.2927{\tiny\pm0.0008}$ & $0.5061{\tiny\pm0.0013}$ & $0.776{\tiny\pm0.005}$ \\
& Classifier ($\rho_{\mathrm{cls}}$) & $\mathbf{0.4211{\tiny\pm0.0004}}$ & $0.3590{\tiny\pm0.0004}$ & $\mathbf{0.2926{\tiny\pm0.0008}}$ & $\mathbf{0.5060{\tiny\pm0.0011}}$ & $0.777{\tiny\pm0.004}$ \\
\rowcolor{gray!20}
& Data Leakage ($\rho_{\mathrm{leak}}$) & $0.0181{\tiny\pm0.0004}$ & $0.0025{\tiny\pm0.0003}$ & $0.0109{\tiny\pm0.0008}$ & $0.8084{\tiny\pm0.0048}$ & $0.771{\tiny\pm0.008}$ \\
\midrule

\multirow{4}{*}{Magic}
& Baseline (GReaT) & $0.0495{\tiny\pm0.0002}$ & $0.0053{\tiny\pm0.0004}$ & $0.0329{\tiny\pm0.0005}$ & $0.4987{\tiny\pm0.0003}$ & $0.905{\tiny\pm0.002}$ \\
& DCR ($\rho_{\mathrm{dcr}}$) & $\mathbf{0.0472{\tiny\pm0.0155}}$ & $0.0075{\tiny\pm0.0028}$ & $0.0293{\tiny\pm0.0061}$ & $0.4973{\tiny\pm0.0019}$ & $\mathbf{0.907{\tiny\pm0.005}}$ \\
& Classifier ($\rho_{\mathrm{cls}}$) & $0.0484{\tiny\pm0.1097}$ & $\mathbf{0.0044{\tiny\pm0.0006}}$ & $\mathbf{0.0180{\tiny\pm0.0045}}$ & $\mathbf{0.4989{\tiny\pm0.0025}}$ & $0.904{\tiny\pm0.003}$ \\
\rowcolor{gray!20}
& Data Leakage ($\rho_{\mathrm{leak}}$) & $0.0305{\tiny\pm0.0005}$ & $0.0195{\tiny\pm0.0018}$ & $0.0226{\tiny\pm0.0019}$ & $0.7496{\tiny\pm0.0093}$ & $0.901{\tiny\pm0.006}$ \\
\bottomrule
\end{tabular}}
\end{table*}

\section{Discussion and limitations}
\label{sec:discussion}

Motivated by the need to refine or adapt tabular LMs after supervised fine-tuning on historical rows, this study proposes an iterative post-training framework for tabular language models. The framework is designed to let tabular LMs interact with signals from real-world tabular data after training. Several limitations remain. The reward must reflect the desired tabular quality; random or poorly specified rewards do not reliably improve the model. The fixed reference model stabilizes training but may limit exploration, and group averaging can smooth over rare subgroup patterns.Finally, the framework requires additional evaluation across broader tasks such as drift adaptation.

\section{Conclusion and future work}

We studied iterative reward-guided post-training for tabular language models and proposed TabGRAA, a group-relative policy/reference alignment method within a generate--score--align protocol. Across five mixed-type benchmarks, TabGRAA improves a supervised GReaT tabular LM beyond additional supervised fine-tuning and achieves the strongest average trade-off among the adapted alignment baselines evaluated. Ablations show that meaningful reward ranking is necessary, group averaging improves update stability, and classifier-based or classifier-free rewards can both guide post-training. Scorer-separation diagnostics further show that improper reward construction can create misleading fidelity gains while worsening membership-inference diagnostics.
Future work includes scaling reward-guided post-training to stronger tabular foundation models, developing adaptive rewards for evolving tables and targeted tabular unlearning.


\bibliographystyle{plainnat}
\bibliography{main}

\newpage
\onecolumn
\appendix

\section{Additional background}
\label{app:background}

\subsection{Tabular language model setup}
\label{app:problem_setup}

Let $\pi_\theta$ denote an autoregressive language model used to generate tabular rows. Following the GReaT framework~\citep{borisov2022language}, each tabular row
$x=(v_1,\ldots,v_M)$ is serialized into a token sequence
$y=(\tau_1,\ldots,\tau_T)=\mathrm{serialize}(x)$, and the model defines
\begin{equation}
\pi_\theta(y)
=
\prod_{t=1}^{T}
\pi_\theta(\tau_t\mid \tau_{<t}).
\end{equation}
Given a real dataset $\mathcal D_{\mathrm{real}}$, supervised fine-tuning minimizes
\begin{equation}
\mathcal L_{\mathrm{SFT}}(\theta)
=
-\mathbb E_{x\sim \mathcal D_{\mathrm{real}}}
\left[
\log \pi_\theta(\mathrm{serialize}(x))
\right].
\end{equation}
The resulting model $\pi_{\theta_0}$ is the initial tabular generator. We use $\pi_{\theta_0}$ as the fixed reference model $\pi_{\mathrm{ref}}$ during post-training.

\subsection{Alignment baselines for tabular language models}
\label{app:alignment_baselines}

Post-training with reinforcement learning or preference alignment is a natural direction for autoregressive tabular language models because they expose sequence log-probabilities. However, tabular quality is primarily statistical and is usually evaluated over groups of rows through distributional fidelity, feature dependence, downstream utility, and empirical attack indistinguishability. This makes human preference collection and instance-level desirability labels difficult to define.

We compare against general alignment objectives adapted to tabular generation using the same reward-ranked synthetic pools as TabGRAA.

\paragraph{Reward-ranked sample selection.}
At each post-training iteration, all adapted alignment baselines use the same generated pool and reward scores as TabGRAA. We sort generated rows by reward and define high- and low-reward strata. By default, we use a top-vs-bottom construction: samples with the highest rewards form $X^+$ and samples with the lowest rewards form $X^-$. The ranking can also support softer middle-strata variants, such as adjacent or offset comparisons, which we study in Appendix~\ref{app:ablation_group}; however, the main experiments use the top-vs-bottom setting because it provides the clearest reward contrast. For DPO, we construct preferred--rejected pairs $(y^+,y^-)$ by matching samples from $X^+$ and $X^-$. For KTO, samples from $X^+$ are labeled desirable and samples from $X^-$ are labeled undesirable. For NPO, samples from $X^-$ define the negative set. This keeps the reward signal and generated samples fixed across methods, isolating the effect of the alignment objective.

\paragraph{Direct Preference Optimization (DPO).}
DPO~\citep{rafailov2024directpreferenceoptimizationlanguage} optimizes pairwise preferences $(y^+ \succ y^-)$:
\begin{equation}
\mathcal L_{\mathrm{DPO}}(\theta)
=
-\mathbb E_{(y^+,y^-)}
\left[
\log \sigma
\left(
\beta
\log
\frac{\pi_\theta(y^+)}{\pi_{\mathrm{ref}}(y^+)}
-
\beta
\log
\frac{\pi_\theta(y^-)}{\pi_{\mathrm{ref}}(y^-)}
\right)
\right].
\end{equation}
For tabular adaptation, $y^+$ is sampled from $X^+$ and $y^-$ from $X^-$.

\paragraph{KTO.}
KTO~\citep{ethayarajh2024ktomodelalignmentprospect} uses desirable and undesirable labels rather than explicit preference pairs. In our tabular adaptation, samples from $X^+$ are treated as desirable and samples from $X^-$ are treated as undesirable.

\paragraph{NPO.}
NPO~\citep{zhang2024negative} suppresses dispreferred samples with a negative-only objective. In our tabular adaptation, samples from $X^-$ define the negative set. This makes NPO especially relevant for suppression-style objectives such as forget-region suppression.

\paragraph{GRPO.}
GRPO~\citep{shao2024deepseekmath} also uses groups of samples, but it is a policy-gradient reinforcement learning method that normalizes rewards among multiple responses to the same prompt. GRAA instead forms reward-stratified high/low groups across generated tabular rows and optimizes relative policy-reference log-ratios between these groups.

\subsection{Loss variants and the SFT anchor batch}
\label{app:loss_variants}

Table~\ref{tbl:ablation_loss_variants} and the per-dataset tables include several
loss variants used as ablations on top of the corresponding base objective.
These variants are not separate post-training methods: they use the same
generated alignment pools, reward scores, reference model
$\pi_{\mathrm{ref}}=\pi_{\theta_0}$, and reward-ranked high/low strata as the
base objective. For the KL-regularized and gradient-difference variants, the
trainer additionally constructs a non-empty SFT anchor batch
$\mathcal B_{\mathrm{anc}}$ from the original real training table used for
supervised fine-tuning. In our implementation, this anchor batch is formed from
the first 100 rows of \texttt{synthetic/<dataset>/real.csv}. The rows are loaded
in a chosen--rejected format with identical chosen and rejected entries; only the
chosen branch is used by the anchor regularizers.

\paragraph{KL-regularized variants.}
For DPO and NPO, the KL variant adds a token-level reference penalty on the
anchor batch:
\begin{equation}
\mathcal L_{\mathrm{method+KL}}(\theta)
=
\mathcal L_{\mathrm{method}}(\theta)
+
\lambda_{\mathrm{KL}}
\mathbb E_{y\in\mathcal B_{\mathrm{anc}}}
\left[
\frac{1}{|y|}
\sum_{t=1}^{|y|}
\mathrm{KL}
\left(
\pi_\theta(\cdot\mid y_{<t})
\,\|\, 
\pi_{\mathrm{ref}}(\cdot\mid y_{<t})
\right)
\right].
\end{equation}
This term discourages the post-trained policy from drifting away from the
supervised reference distribution on real anchor rows. In the implementation,
DPO+KL and NPO+KL use the trainer's default KL coefficient.

\paragraph{Gradient-difference variants.}
The gradient-difference variant adds a cross-entropy anchoring term on the same
anchor batch:
\begin{equation}
\mathcal L_{\mathrm{method+GD}}(\theta)
=
\mathcal L_{\mathrm{method}}(\theta)
+
\lambda_{\mathrm{GD}}
\mathbb E_{y\in\mathcal B_{\mathrm{anc}}}
\left[
-\log \pi_\theta(y)
\right].
\end{equation}
This term provides supplementary supervised anchoring on real rows while the base
alignment objective promotes high-reward generated rows and suppresses
low-reward or rejected generated rows. In the implementation, DPO+GD and the
TabGRAA log-sigmoid+GD variant use coefficient $1.0$, while the NPO+GD variant
uses the configured \texttt{grad\_diff\_coeff}.

\paragraph{Log-sigmoid GRAA variants.}
The main TabGRAA objective uses the bounded sigmoid surrogate
$\sigma(-\Delta_{\mathrm{group}})$. The log-sigmoid variant replaces this
surrogate with the group-level Bradley--Terry negative log-likelihood:
\begin{equation}
\mathcal L_{\mathrm{GRAA\mbox{-}logsig}}(\theta)
=
-\log \sigma(\Delta_{\mathrm{group}}(\theta)),
\qquad
\Delta_{\mathrm{group}}(\theta)
=
\bar r_\theta^{\mathrm{high}}
-
\bar r_\theta^{\mathrm{low}} .
\end{equation}
The combined log-sigmoid+GD variant further adds the anchor cross-entropy term:
\begin{equation}
\mathcal L_{\mathrm{GRAA\mbox{-}logsig+GD}}(\theta)
=
\mathcal L_{\mathrm{GRAA\mbox{-}logsig}}(\theta)
+
\lambda_{\mathrm{GD}}
\mathbb E_{y\in\mathcal B_{\mathrm{anc}}}
\left[
-\log \pi_\theta(y)
\right].
\end{equation}
The default TabGRAA objective does not include either the KL or anchor
cross-entropy regularizer.

\paragraph{Interpretation.}
Because $\mathcal B_{\mathrm{anc}}$ is drawn from the original SFT training
distribution, the KL variants act as anti-drift reference anchors and the
gradient-difference variants act as supplementary supervised anchors on real
rows. These variants are included only as regularization ablations. The default
TabGRAA objective does not use this additional anchor batch, yet it achieves the
strongest average trade-off in Table~\ref{tbl:ablation_loss_variants}. The small
deviations between each regularized variant and its corresponding base objective
suggest that the anchor terms mainly constrain drift rather than changing the
reward-ranked high/low alignment signal.

\subsection{Reward signals used in TabGRAA}
\label{app:reward_taxonomy}

TabGRAA uses a scalar reward signal only to score and rank generated samples. Formally, a reward is any finite scalar function
\begin{equation}
\rho:\mathcal X\rightarrow \mathbb R.
\end{equation}
For quality-refinement experiments, rewards are often normalized to $[0,1]$. For constraint-based objectives such as forget-region suppression, rewards may be signed. The reward scorer is not differentiated through; it only determines group membership.

\paragraph{Distributional classifier reward.}
For generation-quality refinement, the default reward is a real-vs-synthetic
distinguishability score. At iteration $t$, a binary classifier
$\phi_t:\mathcal X\rightarrow[0,1]$ is trained to distinguish real rows from
generated rows. The classifier output is converted into a sample-level
indistinguishability reward:
\begin{equation}
\rho_{\mathrm{cls}}(\tilde x)
=
1-2|0.5-\phi_t(\tilde x)|.
\end{equation}
This reward is maximized when $\phi_t(\tilde x)=0.5$, i.e., when the classifier
is maximally uncertain about whether $\tilde x$ is real or synthetic. Thus,
higher values indicate that $\tilde x$ is harder to distinguish from real data.

The classifier reward and DA are related but not identical. The reward
$\rho_{\mathrm{cls}}$ is a sample-level score used to rank generated rows during
post-training, whereas DA is a dataset-level held-out attack AUC used only for
evaluation. We therefore interpret DA as a reward-adjacent diagnostic rather than
a fully independent metric. To reduce leakage, the reward scorer is trained on a
separate scorer pool, alignment uses an independently sampled pool, and final DA
is computed with a held-out evaluation protocol.

\paragraph{Distance-based reward.}
A classifier-free alternative is Distance-to-Closest-Record (DCR):
\begin{equation}
d(\tilde x)
=
\min_{x\in\mathcal D_{\mathrm{real}}}
\|\tilde x-x\|_2,
\qquad
\rho_{\mathrm{dcr}}(\tilde x)
=
1-
\frac{d(\tilde x)-d_{\min}}{d_{\max}-d_{\min}}.
\end{equation}
This reward measures proximity to the support of the reference data.

\paragraph{Target-distribution reward.}
For distribution-shift adaptation, the reference set in the distributional reward can be replaced by a target-domain alignment set. For example, the classifier or DCR reward can be computed against a shifted target distribution rather than the original training distribution.

\paragraph{Constraint-based reward.}
For empirical forget-region suppression, the reward need not compare generated samples to a density. Given a specified forget region $F\subseteq\mathcal X$, we use a signed reward
\begin{equation}
\rho_{\mathrm{forget}}(\tilde x)
=
-\mathbf 1[\tilde x\in F]
+
\mathbf 1[\tilde x\notin F].
\end{equation}
This penalizes generated samples inside the forget region and rewards samples in the retain region. We use this as empirical forget-region suppression, not as a certified record-level machine-unlearning guarantee.

\subsection{Reward-ranked group formation}
\label{app:group_formation}

At each post-training iteration, a synthetic pool
\[
\tilde X
=
\{\tilde x_i\}_{i=1}^{N}
\]
is generated and each sample is scored by the current reward $\rho(\tilde x_i)$. The pool is sorted by reward, and two strata are formed:
\begin{equation}
X^+
=
\{\tilde x:\rho(\tilde x)\geq \tau_p\},
\qquad
X^-
=
\{\tilde x:\rho(\tilde x)\leq \tau_q\},
\end{equation}
where $\tau_p$ and $\tau_q$ are percentile thresholds. In the default setting, $X^+$ and $X^-$ are the top and bottom halves of the scored pool.

A GRAA mini-batch samples two groups:
\begin{equation}
\mathcal B_{\mathrm{high}}
\sim
\mathrm{Uniform}(X^+),
\qquad
\mathcal B_{\mathrm{low}}
\sim
\mathrm{Uniform}(X^-),
\end{equation}
with
\[
|\mathcal B_{\mathrm{high}}|
=
|\mathcal B_{\mathrm{low}}|
=
B.
\]
This construction is group-based, not pair-based. For adapted pairwise baselines such as DPO, the same ranked pool can be converted into preferred--rejected pairs, but such pairing is not part of the GRAA objective.


\section{Group-level Bradley--Terry derivation of GRAA}
\label{app:graa_bt_derivation}

This appendix derives GRAA as a group-level analogue of Bradley--Terry preference modeling. The purpose is to formalize why TabGRAA compares reward-stratified groups rather than individual preferred--rejected pairs.

\subsection{From pairwise preferences to group preferences}

In standard pairwise preference optimization, a preferred sample $y^+$ and a rejected sample $y^-$ are compared through a Bradley--Terry model:
\begin{equation}
P_\theta(y^+ \succ y^-)
=
\sigma
\left(
r_\theta(y^+) - r_\theta(y^-)
\right),
\end{equation}
where $r_\theta(y)$ is an implicit reward. In preference optimization with a fixed reference model, this implicit reward is commonly written as a policy-reference log-ratio:
\begin{equation}
r_\theta(y)
=
\beta
\log
\frac{\pi_\theta(y)}{\pi_{\mathrm{ref}}(y)}.
\end{equation}
Here $\beta>0$ controls the strength of the alignment update.

Tabular generation differs from text preference modeling because the quality of a synthetic table is usually distributional. Individual rows may look plausible even when a group of generated rows fails to preserve marginals, correlations, utility-relevant structure, or empirical attack indistinguishability. GRAA therefore replaces instance-level comparison with group-level comparison.

\subsection{Reward-stratified groups}

At each post-training iteration, generated samples are scored by a task-specified reward signal $\rho:\mathcal X\rightarrow\mathbb R$. The reward is used only to rank samples and form two reward-stratified groups:
\[
\mathcal{B}_{\mathrm{high}}
\subset
\mathcal{G}_{\mathrm{high}},
\qquad
\mathcal{B}_{\mathrm{low}}
\subset
\mathcal{G}_{\mathrm{low}},
\qquad
|\mathcal{B}_{\mathrm{high}}|
=
|\mathcal{B}_{\mathrm{low}}|
=
B.
\]
The reward scorer is not differentiated through; it only determines group membership. This allows the reward to be a classifier score, a distance-based score, a target-distribution score, or a constraint-based signal.

\subsection{Group-Level implicit reward}

Given the sample-level implicit reward
\[
r_\theta(y)
=
\beta
\log
\frac{\pi_\theta(y)}{\pi_{\mathrm{ref}}(y)},
\]
we define the group-level implicit reward as the average over each group:
\begin{equation}
\bar r_\theta^{\mathrm{high}}
=
\frac{1}{B}
\sum_{y\in\mathcal{B}_{\mathrm{high}}}
r_\theta(y),
\qquad
\bar r_\theta^{\mathrm{low}}
=
\frac{1}{B}
\sum_{y\in\mathcal{B}_{\mathrm{low}}}
r_\theta(y).
\end{equation}
The corresponding group-relative advantage is
\begin{equation}
\Delta_{\mathrm{group}}(\theta)
=
\bar r_\theta^{\mathrm{high}}
-
\bar r_\theta^{\mathrm{low}}.
\end{equation}

\subsection{Group-level Bradley--Terry model}

We model the probability that the high-reward group is preferred to the low-reward group using a Bradley--Terry form:
\begin{equation}
P_\theta
\left(
\mathcal{B}_{\mathrm{high}}
\succ
\mathcal{B}_{\mathrm{low}}
\right)
=
\sigma
\left(
\Delta_{\mathrm{group}}(\theta)
\right)
=
\sigma
\left(
\bar r_\theta^{\mathrm{high}}
-
\bar r_\theta^{\mathrm{low}}
\right).
\end{equation}
Maximizing this probability encourages the policy to assign higher relative likelihood to high-reward groups than to low-reward groups, both measured against the fixed reference model.

The standard group-level Bradley--Terry negative log-likelihood form is
\begin{equation}
\mathcal L_{\mathrm{GRAA}}^{\mathrm{BT}}(\theta)
=
-\log \sigma
\left(
\bar r^{\mathrm{high}}_\theta-\bar r^{\mathrm{low}}_\theta
\right).
\end{equation}
In the main experiments, we instead use the bounded sigmoid surrogate
\begin{equation}
\mathcal{L}_{\mathrm{GRAA}}(\theta)
=
\sigma
\left(
-\Delta_{\mathrm{group}}(\theta)
\right)
=
\sigma
\left(
\bar r_\theta^{\mathrm{low}}
-
\bar r_\theta^{\mathrm{high}}
\right),
\end{equation}
where
\begin{equation}
\Delta_{\mathrm{group}}(\theta)
=
\bar r_\theta^{\mathrm{high}}
-
\bar r_\theta^{\mathrm{low}}.
\end{equation}
The two objectives are not identical: 
$\mathcal L_{\mathrm{GRAA}}(\theta) \neq
\mathcal L_{\mathrm{GRAA}}^{\mathrm{BT}}(\theta)$.
However, both are monotone decreasing in $\Delta_{\mathrm{group}}(\theta)$ and
therefore preserve the same group-level preference direction. Minimizing either
objective increases the relative policy-reference log-ratio of the high-reward
group over the low-reward group. We use the bounded surrogate in the main
experiments because it has a finite numerical range and provides a stable update
during iterative post-training.

\subsection{Gradient structure}

Let
\[
u(\theta)
=
\bar r_\theta^{\mathrm{low}}
-
\bar r_\theta^{\mathrm{high}}.
\]
Then
\[
\mathcal{L}_{\mathrm{GRAA}}(\theta)
=
\sigma(u(\theta)).
\]
By the chain rule,
\begin{equation}
\nabla_\theta
\mathcal{L}_{\mathrm{GRAA}}(\theta)
=
\sigma'(u(\theta))
\left(
\nabla_\theta
\bar r_\theta^{\mathrm{low}}
-
\nabla_\theta
\bar r_\theta^{\mathrm{high}}
\right).
\end{equation}
Thus, both groups remain active in the computation graph. The gradient increases the relative likelihood of high-reward groups and decreases the relative likelihood of low-reward groups. This differs from negative-only objectives, which primarily suppress undesirable samples, and from pairwise methods, which require explicit one-to-one preferred--rejected pair construction.

\subsection{Singleton limit}
\label{app:singleton_limit}

When $B=1$, each high- and low-reward group contains only one sample. Let
$y^+ \in \mathcal B_{\mathrm{high}}$ and
$y^- \in \mathcal B_{\mathrm{low}}$. Then
\[
\bar r_\theta^{\mathrm{high}}
=
r_\theta(y^+),
\qquad
\bar r_\theta^{\mathrm{low}}
=
r_\theta(y^-),
\]
and therefore
\[
\Delta_{\mathrm{group}}(\theta)
=
r_\theta(y^+)
-
r_\theta(y^-).
\]
The GRAA loss reduces to
\[
\mathcal L_{\mathrm{GRAA}}(\theta)
=
\sigma
\left(
-\Delta_{\mathrm{group}}(\theta)
\right)
=
\sigma
\left(
r_\theta(y^-)
-
r_\theta(y^+)
\right).
\]
Minimizing this loss encourages
\[
r_\theta(y^+) > r_\theta(y^-),
\]
so the update still has the intended high-vs-low contrastive direction.
However, the group-level mechanism of GRAA disappears at $B=1$. In this
singleton limit, each optimizer step is driven by one high-reward sample and one
low-reward sample rather than by an average over reward-stratified groups. Thus,
the $B=1$ ablation should be interpreted as removing the group-mean component of
GRAA, not as the intended group-level regime.

For $B>1$, GRAA restores the intended group-level signal:
\[
\Delta_{\mathrm{group}}(\theta)
=
\frac{1}{B}
\sum_{i=1}^{B}
r_\theta(y_i^{\mathrm{high}})
-
\frac{1}{B}
\sum_{i=1}^{B}
r_\theta(y_i^{\mathrm{low}}).
\]
This quantity is an empirical estimate of a population-level high-vs-low reward
contrast. Averaging over reward-stratified groups reduces row-level fluctuations
and produces a more stable distributional alignment signal. The next subsection
explains why this group-mean construction is well matched to tabular generation,
and Appendix~\ref{app:theoretical_proofs} analyzes the resulting variance
reduction.

\begin{figure*}[t]
\centering
\includegraphics[width=\textwidth]{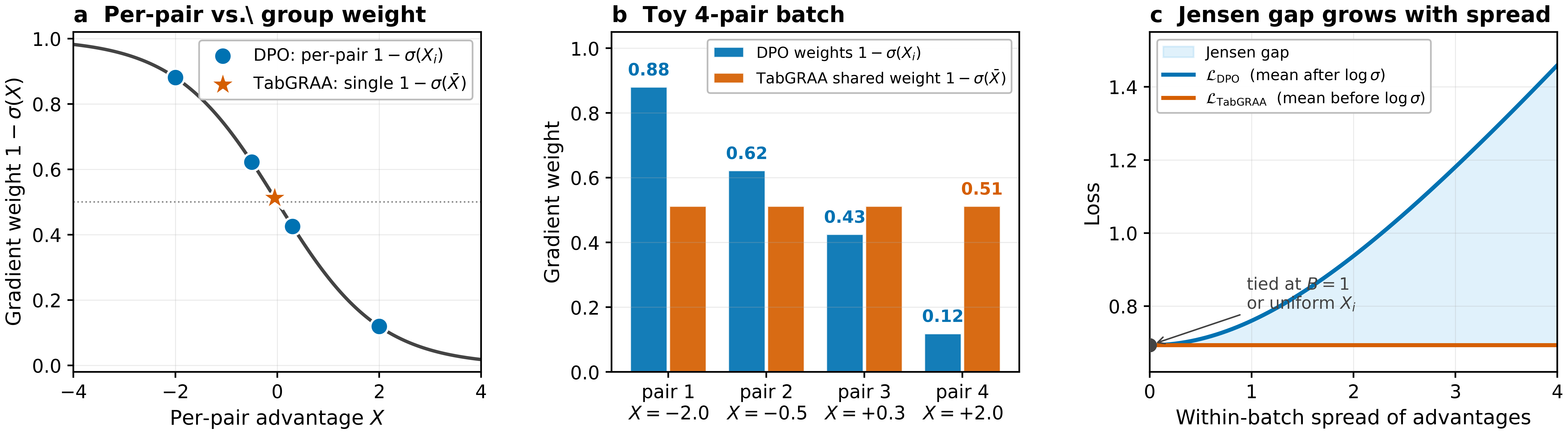}
\caption{\textbf{Aggregation order distinguishes TabGRAA from DPO.}
\textbf{a}, For the Bradley--Terry / DPO log-sigmoid form, the per-pair
gradient weight is $1-\sigma(X)$. DPO assigns one weight per pair, so
low-margin or outlier pairs can dominate the update; group-level GRAA first
averages the batch contrast and then applies a single shared nonlinear weight.
\textbf{b}, On a heterogeneous toy batch, DPO's per-pair weights vary across
examples, while TabGRAA applies one shared scalar weight to the group-averaged
direction.
\textbf{c}, For the group-level Bradley--Terry negative log-likelihood form, Jensen's inequality
compares the aggregation order of pairwise DPO and group-level GRAA, yielding
$\mathcal L_{\mathrm{DPO}}-\mathcal L_{\mathrm{GRAA}}^{\mathrm{BT}}\geq 0$.
The gap grows with within-batch advantage heterogeneity and vanishes when $B=1$
or when all per-pair advantages are identical. 
The bounded sigmoid surrogate used
in the main experiments has a different numerical scale, so the Jensen gap is not
claimed for that surrogate directly; however, it preserves the same
mean-before-nonlinearity aggregation structure and optimization direction.}
\label{fig:aggregation}
\end{figure*}

\subsection{Why group-mean averaging matches tabular alignment}
\label{app:why_groupmean}

The group-mean aggregation in GRAA reflects the population-level nature of
tabular distribution alignment. In tabular synthesis, the objective is not to
decide whether one individual row is preferable to another individual row.
Instead, the goal is to make the synthetic row distribution induced by
$\pi_\theta$ match a target tabular distribution under population-level quality
criteria.

Abstractly, let $P_\theta$ denote the synthetic distribution induced by the
tabular language model and let $P_\star$ denote the target distribution. The
tabular alignment goal can be viewed as optimizing
\begin{equation}
\theta^\star
\in
\arg\max_\theta
\mathcal Q(P_\theta,P_\star),
\end{equation}
where $\mathcal Q$ is a distributional quality functional. In practice,
$\mathcal Q$ is approximated by metrics such as distributional fidelity,
feature-dependence preservation, downstream utility, and privacy-oriented
indistinguishability. These criteria are evaluated over populations of rows,
not isolated row-level comparisons.

This motivates the group-level contrast used by GRAA. For fixed high- and
low-reward strata, define the population-level log-ratio contrast
\begin{equation}
\Delta_{\mathrm{pop}}(\theta)
=
\mathbb E_{y\sim \mathcal G_{\mathrm{high}}}
\left[
\beta
\log
\frac{\pi_\theta(y)}{\pi_{\mathrm{ref}}(y)}
\right]
-
\mathbb E_{y\sim \mathcal G_{\mathrm{low}}}
\left[
\beta
\log
\frac{\pi_\theta(y)}{\pi_{\mathrm{ref}}(y)}
\right].
\end{equation}
A GRAA mini-batch estimates this population contrast by averaging over
reward-stratified groups:
\begin{equation}
\Delta_{\mathrm{group}}(\theta)
=
\bar r_\theta^{\mathrm{high}}
-
\bar r_\theta^{\mathrm{low}},
\end{equation}
where
\begin{equation}
\bar r_\theta^{\mathrm{high}}
=
\frac{\beta}{B}
\sum_{y\in\mathcal B_{\mathrm{high}}}
\log
\frac{\pi_\theta(y)}{\pi_{\mathrm{ref}}(y)},
\qquad
\bar r_\theta^{\mathrm{low}}
=
\frac{\beta}{B}
\sum_{y\in\mathcal B_{\mathrm{low}}}
\log
\frac{\pi_\theta(y)}{\pi_{\mathrm{ref}}(y)}.
\end{equation}
The loss used in TabGRAA is then
\begin{equation}
\mathcal L_{\mathrm{GRAA}}(\theta)
=
\sigma
\left(
-\Delta_{\mathrm{group}}(\theta)
\right)
=
\sigma
\left(
\bar r_\theta^{\mathrm{low}}
-
\bar r_\theta^{\mathrm{high}}
\right).
\end{equation}
Thus, GRAA first forms a group-level estimate of the high-vs-low reward contrast
and then applies the nonlinear alignment loss to this aggregate quantity.

This aggregation order is aligned with the tabular objective. Since tabular
quality is assessed through distribution-level statistics, the update should
reflect the collective behavior of reward-stratified groups rather than the
idiosyncrasies of individual samples. Averaging the policy--reference log-ratios
before applying the loss produces a consensus direction: high-reward groups are
collectively encouraged and low-reward groups are collectively discouraged.

The resulting gradient has the form
\begin{equation}
\nabla_\theta
\mathcal L_{\mathrm{GRAA}}(\theta)
=
-\sigma
\left(
-\Delta_{\mathrm{group}}(\theta)
\right)
\left(
1-
\sigma
\left(
-\Delta_{\mathrm{group}}(\theta)
\right)
\right)
\nabla_\theta
\Delta_{\mathrm{group}}(\theta).
\end{equation}
Since
\begin{equation}
\nabla_\theta
\Delta_{\mathrm{group}}(\theta)
=
\nabla_\theta
\bar r_\theta^{\mathrm{high}}
-
\nabla_\theta
\bar r_\theta^{\mathrm{low}},
\end{equation}
both high- and low-reward groups contribute to the update. The scalar sigmoid
factor depends on the group-level contrast, while the direction is determined by
the average difference between the two reward-stratified groups.

This design is also consistent with the stability analysis in
Appendix~\ref{app:theoretical_proofs}. Because the group-level contrast averages
over $B$ samples from each stratum, the variance of the group-averaged
log-ratio gradient decreases as $\mathcal O(1/B)$ under i.i.d. group sampling.
Moderate group sizes therefore reduce noisy row-level fluctuations and provide a
more stable signal for iterative tabular post-training.

The claim is not that group-mean aggregation is universally optimal for all
alignment problems. Rather, it is an inductive bias tailored to tabular
generation: when the target is distributional quality, a group-level contrast is
better matched to the evaluation regime than an update driven by isolated
row-level comparisons.

\section{GRAA surrogate stability properties}
\label{app:theoretical_proofs}

This appendix analyzes the fixed-round GRAA surrogate obtained after a synthetic sample pool has been generated, scored, and partitioned into high- and low-reward groups. The goal is not to prove global convergence of the full TabGRAA loop, since the generator, reward scores, and group assignments change across iterations. Instead, we establish stability properties of the fixed surrogate optimized within one alignment round: bounded gradients, $\mathcal{O}(1/B)$ variance reduction from group averaging, and a standard stochastic-gradient descent bound under smoothness.

At iteration $t$, let $\tilde{\mathcal D}^{(t)}$ be a generated synthetic pool and let $\rho_t:\mathcal X\rightarrow\mathbb R$ be a scalar reward signal. The reward is used to rank generated samples and construct high- and low-reward strata
$\mathcal G_{\mathrm{high}}^{(t)}$ and $\mathcal G_{\mathrm{low}}^{(t)}$. During the alignment update, the generated pool, reward scores, and group assignments are fixed. Gradients are taken only through the policy-reference log-ratio terms.

\subsection{Notation and assumptions}

Each generated row is represented by a serialized token sequence
$y=(\tau_1,\ldots,\tau_T)$. By autoregressive factorization,
\begin{equation}
\log \pi_\theta(y)
=
\sum_{t=1}^{T}
\log \pi_\theta(\tau_t\mid \tau_{<t}),
\qquad
\nabla_\theta \log \pi_\theta(y)
=
\sum_{t=1}^{T}
\nabla_\theta
\log \pi_\theta(\tau_t\mid \tau_{<t}).
\end{equation}

A GRAA mini-batch is formed by independently sampling
\begin{equation}
\mathcal B_{\mathrm{high}}
=
\{y_i^{\mathrm{high}}\}_{i=1}^{B}
\stackrel{\mathrm{i.i.d.}}{\sim}
\mathrm{Uniform}(\mathcal G_{\mathrm{high}}),
\qquad
\mathcal B_{\mathrm{low}}
=
\{y_i^{\mathrm{low}}\}_{i=1}^{B}
\stackrel{\mathrm{i.i.d.}}{\sim}
\mathrm{Uniform}(\mathcal G_{\mathrm{low}}).
\end{equation}
This construction does not require pairwise correspondence between high- and low-reward samples.

The group-averaged log-ratios are
\begin{equation}
\bar r_\theta^{\mathrm{high}}
=
\frac{\beta}{B}
\sum_{i=1}^{B}
\log
\frac{\pi_\theta(y_i^{\mathrm{high}})}
{\pi_{\mathrm{ref}}(y_i^{\mathrm{high}})},
\qquad
\bar r_\theta^{\mathrm{low}}
=
\frac{\beta}{B}
\sum_{i=1}^{B}
\log
\frac{\pi_\theta(y_i^{\mathrm{low}})}
{\pi_{\mathrm{ref}}(y_i^{\mathrm{low}})}.
\end{equation}
The fixed-batch GRAA surrogate is
\begin{equation}
\mathcal L_{\mathrm{GRAA}}(\theta)
=
\sigma
\left(
\bar r_\theta^{\mathrm{low}}
-
\bar r_\theta^{\mathrm{high}}
\right),
\label{eq:app_graa_loss}
\end{equation}
where $\sigma(u)=(1+e^{-u})^{-1}$.

\begin{assumption}[Bounded per-sequence policy gradient]
\label{asm:boundedG}
There exists $G>0$ such that
\[
\|\nabla_\theta \log \pi_\theta(y)\|
\leq
G
\]
for every admissible sequence $y$ and parameter value $\theta$ considered during post-training.
\end{assumption}

\begin{assumption}[$L$-smooth fixed surrogate]
\label{asm:smooth}
For a fixed generated pool and fixed group-sampling distribution, the expected GRAA surrogate is $L$-smooth in $\theta$.
\end{assumption}

\subsection{Main stability statement}

\begin{theorem}[Stability of the fixed-round GRAA surrogate]
\label{thm:graa_stability}
Fix a generated synthetic pool, reward scores, and the induced high- and low-reward group-sampling distributions. Let
\[
\mathcal J(\theta)
=
\mathbb E[
\mathcal L_{\mathrm{GRAA}}(\theta)
]
\]
be the expected GRAA surrogate over mini-batches sampled from these fixed groups. Under Assumptions~\ref{asm:boundedG} and~\ref{asm:smooth}, the following properties hold:
\begin{enumerate}
    \item The fixed-batch GRAA loss has bounded gradients:
    \[
    \|\nabla_\theta \mathcal L_{\mathrm{GRAA}}(\theta)\|
    \leq
    \frac{\beta G}{2}.
    \]
    \item Under i.i.d. sampling within each reward group, the variance of the group-averaged log-ratio gradient decreases as $\mathcal O(1/B)$.
    \item If the stochastic mini-batch gradient $\hat g_k$ is unbiased with variance bounded by $\sigma_{\mathcal L}^2$, then for step size $\eta\leq 1/L$,
    \[
    \min_{0\leq k<K}
    \mathbb E[
    \|\nabla \mathcal J(\theta_k)\|^2
    ]
    \leq
    \frac{
    2(\mathcal J(\theta_0)-\mathcal J^*)
    }{
    \eta K
    }
    +
    \eta L\sigma_{\mathcal L}^2,
    \]
    where $\mathcal J^*=\inf_\theta \mathcal J(\theta)$.
\end{enumerate}
\end{theorem}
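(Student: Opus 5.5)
The plan is to handle the three claims separately, since each uses only the explicit form of Eq.~\eqref{eq:app_graa_loss} together with Assumptions~\ref{asm:boundedG} and~\ref{asm:smooth}.

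\textbf{Part 1 (bounded gradients).} Write $u(\theta)=\bar r_\theta^{\mathrm{low}}-\bar r_\theta^{\mathrm{high}}$. Then the chain rule gives
\[
\nabla_\theta\mathcal L_{\mathrm{GRAA}}=\sigma'(u)\bigl(\nabla_\theta\bar r_\theta^{\mathrm{low}}-\nabla_\theta\bar r_\theta^{\mathrm{high}}\bigr).
\]
\begin{itemize}
\item Since $\pi_{\mathrm{ref}}$ is fixed, $\nabla_\theta\bar r_\theta^{\mathrm{high}}=\frac{\beta}{B}\sum_i\nabla_\theta\log\pi_\theta(y_i^{\mathrm{high}})$.
\item By the triangle inequality and Assumption~\ref{asm:boundedG}, this has norm at most $\beta G$. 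The same holds for the low group.
\item Hence the difference of the two group gradients has norm at most $2\beta G$.
\item Because $\sigma'(u)=\sigma(u)(1-\sigma(u))\le 1/4$ for all $u$, we obtain $\|\nabla_\theta\mathcal L_{\mathrm{GRAA}}\|\le \beta G/2$.
\end{itemize}

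\textbf{Part 2 (variance reduction).} I will state the claim precisely for the group-averaged log-ratio gradient $\nabla_\theta\Delta_{\mathrm{group}}=\nabla_\theta\bar r^{\mathrm{high}}_\theta-\nabla_\theta\bar r^{\mathrm{low}}_\theta$, as the theorem words it, rather than for the full loss gradient.
\begin{itemize}
\item For fixed $\theta$, the summands $\beta\nabla_\theta\log\pi_\theta(y_i^{\mathrm{high}})$ are i.i.d.\ vectors, because the groups are sampled uniformly with replacement.
\item Each summand has covariance trace $v_{\mathrm{high}}\le\beta^2G^2$.
\item The trace of the covariance of an average of $B$ i.i.d.\ vectors is $v_{\mathrm{high}}/B$.
\item The high and low groups are sampled independently, so their covariances add. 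This gives a total of at most $(v_{\mathrm{high}}+v_{\mathrm{low}})/B\le 2\beta^2G^2/B=\mathcal O(1/B)$.
\end{itemize}
The main subtlety is that the full loss gradient multiplies this vector by the random scalar $\sigma'(u)$, which is not a sample mean. I will therefore not claim exact $1/B$ scaling for $\nabla\mathcal L_{\mathrm{GRAA}}$ itself. At most I would remark that, since $\sigma'$ is bounded and Lipschitz, the extra fluctuation comes from $u$, whose variance is also $\mathcal O(1/B)$ as an average. Bounding it requires bounded log-ratios, which is an additional assumption that I will flag rather than use.

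\textbf{Part 3 (SGD bound).} I will use the standard nonconvex SGD argument.
\begin{itemize}
\item First note that $\mathcal J$ is a finite average over possible mini-batches drawn from a fixed finite pool, so gradient and expectation commute. Also $\mathcal J\ge 0$, because $\sigma>0$, so $\mathcal J^*$ is finite.
\item By Assumption~\ref{asm:smooth}, the descent lemma gives $\mathcal J(\theta_{k+1})\le\mathcal J(\theta_k)-\eta\langle\nabla\mathcal J(\theta_k),\hat g_k\rangle+\frac{L\eta^2}{2}\|\hat g_k\|^2$.
\item Take the conditional expectation. Unbiasedness gives $\mathbb E[\hat g_k]=\nabla\mathcal J(\theta_k)$, and the variance bound gives $\mathbb E\|\hat g_k\|^2\le\|\nabla\mathcal J(\theta_k)\|^2+\sigma_{\mathcal L}^2$.
\item Using $\eta\le1/L$, so that $-\eta+L\eta^2/2\le-\eta/2$, we get $\mathbb E\,\mathcal J(\theta_{k+1})\le\mathbb E\,\mathcal J(\theta_k)-\frac{\eta}{2}\mathbb E\|\nabla\mathcal J(\theta_k)\|^2+\frac{L\eta^2}{2}\sigma_{\mathcal L}^2$.
\item Telescope over $k=0,\dots,K-1$ and lower-bound $\mathcal J(\theta_K)$ by $\mathcal J^*$.
\item Divide by $\eta K/2$ and bound the minimum by the average. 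This yields exactly $\frac{2(\mathcal J(\theta_0)-\mathcal J^*)}{\eta K}+\eta L\sigma_{\mathcal L}^2$.
\end{itemize}

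Parts 1 and 3 are routine. Part 2 is the only place where care is needed about what is being bounded.
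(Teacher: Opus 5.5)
Your proposal is correct and follows the paper's proof essentially step for step. Part 1 uses the chain rule with $\sigma'\le 1/4$ and the triangle inequality, Part 2 uses cross-term cancellation for i.i.d.\ samples, and Part 3 uses the descent lemma, telescoping, and min-over-average. Your additions are valid refinements, not a different route: you derive the per-sample variance bound $\beta^2G^2$ from Assumption~\ref{asm:boundedG}, combine the two independent groups into one $\mathcal O(1/B)$ bound for $\nabla_\theta\Delta_{\mathrm{group}}$, and justify the gradient--expectation interchange and the finiteness of $\mathcal J^*$.
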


This proposition characterizes stochastic optimization of the fixed-round surrogate. The complete TabGRAA loop additionally changes the generated pool, reward scores, and group assignments across iterations, so distribution matching and reward-hacking behavior are evaluated empirically in the main experiments.

\subsection{Supporting lemmas}

\begin{lemma}[Bounded gradient]
\label{lem:grad_bound}
Under Assumption~\ref{asm:boundedG}, the fixed-batch GRAA loss satisfies
\begin{equation}
\|\nabla_\theta \mathcal L_{\mathrm{GRAA}}(\theta)\|
\leq
\frac{\beta G}{2}.
\end{equation}
\end{lemma}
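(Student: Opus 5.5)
The plan is a direct chain-rule computation combined with two elementary bounds: one on the derivative of the logistic function, and one on the gradient of the group-averaged log-ratios. Write $u(\theta)=\bar r_\theta^{\mathrm{low}}-\bar r_\theta^{\mathrm{high}}$, so that $\mathcal L_{\mathrm{GRAA}}(\theta)=\sigma(u(\theta))$ as in Eq.~\eqref{eq:app_graa_loss}. The chain rule gives
\[
\nabla_\theta \mathcal L_{\mathrm{GRAA}}(\theta)=\sigma'(u(\theta))\,\nabla_\theta u(\theta),
\]
which is the gradient structure already recorded in Appendix~\ref{app:graa_bt_derivation}. I will bound the two factors separately and then multiply.

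\textbf{Scalar factor.} Use $\sigma'(u)=\sigma(u)(1-\sigma(u))$. Since $\sigma(u)\in(0,1)$ and $s(1-s)\le 1/4$ on $[0,1]$ (by AM--GM, with the maximum at $s=1/2$, i.e.\ $u=0$), we get $0<\sigma'(u)\le 1/4$ for every real $u$.

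\textbf{Vector factor.} The key observation is that $\pi_{\mathrm{ref}}=\pi_{\theta_0}$ is frozen, so $\log\pi_{\mathrm{ref}}(y)$ has zero gradient in $\theta$. Hence $\nabla_\theta r_\theta(y)=\beta\nabla_\theta\log\pi_\theta(y)$. Since the fixed batch consists of admissible sequences, Assumption~\ref{asm:boundedG} applies to each of them. Then
\[
\|\nabla_\theta \bar r_\theta^{\mathrm{high}}\|\le \frac{\beta}{B}\sum_{i=1}^{B}\|\nabla_\theta\log\pi_\theta(y_i^{\mathrm{high}})\|\le \beta G
\]
by the triangle inequality, and likewise $\|\nabla_\theta \bar r_\theta^{\mathrm{low}}\|\le\beta G$. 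A further triangle inequality gives $\|\nabla_\theta u\|\le 2\beta G$.

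\textbf{Combining.} Multiplying the two bounds yields $\|\nabla_\theta\mathcal L_{\mathrm{GRAA}}\|\le \tfrac14\cdot 2\beta G=\beta G/2$, which is the claim.

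There is no genuine obstacle here. The only point needing care is the vector factor: the $1/B$ averaging exactly cancels the $B$ terms in each group, which makes the bound independent of $B$. I will also state explicitly that group membership and reward scores are held fixed, as in the fixed-round setting. This makes clear that no gradient flows through the scorer or the group assignment, so $u$ depends on $\theta$ only through the policy log-probabilities.
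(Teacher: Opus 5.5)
Your proposal is correct and follows essentially the same route as the paper's proof. The paper also uses the chain rule, the bound $0<\sigma'(u)\le 1/4$, the vanishing gradient of the frozen reference so that each group-averaged log-ratio gradient is at most $\beta G$, and the triangle inequality to conclude $\tfrac14\cdot 2\beta G=\beta G/2$.
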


\begin{proof}
Let
\[
u(\theta)
=
\bar r_\theta^{\mathrm{low}}
-
\bar r_\theta^{\mathrm{high}},
\qquad
\mathcal L_{\mathrm{GRAA}}(\theta)
=
\sigma(u(\theta)),
\]
where
\[
\sigma(u)
=
(1+e^{-u})^{-1}.
\]
The derivative of the sigmoid is
\[
\sigma'(u)
=
\frac{e^{-u}}{(1+e^{-u})^2}
=
\sigma(u)(1-\sigma(u)).
\]
Since $\sigma(u)\in(0,1)$ for all $u$, the product
$\sigma(u)(1-\sigma(u))$ is maximized at $\sigma(u)=1/2$. Therefore,
\[
0 < \sigma'(u)\leq \frac{1}{4}.
\]

By the chain rule,
\begin{equation}
\nabla_\theta \mathcal L_{\mathrm{GRAA}}(\theta)
=
\sigma'(u(\theta))
\left(
\nabla_\theta \bar r_\theta^{\mathrm{low}}
-
\nabla_\theta \bar r_\theta^{\mathrm{high}}
\right).
\end{equation}
Taking norms and applying the triangle inequality gives
\begin{equation}
\|\nabla_\theta \mathcal L_{\mathrm{GRAA}}(\theta)\|
\leq
\frac{1}{4}
\left(
\|\nabla_\theta \bar r_\theta^{\mathrm{low}}\|
+
\|\nabla_\theta \bar r_\theta^{\mathrm{high}}\|
\right).
\end{equation}

Since $\pi_{\mathrm{ref}}$ is fixed, its log-probability has zero gradient with respect to $\theta$. Therefore,
\[
\nabla_\theta \bar r_\theta^{\mathrm{high}}
=
\frac{\beta}{B}
\sum_{i=1}^{B}
\nabla_\theta
\log \pi_\theta(y_i^{\mathrm{high}}).
\]
By Assumption~\ref{asm:boundedG},
\[
\|\nabla_\theta \bar r_\theta^{\mathrm{high}}\|
\leq
\frac{\beta}{B}
\sum_{i=1}^{B}
\|\nabla_\theta\log\pi_\theta(y_i^{\mathrm{high}})\|
\leq
\frac{\beta}{B}
\sum_{i=1}^{B}
G
=
\beta G.
\]
The same argument gives
\[
\|\nabla_\theta \bar r_\theta^{\mathrm{low}}\|
\leq
\beta G.
\]
Substituting these two bounds into the previous inequality yields
\[
\|\nabla_\theta \mathcal L_{\mathrm{GRAA}}(\theta)\|
\leq
\frac{1}{4}
\left(
\beta G+\beta G
\right)
=
\frac{\beta G}{2}.
\]
\end{proof}

\begin{lemma}[$\mathcal O(1/B)$ variance of group-averaged gradients]
\label{lem:variance}
Let
\[
g_i(\theta)
=
\beta
\nabla_\theta
\log\pi_\theta(y_i^{\mathrm{high}}),
\qquad
i=1,\ldots,B,
\]
where $y_i^{\mathrm{high}}$ are drawn i.i.d. from
$\mathrm{Uniform}(\mathcal G_{\mathrm{high}})$. Let
\[
\mu
=
\mathbb E[g_i(\theta)]
\]
and assume
\[
\mathbb E[\|g_i-\mu\|^2]
\leq
\sigma_g^2.
\]
Then
\begin{equation}
\mathbb E
\left[
\left\|
\nabla_\theta \bar r_\theta^{\mathrm{high}}
-
\mu
\right\|^2
\right]
\leq
\frac{\sigma_g^2}{B}.
\end{equation}
The same bound holds for the low-reward group.
\end{lemma}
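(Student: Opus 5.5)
The plan is to use the standard variance-of-a-sample-mean computation, with $\theta$ and the high-reward group $\mathcal G_{\mathrm{high}}$ held fixed. Because $\pi_{\mathrm{ref}}$ does not depend on $\theta$, the first step, exactly as in the proof of Lemma~\ref{lem:grad_bound}, is to write
\[
\nabla_\theta \bar r_\theta^{\mathrm{high}}
=
\frac{\beta}{B}\sum_{i=1}^{B}\nabla_\theta\log\pi_\theta(y_i^{\mathrm{high}})
=
\frac{1}{B}\sum_{i=1}^{B} g_i(\theta).
\]
This gives
\[
\nabla_\theta \bar r_\theta^{\mathrm{high}}-\mu=\frac{1}{B}\sum_{i=1}^{B}(g_i-\mu).
\]
Note that $\mu=\mathbb E[g_i]$ is the same vector for every $i$, since the $y_i^{\mathrm{high}}$ are identically distributed. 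It is also finite: $\mathcal G_{\mathrm{high}}$ is a finite pool, and Assumption~\ref{asm:boundedG} bounds each $\|g_i\|$ by $\beta G$.

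Next I would expand the squared norm of this sum as a double sum of inner products:
\[
\mathbb E\Bigl[\Bigl\|\tfrac{1}{B}\textstyle\sum_i (g_i-\mu)\Bigr\|^2\Bigr]
=
\frac{1}{B^2}\sum_{i,j}\mathbb E\bigl[\langle g_i-\mu,\,g_j-\mu\rangle\bigr].
\]
For $i\neq j$, independence of $y_i^{\mathrm{high}}$ and $y_j^{\mathrm{high}}$ makes $g_i-\mu$ and $g_j-\mu$ independent and mean zero. The cross expectation therefore factorizes as $\langle \mathbb E[g_i-\mu],\mathbb E[g_j-\mu]\rangle=0$. The main point to justify is this cross-term cancellation. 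It relies on the i.i.d. draw with replacement specified in the mini-batch construction, not on sampling without replacement from the pool. If sampling were without replacement, I would instead note that the cross terms are nonpositive (finite-population correction), which only strengthens the bound.

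Only the $B$ diagonal terms then remain, each bounded by $\sigma_g^2$, so the total is at most $B\sigma_g^2/B^2=\sigma_g^2/B$. For the low-reward group the argument is identical with $y_i^{\mathrm{low}}\sim\mathrm{Uniform}(\mathcal G_{\mathrm{low}})$ and its own mean $\mu^{\mathrm{low}}$, and it would say so in one line. As a remark, $\sigma_g^2\le(\beta G)^2$ always holds by Assumption~\ref{asm:boundedG}, since variance is at most the second moment; this makes the bound explicit as $\beta^2G^2/B$.
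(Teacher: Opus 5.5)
Your proof is correct and follows the same route as the paper. Like the paper, you use that $\pi_{\mathrm{ref}}$ is fixed to write $\nabla_\theta \bar r_\theta^{\mathrm{high}}$ as the sample mean of the $g_i$, expand the squared norm into diagonal and cross terms, cancel the cross terms by independence, and bound the $B$ diagonal terms by $\sigma_g^2$; your extra remarks (nonpositive cross terms under sampling without replacement, and the explicit choice $\sigma_g^2=\beta^2G^2$ via Assumption~\ref{asm:boundedG}) are also correct but go beyond what the paper states.
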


\begin{proof}
Because $\pi_{\mathrm{ref}}$ is fixed,
\[
\nabla_\theta \bar r_\theta^{\mathrm{high}}
=
\frac{1}{B}
\sum_{i=1}^{B}
g_i(\theta).
\]
Using independence,
\begin{align}
\mathbb E
\left[
\left\|
\frac{1}{B}
\sum_{i=1}^{B}
(g_i-\mu)
\right\|^2
\right]
&=
\frac{1}{B^2}
\sum_{i=1}^{B}
\mathbb E[\|g_i-\mu\|^2]
+
\frac{1}{B^2}
\sum_{i\neq j}
\mathbb E[(g_i-\mu)^\top(g_j-\mu)] \\
&\leq
\frac{1}{B^2}
B\sigma_g^2
=
\frac{\sigma_g^2}{B},
\end{align}
where the cross terms vanish by independence.
\end{proof}

\subsection{Proof of theorem~\ref{thm:graa_stability}}

\begin{proof}
The first claim of Theorem~\ref{thm:graa_stability} follows directly from Lemma~\ref{lem:grad_bound}, and the second claim follows from Lemma~\ref{lem:variance}. It remains to prove the SGD descent bound for the fixed surrogate.

By $L$-smoothness,
\[
\mathcal J(\theta_{k+1})
\leq
\mathcal J(\theta_k)
-
\eta
\langle
\nabla\mathcal J(\theta_k),
\hat g_k
\rangle
+
\frac{L\eta^2}{2}
\|\hat g_k\|^2.
\]
Taking conditional expectations and using unbiasedness,
\[
\mathbb E[
\mathcal J(\theta_{k+1})
\mid
\theta_k]
\leq
\mathcal J(\theta_k)
-
\eta
\|\nabla\mathcal J(\theta_k)\|^2
+
\frac{L\eta^2}{2}
\left(
\|\nabla\mathcal J(\theta_k)\|^2
+
\sigma_{\mathcal L}^2
\right).
\]
For $\eta\leq 1/L$,
\[
\mathbb E[
\mathcal J(\theta_{k+1})
\mid
\theta_k]
\leq
\mathcal J(\theta_k)
-
\frac{\eta}{2}
\|\nabla\mathcal J(\theta_k)\|^2
+
\frac{L\eta^2}{2}
\sigma_{\mathcal L}^2.
\]
Taking total expectations, summing over $k=0,\ldots,K-1$, and rearranging gives
\[
\frac{1}{K}
\sum_{k=0}^{K-1}
\mathbb E[
\|\nabla\mathcal J(\theta_k)\|^2
]
\leq
\frac{
2(\mathcal J(\theta_0)-\mathcal J^*)
}{
\eta K
}
+
\eta L\sigma_{\mathcal L}^2.
\]
Since the minimum over $k$ is no larger than the average over $k$, we obtain
\[
\min_{0\leq k<K}
\mathbb E[
\|\nabla \mathcal J(\theta_k)\|^2
]
\leq
\frac{
2(\mathcal J(\theta_0)-\mathcal J^*)
}{
\eta K
}
+
\eta L\sigma_{\mathcal L}^2.
\]
The $\mathcal O(1/B)$ dependence of the group-averaged contribution to the stochastic-gradient variance follows from Lemma~\ref{lem:variance}.
\end{proof}

\paragraph{Scope.}
Theorem~\ref{thm:graa_stability} is an SGD descent bound for the fixed GRAA surrogate defined by a given reward-ranked pool. The complete TabGRAA loop additionally changes the generated pool, reward scores, and group assignments across iterations. We therefore use the theorem to characterize the stability of each alignment update, and use empirical evaluations to assess distribution matching, reward-hacking risk, and downstream synthetic-data quality.

\section{Experimental setup}
\label{app:exp_setup}

\subsection{Datasets}
\label{app:dataset}

We evaluate on five mixed-type tabular datasets from the UCI Machine Learning Repository\footnote{\url{https://archive.ics.uci.edu/datasets}}: Adult, Default, Shoppers, Magic, and Beijing. Adult, Default, Shoppers, and Magic are classification datasets, while Beijing is a regression dataset. Each dataset is split into training, validation, and test sets with an 8:1:1 ratio, except Adult, for which we use the official test set and split the remaining data into training and validation sets with an 8:1 ratio. Dataset statistics are summarized in Table~\ref{tbl:exp-dataset}.

\begin{table}[htbp] 
    \centering
    \caption{Dataset statistics. ``\# Num'' and ``\# Cat'' denote the number of numerical and categorical columns.}
    \label{tbl:exp-dataset}
    \footnotesize
    \begin{threeparttable}
    \resizebox{\textwidth}{!}{
	\begin{tabular}{lcccccccc}
            \toprule
            \textbf{Dataset} & \# Samples  & \# Num & \# Cat & \# Max Cat & \# Train & \# Validation & \# Test & Task Type  \\
            \midrule 
            \textbf{Adult} & $48,842$ & $6$ & $9$ & $42$ & $28,943$ & $3,618$ & $16,281$ & Binary Classification  \\
            \textbf{Default} & $30,000$ & $14$ & $11$ & $11$ & $24,000$ & $3,000$ & $3,000$ & Binary Classification   \\
            \textbf{Shoppers} & $12,330$ & $10$ & $8$ & $20$ & $9,864$   & $1,233$ & $1,233$ & Binary Classification   \\
            \textbf{Magic} & $19,020$ & $10$ & $1$ & $2$ & $15,216$   & $1,902$ & $1,902$ & Binary Classification   \\
            \textbf{Beijing} & $41,757$ & $7$ & $5$ & $31$ & $33,405$  & $4,175$ & $4,175$ & Regression   \\
		\bottomrule 
		\end{tabular}
    }        
  \end{threeparttable}
\end{table}

Brief descriptions of the datasets are provided below:
\begin{itemize}
    \item \textbf{Adult}: The Adult Census Income dataset contains demographic and employment-related features. The task is to predict whether income exceeds \$50,000.
    \item \textbf{Default}: The Default of Credit Card Clients dataset contains demographic, credit, payment, and bill-statement information. The task is to predict default payment next month.
    \item \textbf{Shoppers}: The Online Shoppers Purchasing Intention dataset contains web-session features. The task is to predict whether a session ends in a purchase.
    \item \textbf{Magic}: The MAGIC Gamma Telescope dataset simulates gamma-particle registration in a ground-based atmospheric Cherenkov telescope. The task is binary classification.
    \item \textbf{Beijing}: The Beijing PM2.5 dataset contains hourly PM2.5 and meteorological records. The task is to predict PM2.5 values.
\end{itemize}

\subsection{Baselines}
\label{app:baseline}

We compare against three families of tabular generators.

\paragraph{Diffusion models.}
\begin{itemize}
    \item \textbf{TabDDPM}~\citep{kotelnikov2023tabddpm}: A denoising diffusion model adapted to mixed-type tabular data.
    \item \textbf{TabSyn}~\citep{zhang2023mixed}: A transformer-based diffusion approach for modeling tabular correlations.
    \item \textbf{TabDiff}~\citep{shi2024tabdiff}: A diffusion model for tabular data with specialized noise schedules and conditioning mechanisms.
\end{itemize}

\paragraph{GAN/VAE models.}
\begin{itemize}
    \item \textbf{CTGAN}~\citep{xu2019modeling}: A generative adversarial network designed for mixed-type tabular data.
    \item \textbf{TVAE}~\citep{xu2019modeling}: A variational autoencoder for tabular data generation.
\end{itemize}

\paragraph{Language-model-based generators.}
\begin{itemize}
    \item \textbf{GReaT}~\citep{borisov2022language}: A tabular generator that serializes rows as text and models tabular synthesis as autoregressive language modeling.
    \item \textbf{GReaT-FT+}: An extended supervised fine-tuning baseline matched to the additional training budget of post-training methods.
\end{itemize}

Adapted alignment baselines, including DPO, KTO, and NPO, are described in Appendix~\ref{app:alignment_baselines}.

\subsection{Evaluation metrics}
\label{app:evaluation}

We evaluate generated synthetic data along three axes: utility, fidelity, and empirical attack indistinguishability. Unless otherwise stated, reported values are means and standard deviations over 10 sampling runs.

\paragraph{Utility.}
We measure machine learning efficiency (MLE) by training an XGBoost model~\citep{chen2016xgboost} on synthetic data and evaluating it on held-out real test data. For classification datasets, we report AUC, where higher is better. For Beijing, we report RMSE, where lower is better:
\begin{equation}
    \text{RMSE} = \sqrt{\frac{1}{n} \sum_{i=1}^{n} (y_i - \hat{y}_i)^2}.
\end{equation}

\paragraph{Fidelity.}
We use both low-order and high-order fidelity metrics. Low-order fidelity includes column-density similarity (CDE) and pairwise-correlation similarity (PCC), where higher values indicate better preservation of marginal and pairwise structure. In contextual comparisons with diffusion-based models, we also report Shape and Trend errors following prior work, where lower values are better. High-order fidelity is evaluated using $\alpha$-precision and $\beta$-recall~\citep{alaa2022faithful}: $\alpha$-precision measures the fraction of synthetic samples lying in high-density regions of the real data distribution, while $\beta$-recall measures coverage of the real data manifold.

\paragraph{Classifier two-sample test.}
We report the SDMetrics C2ST quality score, where higher values indicate better real-vs-synthetic indistinguishability under the SDMetrics convention.

\paragraph{empirical attack indistinguishability.}
We report distinguishability attack AUC and membership-inference AUC. Both are raw AUC values, and values closer to $0.5$ indicate weaker attacks and stronger indistinguishability. We use a Random Forest classifier with 100 trees for the distinguishability attack.

\section{Experiment implementation}
\label{app:implementation}

\subsection{Hyperparameters}
\label{app:hyperparams}

The detailed hyperparameters for our tabular language-model post-training experiments are summarized in Table~\ref{tbl:tabrep_hyp}. All iterative post-training methods start from the same supervised fine-tuned GReaT checkpoint and use the same generated synthetic pools whenever applicable. For the default TabGRAA configuration, we use a Random Forest distinguishability scorer retrained at each iteration, a fixed SFT reference model, and the simplified GRAA objective described in Section~\ref{sec:graa}. Baseline tabular synthesizers such as CTGAN, TVAE, TabDDPM, TabSyn, and TabDiff are implemented using their official codebases and tuned following the recommendations in their original papers.

\begin{table}[ht]
\centering
\caption{Hyperparameters for tabular language-model post-training methods.}
\label{tbl:tabrep_hyp}
\begin{tabular}{@{}lll@{}}
\toprule
\textbf{Component} & \textbf{Parameter} & \textbf{Value / Search Range} \\
\midrule
\textbf{Base Model} 
& Architecture & DistilGPT-2 (82M) by default \\
& Alternative backbones & GPT-2, GPT-Neo-125M \\
& Tokenization & \texttt{"<col1> is <val1>,..."} \\
& SFT epochs & 100 \\
& SFT batch size & 8 \\
\midrule
\textbf{Shared Hyperparameters} 
& Optimizer & AdamW \\
& Learning rate $\eta$ & $\{5\times10^{-6}, 5\times10^{-7}, 5\times10^{-8}\}$ \\
& Weight decay & 0.01 \\
& Alignment strength $\beta$ & $\{0.1,1,10,100\}$ \\
& Training rounds $T$ & 5 by default \\
& Reference model & Fixed SFT checkpoint \\
\midrule
\textbf{Default Reward Scorer} 
& Model & Random Forest \\
& Number of trees & 100 \\
& Max depth & None \\
& Criterion & Gini impurity \\
& Retrain each round & Yes \\
\midrule
\textbf{TabGRAA Specific} 
& Group size $B$ & $\{1,4,8,16,32,64\}$ \\
& Default group construction & Top/bottom reward strata \\
& Loss & $\sigma(\bar r_\theta^{\mathrm{low}}-\bar r_\theta^{\mathrm{high}})$ \\
\bottomrule
\end{tabular}
\end{table}

\subsection{Unified iterative post-training pipeline}
\label{app:algorithm_implementation}

Algorithm~\ref{alg:iterative_alignment_framework} summarizes the unified implementation used for adapted alignment baselines and TabGRAA. The key distinction is that DPO, NPO, and KTO consume instance-level preferred/rejected samples or desirable/undesirable labels, while GRAA forms high- and low-reward groups and optimizes a group-level objective. The reward scorer is used only for ranking and group construction; gradients are not propagated through the scorer. Default group size B is 4.

\begin{algorithm}[h]
\caption{Unified Iterative Tabular LM Post-Training Pipeline}
\label{alg:iterative_alignment_framework}
\begin{algorithmic}[1]
\REQUIRE Real dataset $\mathcal{D}_{\mathrm{real}}$, base LM $\pi_{\mathrm{base}}$,
rounds $T$, alignment steps per round $K$, method
$\mathcal{M}\in\{\mathrm{DPO},\mathrm{NPO},\mathrm{KTO},\mathrm{GRAA}\}$,
reward signal $s_t$, learning rate $\eta$, alignment strength $\beta$, group size
$B$ for GRAA
\ENSURE Post-trained policy $\pi_{\theta_T}$

\STATE $\pi_{\theta_0}\gets \mathrm{SFT}(\pi_{\mathrm{base}},\mathcal{D}_{\mathrm{real}})$
\STATE $\pi_{\mathrm{ref}}\gets \pi_{\theta_0}$ \COMMENT{Fixed reference model}

\FOR{$t=1$ to $T$}
    \STATE $\theta^{(0)} \gets \theta_{t-1}$
    \STATE Generate scorer-training pool $\tilde{\mathcal D}^{(t)}_{\mathrm{score}}\sim \pi_{\theta_{t-1}}$
    \STATE Generate alignment pool $\tilde{\mathcal D}^{(t)}_{\mathrm{align}}\sim \pi_{\theta_{t-1}}$

    \STATE \textbf{Reward construction and scoring:}
    \IF{$s_t=s_{\mathrm{cls}}$}
        \STATE Train scorer $\phi_t$ on $\mathcal{D}_{\mathrm{real}}$ vs. $\tilde{\mathcal D}^{(t)}_{\mathrm{score}}$
        \STATE Score each $\tilde{x}\in\tilde{\mathcal D}^{(t)}_{\mathrm{align}}$ by
        \[
        s_{\mathrm{cls}}(\tilde{x})=1-2|0.5-\phi_t(\tilde{x})|
        \]
    \ELSIF{$s_t=s_{\mathrm{dcr}}$}
        \STATE Score each $\tilde{x}\in\tilde{\mathcal D}^{(t)}_{\mathrm{align}}$ by the DCR reward
        \[
        s_{\mathrm{dcr}}(\tilde{x})
        =
        1-
        \frac{
        d(\tilde{x})-d_{\min}
        }{
        d_{\max}-d_{\min}
        },
        \qquad
        d(\tilde{x})
        =
        \min_{x\in\mathcal{D}_{\mathrm{real}}}
        \|\tilde{x}-x\|_2
        \]
    \ELSIF{$s_t=s_{\mathrm{target}}$}
        \STATE Score each $\tilde{x}\in\tilde{\mathcal D}^{(t)}_{\mathrm{align}}$ against a target-domain alignment set
    \ENDIF

    \STATE Rank samples in $\tilde{\mathcal D}^{(t)}_{\mathrm{align}}$ by reward score
    \STATE Construct high-reward stratum $X^+$ and low-reward stratum $X^-$

    \FOR{$k=0$ to $K-1$}
        \STATE Define policy-reference log-ratio under current parameters:
        \[
        r_{\theta^{(k)}}(y)
        =
        \beta
        \log
        \frac{\pi_{\theta^{(k)}}(y)}{\pi_{\mathrm{ref}}(y)}
        \]

        \IF{$\mathcal{M}=\mathrm{GRAA}$}
            \STATE Sample groups
            $\mathcal{B}_{\mathrm{high}}\sim \mathrm{Uniform}(X^+)$ and
            $\mathcal{B}_{\mathrm{low}}\sim \mathrm{Uniform}(X^-)$ with
            $|\mathcal{B}_{\mathrm{high}}|=|\mathcal{B}_{\mathrm{low}}|=B$
            \STATE Compute group-averaged implicit rewards:
            \[
            \bar r_{\theta^{(k)}}^{\mathrm{high}}
            =
            \frac{1}{B}
            \sum_{y\in\mathcal{B}_{\mathrm{high}}}
            r_{\theta^{(k)}}(y),
            \qquad
            \bar r_{\theta^{(k)}}^{\mathrm{low}}
            =
            \frac{1}{B}
            \sum_{y\in\mathcal{B}_{\mathrm{low}}}
            r_{\theta^{(k)}}(y)
            \]
            \STATE $\mathcal L^{(k)}\gets
            \sigma\left(\bar r_{\theta^{(k)}}^{\mathrm{low}}
            -
            \bar r_{\theta^{(k)}}^{\mathrm{high}}\right)$

        \ELSIF{$\mathcal{M}=\mathrm{DPO}$}
            \STATE Sample preferred/rejected mini-batch pairs
            $(y^+,y^-)$ with $y^+\in X^+$ and $y^-\in X^-$
            \STATE $\mathcal L^{(k)}\gets
            -\mathbb{E}_{(y^+,y^-)}
            \log\sigma\left(
            r_{\theta^{(k)}}(y^+)-r_{\theta^{(k)}}(y^-)
            \right)$

        \ELSIF{$\mathcal{M}=\mathrm{NPO}$}
            \STATE Sample negative mini-batch from $X^-$
            \STATE $\mathcal L^{(k)}\gets
            -\mathbb{E}_{y^-\in X^-}
            \log\sigma\left(-r_{\theta^{(k)}}(y^-)\right)$

        \ELSIF{$\mathcal{M}=\mathrm{KTO}$}
            \STATE Sample desirable rows from $X^+$ and undesirable rows from $X^-$
            \STATE Compute KTO desirable/undesirable loss $\mathcal L^{(k)}$
            using $r_{\theta^{(k)}}(y)$
        \ENDIF

        \STATE $\theta^{(k+1)}\gets
        \theta^{(k)}-\eta\nabla_\theta \mathcal L^{(k)}$
    \ENDFOR

    \STATE $\theta_t\gets \theta^{(K)}$
\ENDFOR

\STATE \textbf{return} $\pi_{\theta_T}$
\end{algorithmic}
\end{algorithm}

\subsection{Computational resources}
\label{app:compute_resources}

All experiments were conducted on a high-performance computing cluster with NVIDIA GeForce RTX 4090 (24GB) and NVIDIA A100 (80GB) GPUs. The software stack used Python 3.8, PyTorch 2.1, CUDA 11.8, Transformers 4.36, scikit-learn 1.3, and XGBoost 1.7.6. Supervised fine-tuning required approximately 3--6 hours per dataset depending on dataset size and hardware. Each post-training iteration required approximately 10--20 minutes, with the autoregressive LM forward/backward pass dominating runtime.

\section{Further self-improvement results}
\label{app:further_results}

\subsection{Per-dataset self-improvement tables}
\label{app:finetuning-baselines}

Tables~\ref{tbl:exp-adult-formatted}--\ref{tbl:exp-beijing-formatted} report detailed per-dataset results for GReaT, GReaT-FT+, adapted alignment baselines, and TabGRAA variants. These tables complement the averaged main result in Table~\ref{tbl:ablation_loss_variants}. Across datasets, TabGRAA generally improves the GReaT backbone and provides strong fidelity and empirical attack indistinguishability. Performance varies by dataset and metric, so we report the full metric-level picture rather than claiming uniform dominance on every metric.

\begin{table*}[h]
\centering
\caption{
Performance comparison on Adult dataset. Results show methods' performance on the Adult dataset with best results highlighted in \textbf{bold}.}
\label{tbl:exp-adult-formatted}
\begin{threeparttable}
\resizebox{\textwidth}{!}{
\begin{tabular}{lccccccc}
\toprule[1pt]
\textbf{Method} & \textbf{CDE$\uparrow$} & \textbf{PCC$\uparrow$} & \textbf{$\alpha$$\uparrow$} & \textbf{$\beta$$\uparrow$} & \textbf{C2ST$\uparrow$} & \textbf{DA $\to 0.5$} & \textbf{MLE$\uparrow$} \\
\midrule

\multicolumn{8}{l}{\textit{Baselines}} \\
GReaT (Original) & $92.55{\tiny\pm0.02}$ & $87.82{\tiny\pm1.54}$ & $68.28{\tiny\pm0.15}$ & $51.46{\tiny\pm0.23}$ & $62.14{\tiny\pm0.12}$ & $0.7343{\tiny\pm0.0028}$ & $0.9121{\tiny\pm0.0017}$ \\
GReaT-FT+ & $92.52{\tiny\pm0.03}$ & $87.47{\tiny\pm4.04}$ & $68.85{\tiny\pm0.20}$ & $51.09{\tiny\pm0.36}$ & $62.58{\tiny\pm0.72}$ & $0.7348{\tiny\pm0.0057}$ & $0.9091{\tiny\pm0.0018}$ \\
\midrule

\multicolumn{8}{l}{\textit{DPO variants}} \\
TabDPO (base) & $98.80{\tiny\pm0.07}$ & $88.16{\tiny\pm4.04}$ & $98.75{\tiny\pm0.24}$ & $52.35{\tiny\pm0.29}$ & $94.85{\tiny\pm0.22}$ & $0.5176{\tiny\pm0.0044}$ & $0.9203{\tiny\pm0.0018}$ \\
+ KL penalty & $98.94{\tiny\pm0.03}$ & $89.39{\tiny\pm2.90}$ & $98.63{\tiny\pm0.31}$ & $50.47{\tiny\pm0.42}$ & $95.87{\tiny\pm0.46}$ & $0.5133{\tiny\pm0.0040}$ & $0.9195{\tiny\pm0.0016}$ \\
+ Gradient diff. & $98.83{\tiny\pm0.05}$ & $\bf{90.86{\tiny\pm2.38}}$ & $98.07{\tiny\pm0.23}$ & $50.42{\tiny\pm0.22}$ & $95.53{\tiny\pm0.43}$ & $0.5190{\tiny\pm0.0054}$ & $0.9195{\tiny\pm0.0012}$ \\
\midrule

\multicolumn{8}{l}{\textit{NPO variants}} \\
TabNPO (base) & $94.03{\tiny\pm0.03}$ & $84.22{\tiny\pm4.26}$ & $75.72{\tiny\pm0.19}$ & $52.69{\tiny\pm0.21}$ & $71.01{\tiny\pm0.85}$ & $0.6601{\tiny\pm0.0071}$ & $0.9171{\tiny\pm0.0022}$ \\
+ KL penalty & $94.02{\tiny\pm0.05}$ & $83.29{\tiny\pm3.62}$ & $75.59{\tiny\pm0.37}$ & $52.58{\tiny\pm0.13}$ & $71.14{\tiny\pm0.81}$ & $0.6564{\tiny\pm0.0028}$ & $0.9121{\tiny\pm0.0045}$ \\
+ Gradient diff. & $94.19{\tiny\pm0.05}$ & $83.75{\tiny\pm3.07}$ & $76.40{\tiny\pm0.27}$ & $\bf{52.72{\tiny\pm0.15}}$ & $71.55{\tiny\pm0.56}$ & $0.6534{\tiny\pm0.0038}$ & $0.9172{\tiny\pm0.0023}$ \\ 
\midrule

\multicolumn{8}{l}{\textit{KTO variants}} \\
TabKTO (base) & $93.53{\tiny\pm0.06}$ & $83.80{\tiny\pm2.33}$ & $73.64{\tiny\pm0.26}$ & $52.70{\tiny\pm0.21}$ & $69.30{\tiny\pm0.66}$ & $0.6668{\tiny\pm0.0039}$ & $0.9154{\tiny\pm0.0031}$ \\
+ Logsigmoid & $93.51{\tiny\pm0.08}$ & $82.37{\tiny\pm3.42}$ & $73.43{\tiny\pm0.25}$ & $52.47{\tiny\pm0.27}$ & $70.04{\tiny\pm0.89}$ & $0.6695{\tiny\pm0.0025}$ & $0.9147{\tiny\pm0.0044}$ \\
+ Logs. + Grad. diff & $93.52{\tiny\pm0.06}$ & $82.30{\tiny\pm3.68}$ & $73.47{\tiny\pm0.30}$ & $52.66{\tiny\pm0.35}$ & $69.51{\tiny\pm0.57}$ & $0.6726{\tiny\pm0.0030}$ & $0.9146{\tiny\pm0.0035}$ \\
\midrule

\multicolumn{8}{l}{\textit{GRAA variants (Ours)}} \\
TabGRAA (base) & $\bf{99.13{\tiny\pm0.04}}$ & $88.13{\tiny\pm5.14}$ & $\bf{99.44{\tiny\pm0.26}}$ & $52.14{\tiny\pm0.33}$ & $\bf{96.27{\tiny\pm0.28}}$ & $\bf{0.4997{\tiny\pm0.0025}}$ & $\bf{0.9214{\tiny\pm0.0006}}$ \\
+ Logsigmoid & $98.94{\tiny\pm0.05}$ & $90.62{\tiny\pm3.26}$ & $98.72{\tiny\pm0.27}$ & $50.69{\tiny\pm0.25}$ & $95.64{\tiny\pm0.35}$ & $0.5107{\tiny\pm0.0037}$ & $0.9215{\tiny\pm0.0014}$ \\
+ Logs. + Grad. diff & $98.79{\tiny\pm0.06}$ & $88.91{\tiny\pm3.76}$ & $98.83{\tiny\pm0.23}$ & $51.04{\tiny\pm0.42}$ & $94.82{\tiny\pm0.41}$ & $0.5136{\tiny\pm0.0033}$ & $0.9198{\tiny\pm0.0032}$ \\

\bottomrule[1pt]
\end{tabular}}
\end{threeparttable}
\end{table*}

\begin{table*}[h]
\centering
\caption{
Performance comparison on Default dataset. Results show methods' performance on the Default dataset with best results highlighted in \textbf{bold}.}
\label{tbl:exp-default-formatted}
\begin{threeparttable}
\resizebox{\textwidth}{!}{
\begin{tabular}{lccccccc}
\toprule[1pt]
\textbf{Method} & \textbf{CDE$\uparrow$} & \textbf{PCC$\uparrow$} & \textbf{$\alpha$$\uparrow$} & \textbf{$\beta$$\uparrow$} & \textbf{C2ST$\uparrow$} & \textbf{DA $\to 0.5$} & \textbf{MLE$\uparrow$} \\
\midrule

\multicolumn{8}{l}{\textit{Baselines}} \\
GReaT (Original) & $80.17{\tiny\pm0.06}$ & $30.80{\tiny\pm0.65}$ & $86.32{\tiny\pm2.41}$ & $41.73{\tiny\pm0.45}$ & $11.31{\tiny\pm0.19}$ & $0.8634{\tiny\pm0.0028}$ & $0.7603{\tiny\pm0.0034}$ \\
GReaT-FT+ & $80.13{\tiny\pm0.08}$ & $30.75{\tiny\pm0.18}$ & $86.86{\tiny\pm0.14}$ & $41.69{\tiny\pm0.33}$ & $11.60{\tiny\pm0.67}$ & $0.8669{\tiny\pm0.0057}$ & $0.7619{\tiny\pm0.0014}$ \\
\midrule

\multicolumn{8}{l}{\textit{DPO variants}} \\
TabDPO (base) & $\bf{94.95{\tiny\pm0.04}}$ & $30.53{\tiny\pm0.38}$ & $94.72{\tiny\pm0.11}$ & $45.51{\tiny\pm0.29}$ & $13.04{\tiny\pm0.12}$ & $0.7751{\tiny\pm0.0022}$ & $0.7767{\tiny\pm0.0044}$ \\
+ KL penalty & $94.39{\tiny\pm0.06}$ & $30.11{\tiny\pm0.36}$ & $94.47{\tiny\pm0.17}$ & $45.08{\tiny\pm0.18}$ & $13.04{\tiny\pm0.07}$ & $0.7806{\tiny\pm0.0026}$ & $0.7781{\tiny\pm0.0054}$ \\
+ Gradient diff. & $94.77{\tiny\pm0.09}$ & $29.77{\tiny\pm0.35}$ & $\bf{96.46{\tiny\pm0.17}}$ & $45.71{\tiny\pm0.32}$ & $13.31{\tiny\pm0.18}$ & $0.7654{\tiny\pm0.0025}$ & $0.7814{\tiny\pm0.0034}$ \\
\midrule

\multicolumn{8}{l}{\textit{NPO variants}} \\
TabNPO (base) & $81.41{\tiny\pm0.05}$ & $30.75{\tiny\pm0.28}$ & $87.49{\tiny\pm0.42}$ & $42.93{\tiny\pm0.36}$ & $11.66{\tiny\pm0.36}$ & $0.8351{\tiny\pm0.0055}$ & $0.7781{\tiny\pm0.0065}$ \\
+ KL penalty & $82.01{\tiny\pm0.06}$ & $30.07{\tiny\pm0.35}$ & $87.23{\tiny\pm0.29}$ & $42.14{\tiny\pm0.41}$ & $11.69{\tiny\pm0.45}$ & $0.8393{\tiny\pm0.0037}$ & $0.7732{\tiny\pm0.0062}$ \\
+ Gradient diff. & $82.36{\tiny\pm0.03}$ & $30.22{\tiny\pm0.66}$ & $87.95{\tiny\pm0.14}$ & $42.62{\tiny\pm0.28}$ & $11.86{\tiny\pm0.38}$ & $0.8349{\tiny\pm0.0031}$ & $0.7756{\tiny\pm0.0045}$ \\ 
\midrule

\multicolumn{8}{l}{\textit{KTO variants}} \\
TabKTO (base) & $81.30{\tiny\pm0.08}$ & $30.18{\tiny\pm0.59}$ & $87.44{\tiny\pm0.32}$ & $42.91{\tiny\pm0.22}$ & $11.62{\tiny\pm0.43}$ & $0.8410{\tiny\pm0.0030}$ & $0.7727{\tiny\pm0.0042}$ \\
+ Logsigmoid & $81.12{\tiny\pm0.08}$ & $30.42{\tiny\pm0.32}$ & $87.06{\tiny\pm0.21}$ & $42.61{\tiny\pm0.47}$ & $11.63{\tiny\pm0.22}$ & $0.8411{\tiny\pm0.0018}$ & $0.7775{\tiny\pm0.0053}$ \\
+ Logs. + Grad. diff & $81.09{\tiny\pm0.03}$ & $30.18{\tiny\pm0.14}$ & $87.29{\tiny\pm0.26}$ & $42.51{\tiny\pm0.31}$ & $11.68{\tiny\pm0.46}$ & $0.8392{\tiny\pm0.0036}$ & $\bf{0.7840{\tiny\pm0.0022}}$ \\
\midrule

\multicolumn{8}{l}{\textit{GRAA variants (Ours)}} \\
TabGRAA (base) & $94.64{\tiny\pm0.03}$ & $30.62{\tiny\pm0.45}$ & $94.75{\tiny\pm0.28}$ & $45.45{\tiny\pm0.16}$ & $13.36{\tiny\pm0.45}$ & $0.7823{\tiny\pm0.0035}$ & $0.7752{\tiny\pm0.0006}$ \\
+ Logsigmoid & $94.00{\tiny\pm0.03}$ & $30.70{\tiny\pm0.46}$ & $96.25{\tiny\pm0.13}$ & $45.15{\tiny\pm0.55}$ & $12.91{\tiny\pm0.25}$ & $0.7942{\tiny\pm0.0026}$ & $0.7774{\tiny\pm0.0022}$ \\
+ Logs. + Grad. diff & $94.57{\tiny\pm0.08}$ & $\bf{30.91{\tiny\pm0.56}}$ & $95.05{\tiny\pm0.21}$ & $\bf{46.47{\tiny\pm0.35}}$ & $\bf{13.46{\tiny\pm0.38}}$ & $\bf{0.7608{\tiny\pm0.0018}}$ & $0.7822{\tiny\pm0.0042}$ \\

\bottomrule[1pt]
\end{tabular}}
\end{threeparttable}
\end{table*}

\begin{table*}[h]
\centering
\caption{
Performance comparison on Shoppers dataset. Results show methods' performance on the Shoppers dataset with best results highlighted in \textbf{bold}.}
\label{tbl:exp-shoppers-formatted}
\begin{threeparttable}
\resizebox{\textwidth}{!}{
\begin{tabular}{lccccccc}
\toprule[1pt]
\textbf{Method} & \textbf{CDE$\uparrow$} & \textbf{PCC$\uparrow$} & \textbf{$\alpha$$\uparrow$} & \textbf{$\beta$$\uparrow$} & \textbf{C2ST$\uparrow$} & \textbf{DA $\to 0.5$} & \textbf{MLE$\uparrow$} \\
\midrule

\multicolumn{8}{l}{\textit{Baselines}} \\
GReaT (Original) & $85.58{\tiny\pm0.03}$ & $55.05{\tiny\pm0.24}$ & $79.10{\tiny\pm0.45}$ & $45.29{\tiny\pm0.52}$ & $14.26{\tiny\pm0.34}$ & $0.8327{\tiny\pm0.0055}$ & $0.9012{\tiny\pm0.0068}$ \\
GReaT-FT+ & $86.10{\tiny\pm0.06}$ & $54.47{\tiny\pm0.24}$ & $90.67{\tiny\pm0.35}$ & $43.56{\tiny\pm0.46}$ & $14.17{\tiny\pm0.54}$ & $0.8446{\tiny\pm0.0033}$ & $0.8867{\tiny\pm0.0038}$ \\
\midrule

\multicolumn{8}{l}{\textit{DPO variants}} \\
TabDPO (base) & $88.25{\tiny\pm0.04}$ & $54.55{\tiny\pm0.32}$ & $94.68{\tiny\pm0.52}$ & $47.59{\tiny\pm0.72}$ & $16.04{\tiny\pm1.04}$ & $0.7680{\tiny\pm0.0756}$ & $0.9063{\tiny\pm0.0045}$ \\
+ KL penalty & $87.64{\tiny\pm0.09}$ & $53.96{\tiny\pm0.21}$ & $97.91{\tiny\pm0.51}$ & $47.30{\tiny\pm1.07}$ & $15.57{\tiny\pm1.22}$ & $0.7604{\tiny\pm0.0799}$ & $0.9045{\tiny\pm0.0035}$ \\
+ Gradient diff. & $88.58{\tiny\pm0.04}$ & $54.09{\tiny\pm0.95}$ & $95.84{\tiny\pm0.75}$ & $48.44{\tiny\pm0.88}$ & $16.23{\tiny\pm1.04}$ & $0.7536{\tiny\pm0.0822}$ & $0.9084{\tiny\pm0.0056}$ \\
\midrule

\multicolumn{8}{l}{\textit{NPO variants}} \\
TabNPO (base) & $87.83{\tiny\pm0.04}$ & $54.90{\tiny\pm0.68}$ & $76.99{\tiny\pm0.32}$ & $46.85{\tiny\pm0.25}$ & $16.01{\tiny\pm0.38}$ & $0.8023{\tiny\pm0.0046}$ & $0.9066{\tiny\pm0.0044}$ \\
+ KL penalty & $87.29{\tiny\pm0.03}$ & $52.12{\tiny\pm0.10}$ & $77.75{\tiny\pm0.15}$ & $47.45{\tiny\pm0.42}$ & $15.77{\tiny\pm0.22}$ & $0.8016{\tiny\pm0.0058}$ & $0.9095{\tiny\pm0.0062}$ \\
+ Gradient diff. & $88.34{\tiny\pm0.08}$ & $53.25{\tiny\pm0.03}$ & $76.08{\tiny\pm0.85}$ & $48.91{\tiny\pm0.73}$ & $16.42{\tiny\pm0.47}$ & $0.7991{\tiny\pm0.0063}$ & $0.9005{\tiny\pm0.0052}$ \\ 
\midrule

\multicolumn{8}{l}{\textit{KTO variants}} \\
TabKTO (base) & $86.78{\tiny\pm0.05}$ & $53.90{\tiny\pm0.46}$ & $78.37{\tiny\pm0.25}$ & $47.42{\tiny\pm0.31}$ & $15.20{\tiny\pm0.32}$ & $0.8068{\tiny\pm0.0045}$ & $0.9028{\tiny\pm0.0027}$ \\
+ Logsigmoid & $86.82{\tiny\pm0.03}$ & $53.49{\tiny\pm0.18}$ & $78.49{\tiny\pm0.31}$ & $47.24{\tiny\pm0.27}$ & $15.16{\tiny\pm0.23}$ & $0.8040{\tiny\pm0.0045}$ & $0.9065{\tiny\pm0.0033}$ \\
+ Logs. + Grad. diff & $86.74{\tiny\pm0.07}$ & $53.99{\tiny\pm0.14}$ & $77.71{\tiny\pm0.43}$ & $47.22{\tiny\pm0.28}$ & $15.04{\tiny\pm0.42}$ & $0.8087{\tiny\pm0.0024}$ & $0.9104{\tiny\pm0.0037}$ \\
\midrule

\multicolumn{8}{l}{\textit{GRAA variants (Ours)}} \\
TabGRAA (base) & $\bf{90.37{\tiny\pm0.04}}$ & $54.13{\tiny\pm0.09}$ & $94.71{\tiny\pm0.26}$ & $\bf{50.22{\tiny\pm0.48}}$ & $\bf{17.13{\tiny\pm0.19}}$ & $\bf{0.7274{\tiny\pm0.0015}}$ & $0.9078{\tiny\pm0.0036}$ \\
+ Logsigmoid & $87.64{\tiny\pm0.04}$ & $\bf{55.21{\tiny\pm0.11}}$ & $\bf{99.61{\tiny\pm0.46}}$ & $47.00{\tiny\pm0.42}$ & $15.65{\tiny\pm0.53}$ & $0.7391{\tiny\pm0.0018}$ & $\bf{0.9110{\tiny\pm0.0095}}$ \\
+ Logs. + Grad. diff & $87.98{\tiny\pm0.08}$ & $53.65{\tiny\pm0.06}$ & $96.71{\tiny\pm0.18}$ & $48.90{\tiny\pm0.36}$ & $15.87{\tiny\pm0.28}$ & $0.7463{\tiny\pm0.0042}$ & $0.9045{\tiny\pm0.0021}$ \\

\bottomrule[1pt]
\end{tabular}}
\end{threeparttable}
\end{table*}

\begin{table*}[h!]
\centering
\caption{
Performance comparison on Magic dataset. Results show methods' performance on the Magic dataset with best results highlighted in \textbf{bold}.}
\label{tbl:exp-magic-formatted}
\begin{threeparttable}
\resizebox{\textwidth}{!}{
\begin{tabular}{lccccccc}
\toprule[1pt]
\textbf{Method} & \textbf{CDE$\uparrow$} & \textbf{PCC$\uparrow$} & \textbf{$\alpha$$\uparrow$} & \textbf{$\beta$$\uparrow$} & \textbf{C2ST$\uparrow$} & \textbf{DA $\to 0.5$} & \textbf{MLE$\uparrow$} \\
\midrule

\multicolumn{8}{l}{\textit{Baselines}} \\
GReaT (Original) & $83.97{\tiny\pm0.42}$ & $89.51{\tiny\pm0.54}$ & $87.26{\tiny\pm0.61}$ & $39.30{\tiny\pm0.47}$ & $45.35{\tiny\pm0.68}$ & $0.8768{\tiny\pm0.0038}$ & $0.9008{\tiny\pm0.0034}$ \\
GReaT-FT+ & $85.19{\tiny\pm0.11}$ & $89.44{\tiny\pm0.30}$ & $89.44{\tiny\pm0.40}$ & $39.96{\tiny\pm0.25}$ & $50.66{\tiny\pm0.63}$ & $0.8599{\tiny\pm0.0025}$ & $0.9009{\tiny\pm0.0058}$ \\
\midrule

\multicolumn{8}{l}{\textit{DPO variants}} \\
TabDPO (base) & $95.14{\tiny\pm0.10}$ & $89.36{\tiny\pm1.82}$ & $93.74{\tiny\pm0.12}$ & $47.00{\tiny\pm0.29}$ & $\bf{85.50{\tiny\pm0.49}}$ & $0.7099{\tiny\pm0.0047}$ & $0.9087{\tiny\pm0.0068}$ \\
+ KL penalty & $94.88{\tiny\pm0.09}$ & $87.20{\tiny\pm1.60}$ & $93.04{\tiny\pm0.26}$ & $46.85{\tiny\pm0.46}$ & $84.39{\tiny\pm0.65}$ & $0.7094{\tiny\pm0.0046}$ & $0.9091{\tiny\pm0.0059}$ \\
+ Gradient diff. & $94.99{\tiny\pm0.14}$ & $89.02{\tiny\pm1.78}$ & $94.15{\tiny\pm0.25}$ & $46.87{\tiny\pm0.49}$ & $84.10{\tiny\pm0.95}$ & $0.7086{\tiny\pm0.0051}$ & $0.9089{\tiny\pm0.0031}$ \\
\midrule

\multicolumn{8}{l}{\textit{NPO variants}} \\
TabNPO (base) & $88.65{\tiny\pm0.08}$ & $87.91{\tiny\pm1.91}$ & $90.27{\tiny\pm0.62}$ & $45.09{\tiny\pm0.85}$ & $60.64{\tiny\pm0.37}$ & $0.7900{\tiny\pm0.0081}$ & $0.9084{\tiny\pm0.0025}$ \\
+ KL penalty & $89.26{\tiny\pm0.11}$ & $79.06{\tiny\pm2.32}$ & $88.05{\tiny\pm0.48}$ & $45.32{\tiny\pm0.53}$ & $68.59{\tiny\pm0.45}$ & $0.7765{\tiny\pm0.0033}$ & $0.9066{\tiny\pm0.0051}$ \\
+ Gradient diff. & $89.01{\tiny\pm0.06}$ & $80.51{\tiny\pm2.34}$ & $87.24{\tiny\pm0.41}$ & $44.28{\tiny\pm0.34}$ & $62.94{\tiny\pm0.89}$ & $0.7716{\tiny\pm0.0067}$ & $0.9069{\tiny\pm0.0012}$ \\ 
\midrule

\multicolumn{8}{l}{\textit{KTO variants}} \\
TabKTO (base) & $86.26{\tiny\pm0.09}$ & $87.75{\tiny\pm1.26}$ & $87.72{\tiny\pm0.25}$ & $42.83{\tiny\pm0.32}$ & $52.90{\tiny\pm0.24}$ & $0.8087{\tiny\pm0.0037}$ & $0.9051{\tiny\pm0.0042}$ \\
+ Logsigmoid & $86.39{\tiny\pm0.08}$ & $86.11{\tiny\pm0.35}$ & $87.24{\tiny\pm0.31}$ & $42.51{\tiny\pm0.17}$ & $54.79{\tiny\pm0.37}$ & $0.8067{\tiny\pm0.0044}$ & $0.9053{\tiny\pm0.0052}$ \\
+ Logs. + Grad. diff & $86.33{\tiny\pm0.05}$ & $87.01{\tiny\pm0.32}$ & $86.86{\tiny\pm0.29}$ & $42.85{\tiny\pm0.44}$ & $54.33{\tiny\pm0.23}$ & $0.8074{\tiny\pm0.0041}$ & $0.9057{\tiny\pm0.0037}$ \\
\midrule

\multicolumn{8}{l}{\textit{GRAA variants (Ours)}} \\
TabGRAA (base) & $\bf{95.58{\tiny\pm0.25}}$ & $89.76{\tiny\pm1.54}$ & $\bf{95.26{\tiny\pm0.47}}$ & $\bf{47.48{\tiny\pm0.53}}$ & $83.25{\tiny\pm0.68}$ & $\bf{0.7034{\tiny\pm0.0055}}$ & $0.9087{\tiny\pm0.0046}$ \\
+ Logsigmoid & $94.73{\tiny\pm0.08}$ & $\bf{92.12{\tiny\pm2.41}}$ & $95.22{\tiny\pm0.37}$ & $47.06{\tiny\pm0.85}$ & $82.94{\tiny\pm0.55}$ & $0.7085{\tiny\pm0.0057}$ & $0.9068{\tiny\pm0.0044}$ \\
+ Logs. + Grad. diff & $95.05{\tiny\pm0.10}$ & $84.36{\tiny\pm3.25}$ & $94.76{\tiny\pm0.43}$ & $47.19{\tiny\pm0.22}$ & $83.67{\tiny\pm0.31}$ & $0.7178{\tiny\pm0.0078}$ & $\bf{0.9093{\tiny\pm0.0052}}$ \\

\bottomrule[1pt]
\end{tabular}}
\end{threeparttable}
\end{table*}

\begin{table*}[h]
\centering
\caption{
Performance comparison on Beijing dataset. Results show methods' performance on the Beijing dataset with best results highlighted in \textbf{bold}.}
\label{tbl:exp-beijing-formatted}
\begin{threeparttable}
\resizebox{\textwidth}{!}{
\begin{tabular}{lccccccc}
\toprule[1pt]
\textbf{Method} & \textbf{CDE$\uparrow$} & \textbf{PCC$\uparrow$} & \textbf{$\alpha$$\uparrow$} & \textbf{$\beta$$\uparrow$} & \textbf{C2ST$\uparrow$} & \textbf{DA $\to 0.5$} & \textbf{MLE$\downarrow$} \\
\midrule

\multicolumn{8}{l}{\textit{Baselines}} \\
GReaT (Original) & $92.05{\tiny\pm0.06}$ & $40.61{\tiny\pm0.54}$ & $96.05{\tiny\pm0.55}$ & $57.71{\tiny\pm0.43}$ & $30.71{\tiny\pm0.52}$ & $0.7798{\tiny\pm0.0036}$ & $0.6628{\tiny\pm0.0047}$ \\
GReaT-FT+ & $93.18{\tiny\pm0.09}$ & $40.61{\tiny\pm0.85}$ & $94.27{\tiny\pm0.30}$ & $55.31{\tiny\pm0.76}$ & $30.82{\tiny\pm0.14}$ & $0.7695{\tiny\pm0.0035}$ & $0.6541{\tiny\pm0.0038}$ \\
\midrule

\multicolumn{8}{l}{\textit{DPO variants}} \\
TabDPO (base) & $97.58{\tiny\pm0.10}$ & $39.11{\tiny\pm1.09}$ & $98.14{\tiny\pm0.18}$ & $57.88{\tiny\pm0.23}$ & $31.56{\tiny\pm0.05}$ & $0.6944{\tiny\pm0.0034}$ & $0.5961{\tiny\pm0.0039}$ \\
+ KL penalty & $97.61{\tiny\pm0.03}$ & $38.86{\tiny\pm0.57}$ & $97.43{\tiny\pm0.25}$ & $58.25{\tiny\pm0.29}$ & $31.40{\tiny\pm0.04}$ & $0.7043{\tiny\pm0.0030}$ & $0.6085{\tiny\pm0.0042}$ \\
+ Gradient diff. & $97.82{\tiny\pm0.07}$ & $38.64{\tiny\pm0.97}$ & $98.23{\tiny\pm0.21}$ & $57.80{\tiny\pm0.17}$ & $31.42{\tiny\pm0.04}$ & $0.7093{\tiny\pm0.0029}$ & $0.5927{\tiny\pm0.0012}$ \\
\midrule

\multicolumn{8}{l}{\textit{NPO variants}} \\
TabNPO (base) & $91.65{\tiny\pm0.04}$ & $39.13{\tiny\pm0.98}$ & $98.54{\tiny\pm0.25}$ & $59.04{\tiny\pm0.72}$ & $29.90{\tiny\pm0.31}$ & $0.7504{\tiny\pm0.0035}$ & $0.6183{\tiny\pm0.0085}$ \\
+ KL penalty & $91.30{\tiny\pm0.04}$ & $38.98{\tiny\pm0.05}$ & $98.38{\tiny\pm0.16}$ & $58.35{\tiny\pm0.33}$ & $29.99{\tiny\pm0.24}$ & $0.7519{\tiny\pm0.0052}$ & $0.6232{\tiny\pm0.0022}$ \\
+ Gradient diff. & $91.41{\tiny\pm0.08}$ & $38.70{\tiny\pm0.06}$ & $98.43{\tiny\pm0.45}$ & $\bf{59.09{\tiny\pm0.37}}$ & $30.20{\tiny\pm0.53}$ & $0.7504{\tiny\pm0.0042}$ & $0.6118{\tiny\pm0.0033}$ \\ 
\midrule

\multicolumn{8}{l}{\textit{KTO variants}} \\
TabKTO (base) & $91.58{\tiny\pm0.06}$ & $38.95{\tiny\pm0.31}$ & $98.52{\tiny\pm0.22}$ & $59.03{\tiny\pm0.34}$ & $30.02{\tiny\pm0.42}$ & $0.7505{\tiny\pm0.0032}$ & $0.6379{\tiny\pm0.0037}$ \\
+ Logsigmoid & $91.59{\tiny\pm0.07}$ & $38.79{\tiny\pm0.35}$ & $98.42{\tiny\pm0.21}$ & $58.83{\tiny\pm0.17}$ & $30.01{\tiny\pm0.32}$ & $0.7495{\tiny\pm0.0065}$ & $0.6428{\tiny\pm0.0087}$ \\
+ Logs. + Grad. diff & $91.67{\tiny\pm0.08}$ & $38.86{\tiny\pm0.27}$ & $98.43{\tiny\pm0.28}$ & $58.87{\tiny\pm0.47}$ & $29.94{\tiny\pm0.29}$ & $0.7506{\tiny\pm0.0024}$ & $0.6315{\tiny\pm0.0025}$ \\
\midrule

\multicolumn{8}{l}{\textit{GRAA variants (Ours)}} \\
TabGRAA (base) & $97.61{\tiny\pm0.04}$ & $\bf{40.85{\tiny\pm0.24}}$ & $\bf{99.51{\tiny\pm0.26}}$ & $59.01{\tiny\pm0.33}$ & $\bf{31.67{\tiny\pm0.28}}$ & $\bf{0.6942{\tiny\pm0.0035}}$ & $\bf{0.5740{\tiny\pm0.0046}}$ \\
+ Logsigmoid & $\bf{98.98{\tiny\pm0.05}}$ & $39.85{\tiny\pm0.23}$ & $98.72{\tiny\pm0.27}$ & $58.69{\tiny\pm0.25}$ & $30.24{\tiny\pm0.35}$ & $0.6977{\tiny\pm0.0017}$ & $0.5832{\tiny\pm0.0034}$ \\
+ Logs. + Grad. diff & $98.79{\tiny\pm0.06}$ & $39.91{\tiny\pm0.16}$ & $98.83{\tiny\pm0.23}$ & $59.04{\tiny\pm0.42}$ & $31.12{\tiny\pm0.41}$ & $0.6971{\tiny\pm0.0033}$ & $0.5798{\tiny\pm0.0022}$ \\

\bottomrule[1pt]
\end{tabular}}
\end{threeparttable}
\end{table*}

\subsection{Per-round iterative trajectories}
\label{app:iterative}

Figures~\ref{fig:iterative_adult}--\ref{fig:iterative_beijing} show per-dataset trajectories across five post-training rounds. These figures complement the averaged trajectory in Figure~\ref{fig:iterative}. They show that TabGRAA generally improves rapidly in the first few rounds and remains more stable than adapted instance-level baselines. KTO shows instability in later rounds, consistent with the main text.

\begin{figure*}[h]
    \centering
    \footnotesize
    \begin{tabular}{@{}c@{\hspace{0.25em}}c@{\hspace{0.25em}}c@{\hspace{0.25em}}c@{}}
        \subfloat[CDE$\uparrow$]{\includegraphics[width=0.23\textwidth]{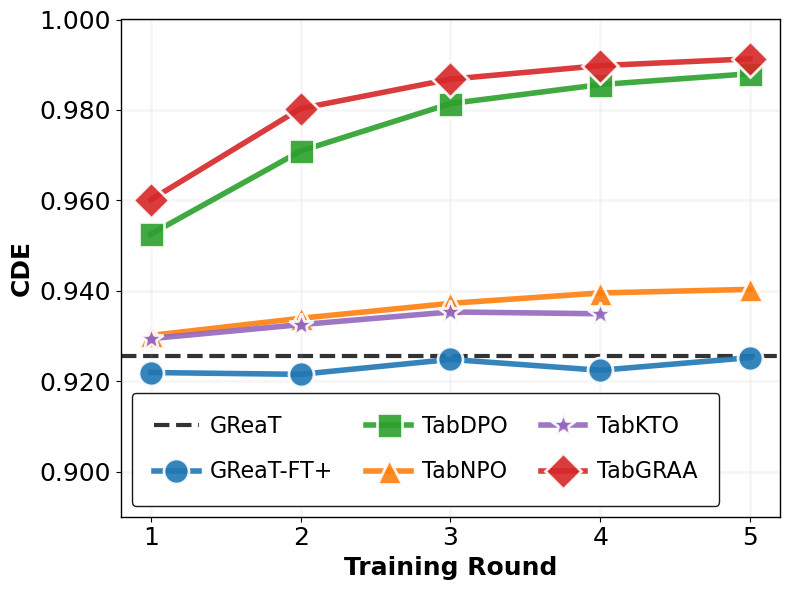}} & 
        \subfloat[PCC$\uparrow$]{\includegraphics[width=0.23\textwidth]{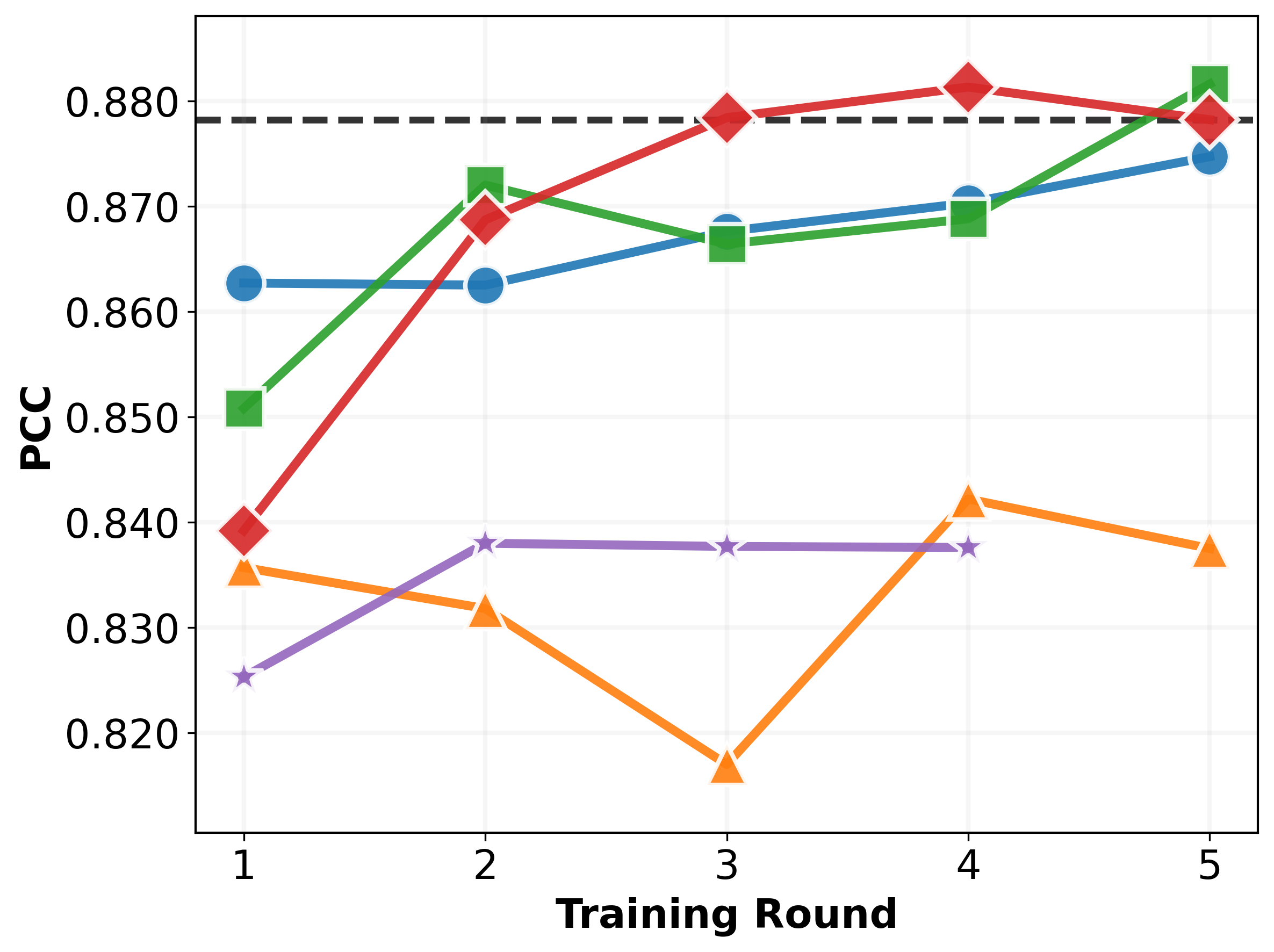}} & 
        \subfloat[$\alpha\uparrow$]{\includegraphics[width=0.23\textwidth]{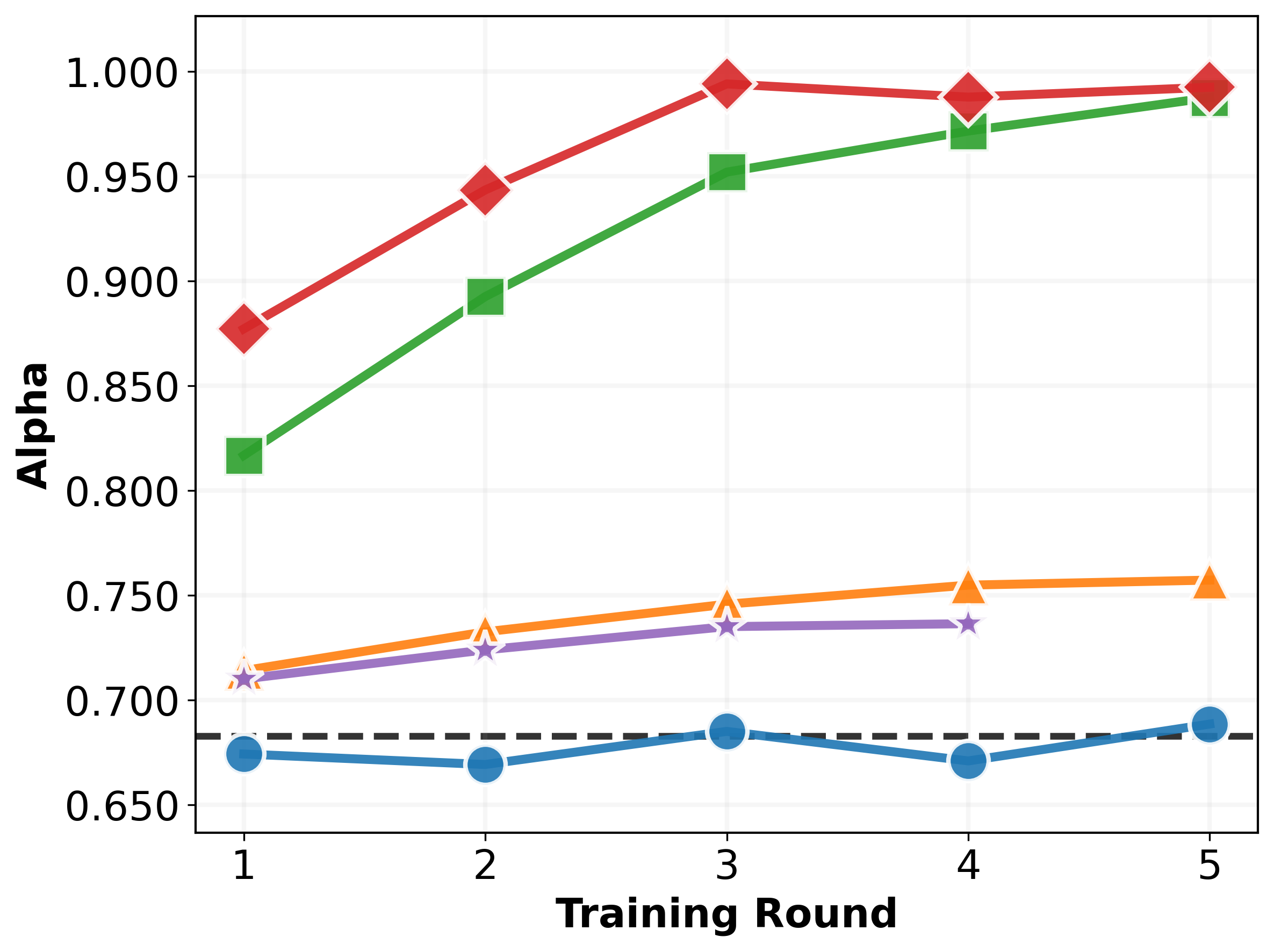}} &
        \subfloat[$\beta\uparrow$]{\includegraphics[width=0.23\textwidth]{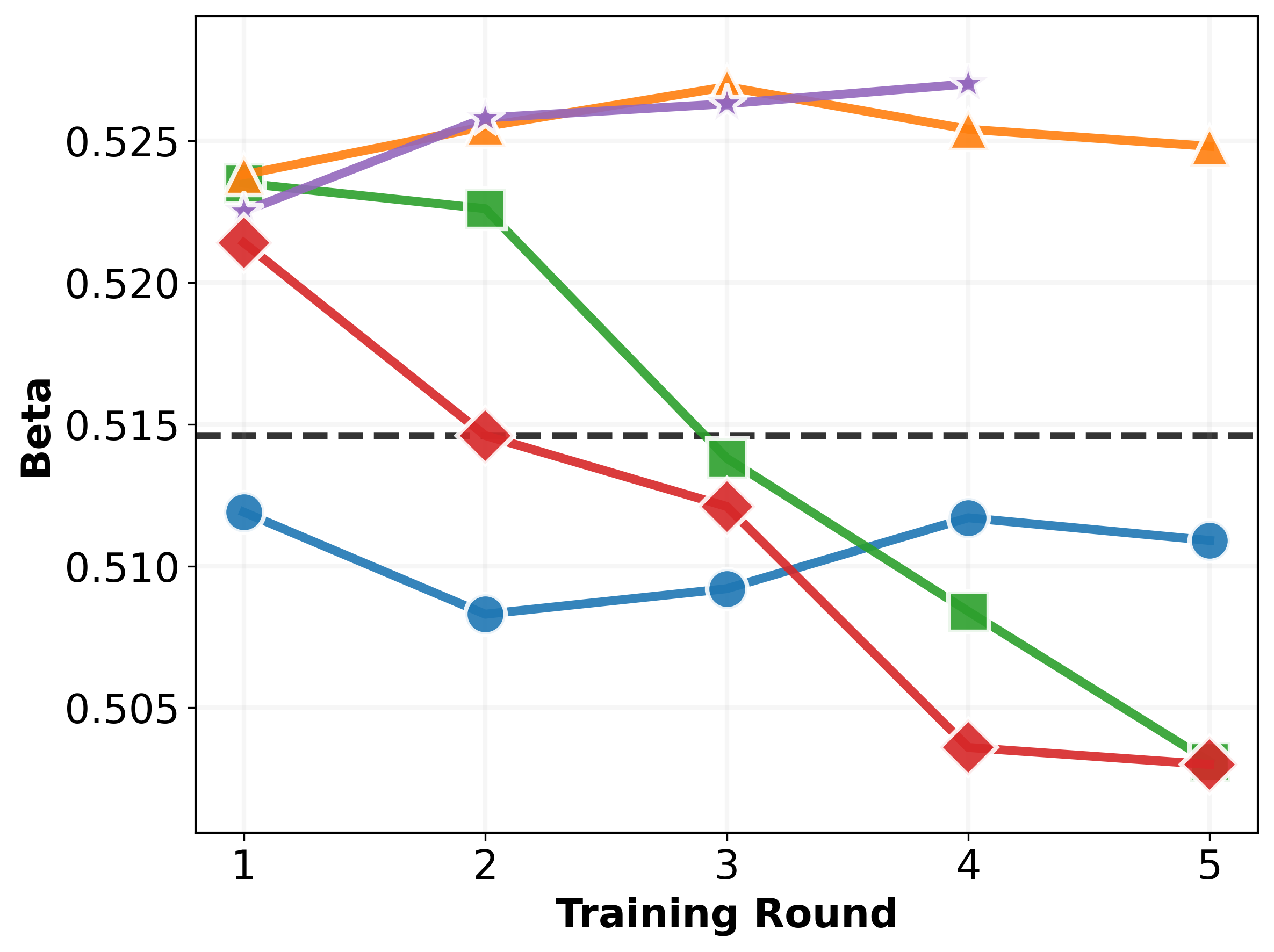}} \\ [1.5ex]
        \subfloat[C2ST$\uparrow$]{\includegraphics[width=0.23\textwidth]{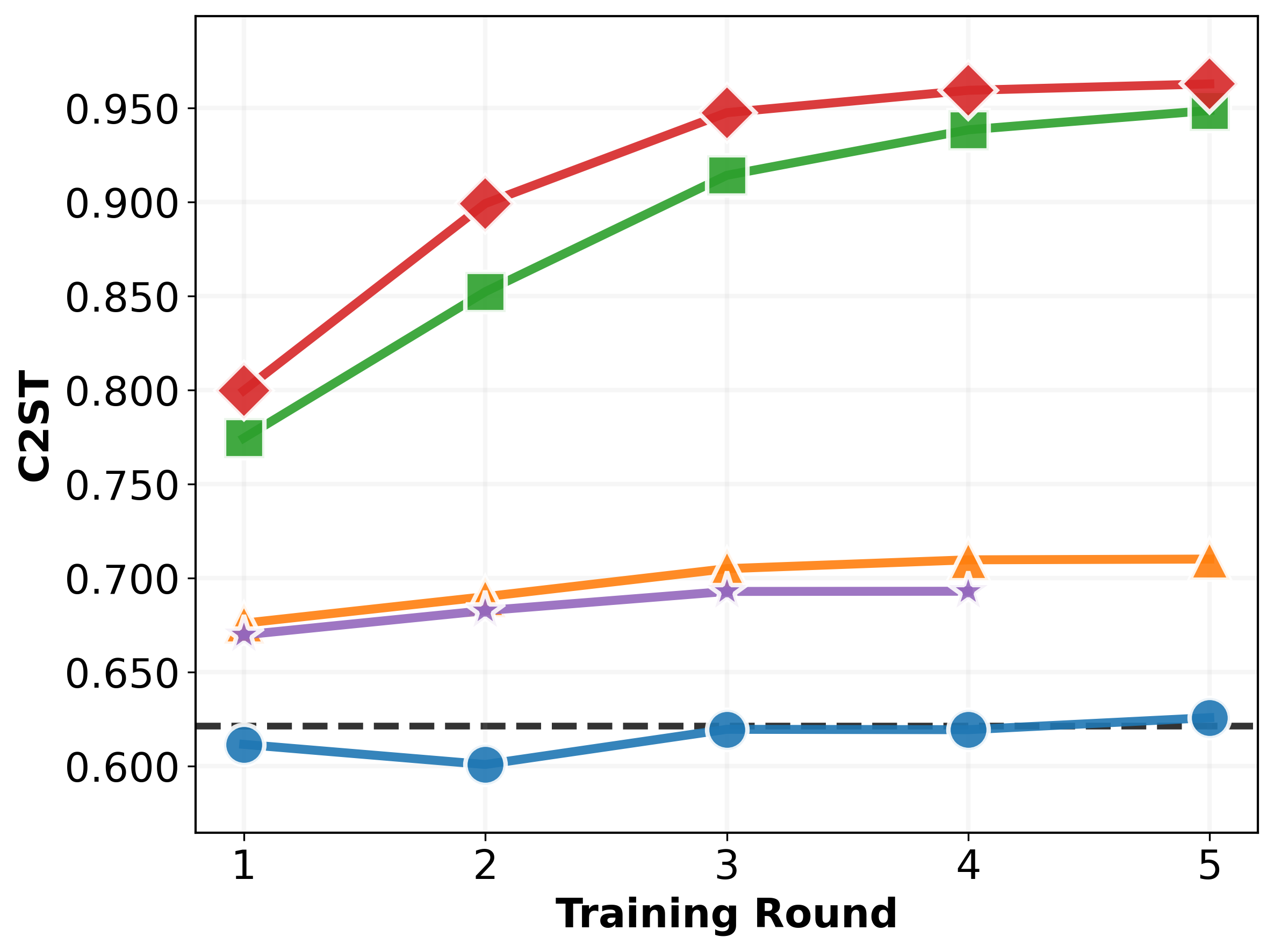}} &
        \subfloat[DA(AUC)$\to0.5$]{\includegraphics[width=0.23\textwidth]{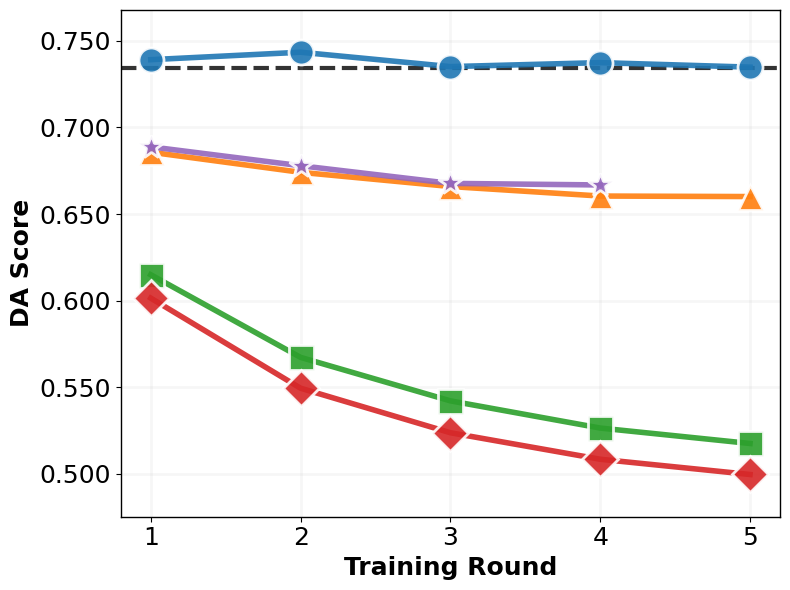}} &
        \subfloat[MLE$\uparrow$]{\includegraphics[width=0.23\textwidth]{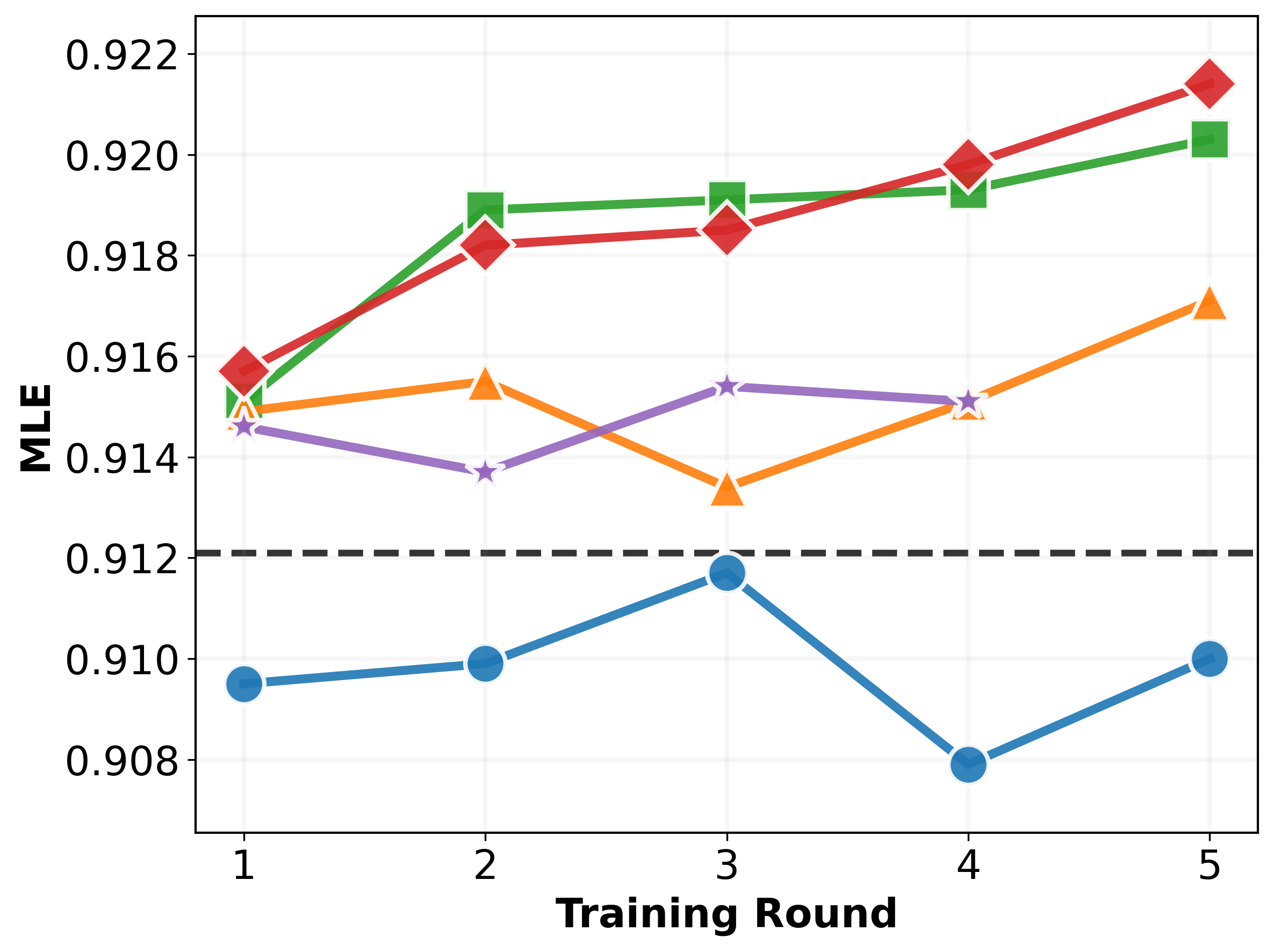}} &
        \\ [1.5ex]
    \end{tabular}
    \caption{Iterative performance progression across training rounds 1--5 on Adult.}
    \label{fig:iterative_adult}
\end{figure*}

\begin{figure*}[h!]
    \centering
    \footnotesize
    \begin{tabular}{@{}c@{\hspace{0.25em}}c@{\hspace{0.25em}}c@{\hspace{0.25em}}c@{}}
        \subfloat[CDE$\uparrow$]{\includegraphics[width=0.23\textwidth]{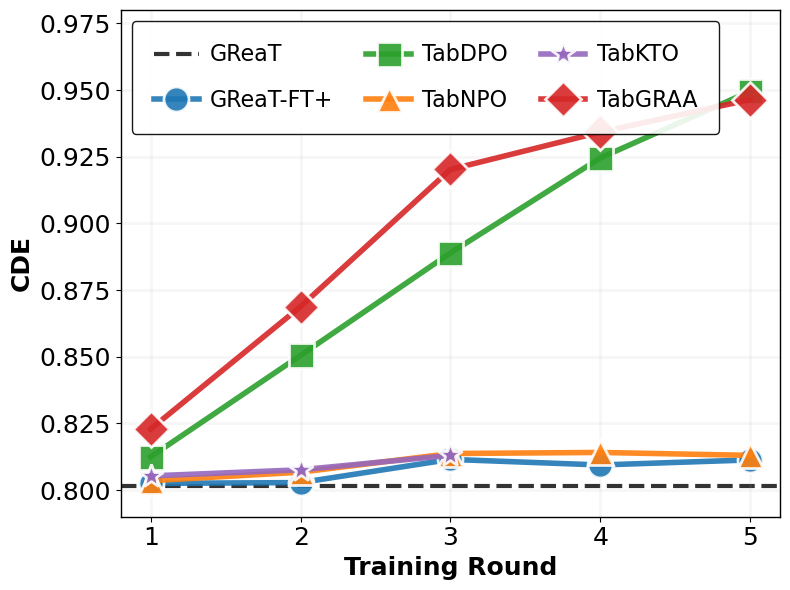}} & 
        \subfloat[PCC$\uparrow$]{\includegraphics[width=0.23\textwidth]{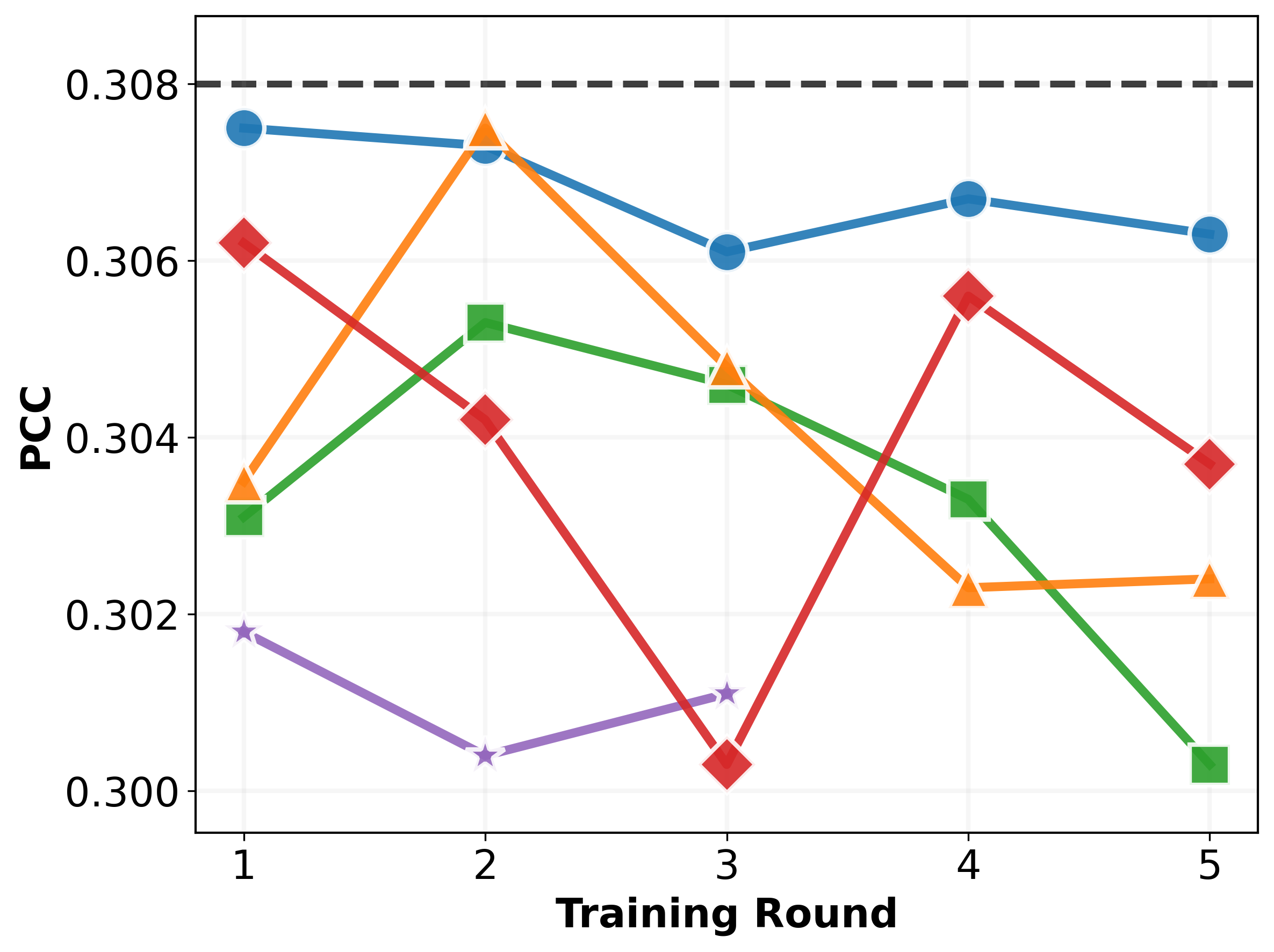}} & 
        \subfloat[$\alpha\uparrow$]{\includegraphics[width=0.23\textwidth]{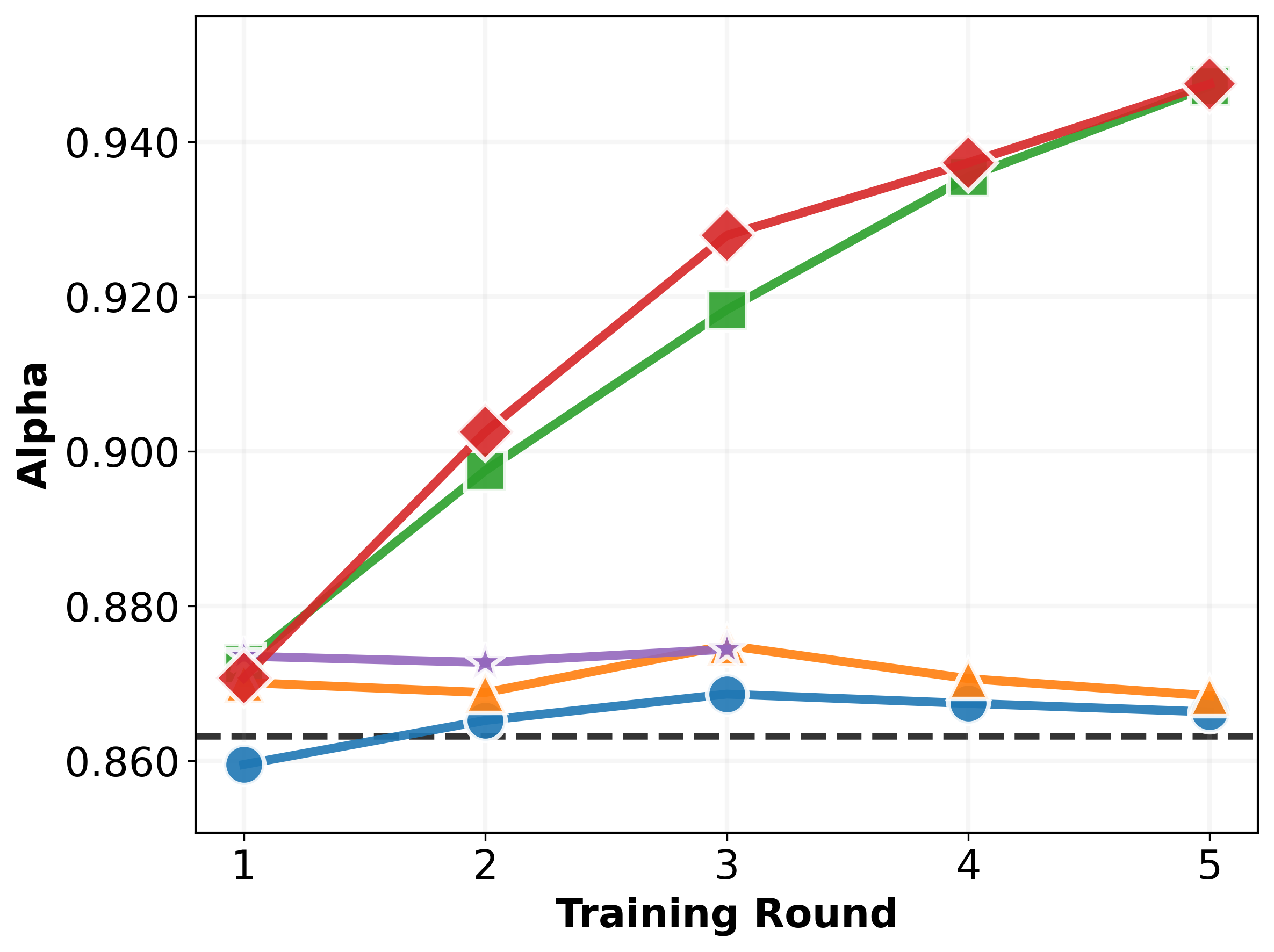}} &
        \subfloat[$\beta\uparrow$]{\includegraphics[width=0.23\textwidth]{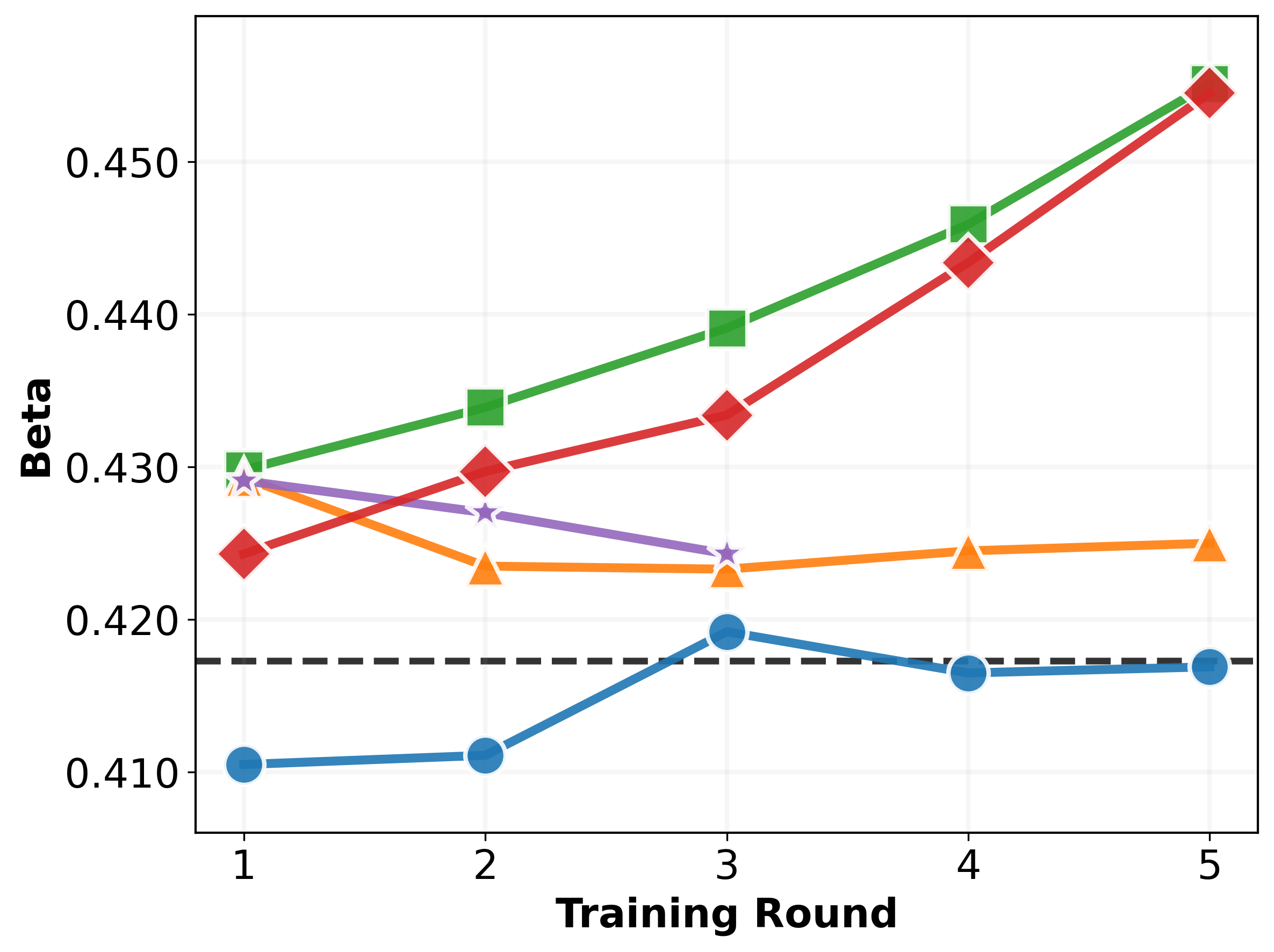}} \\ [1.5ex]
        \subfloat[C2ST$\uparrow$]{\includegraphics[width=0.23\textwidth]{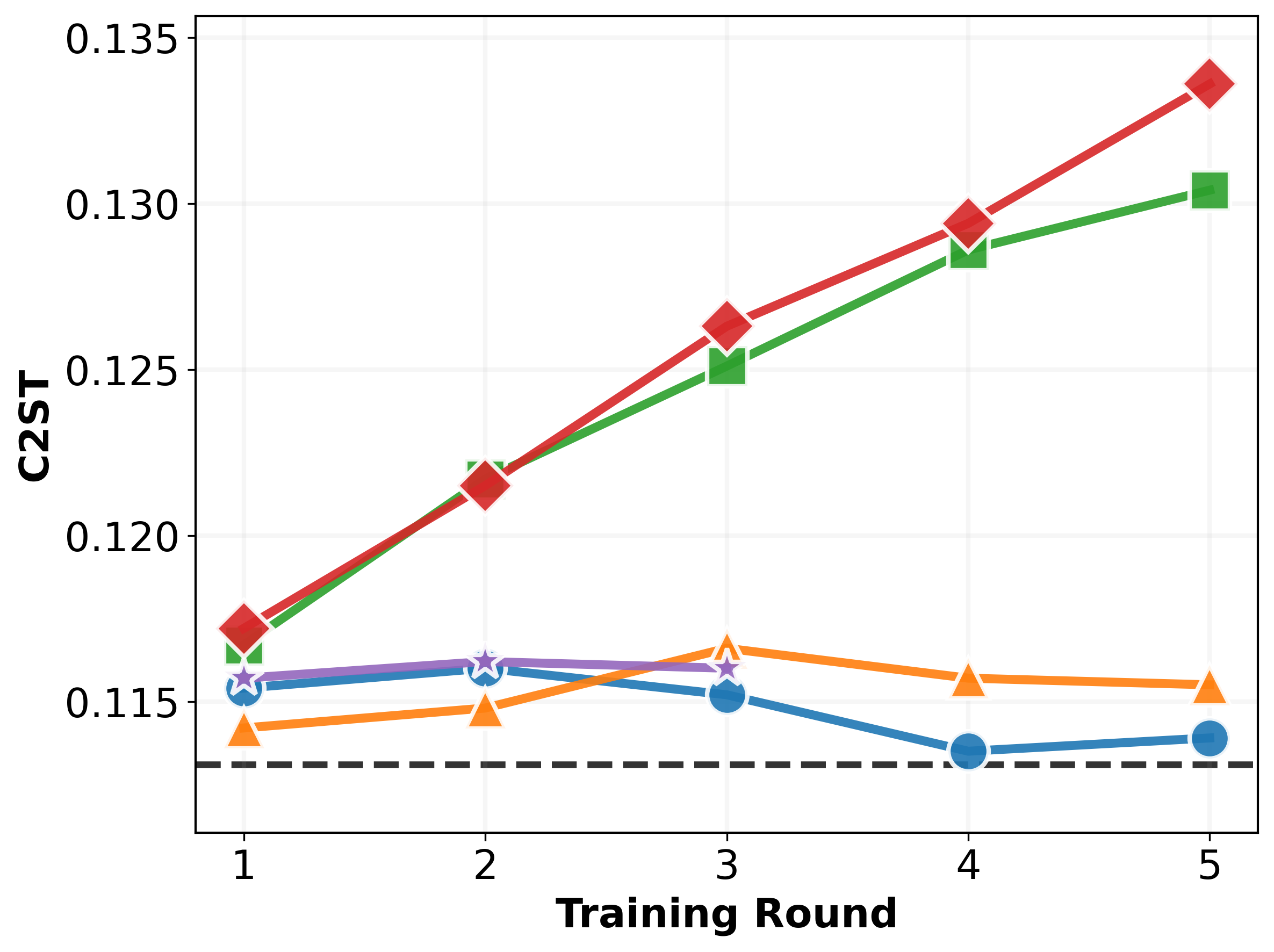}} &
        \subfloat[DA(AUC)$\to0.5$]{\includegraphics[width=0.23\textwidth]{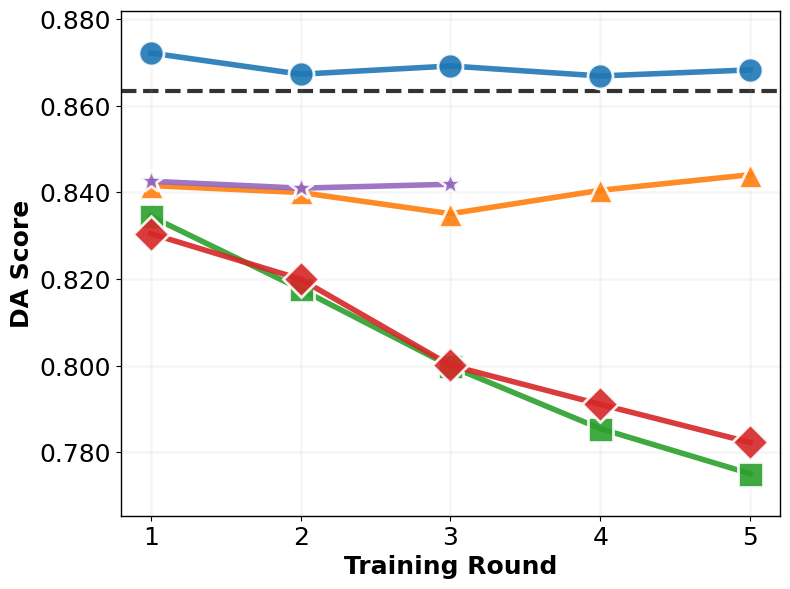}} &
        \subfloat[MLE$\uparrow$]{\includegraphics[width=0.23\textwidth]{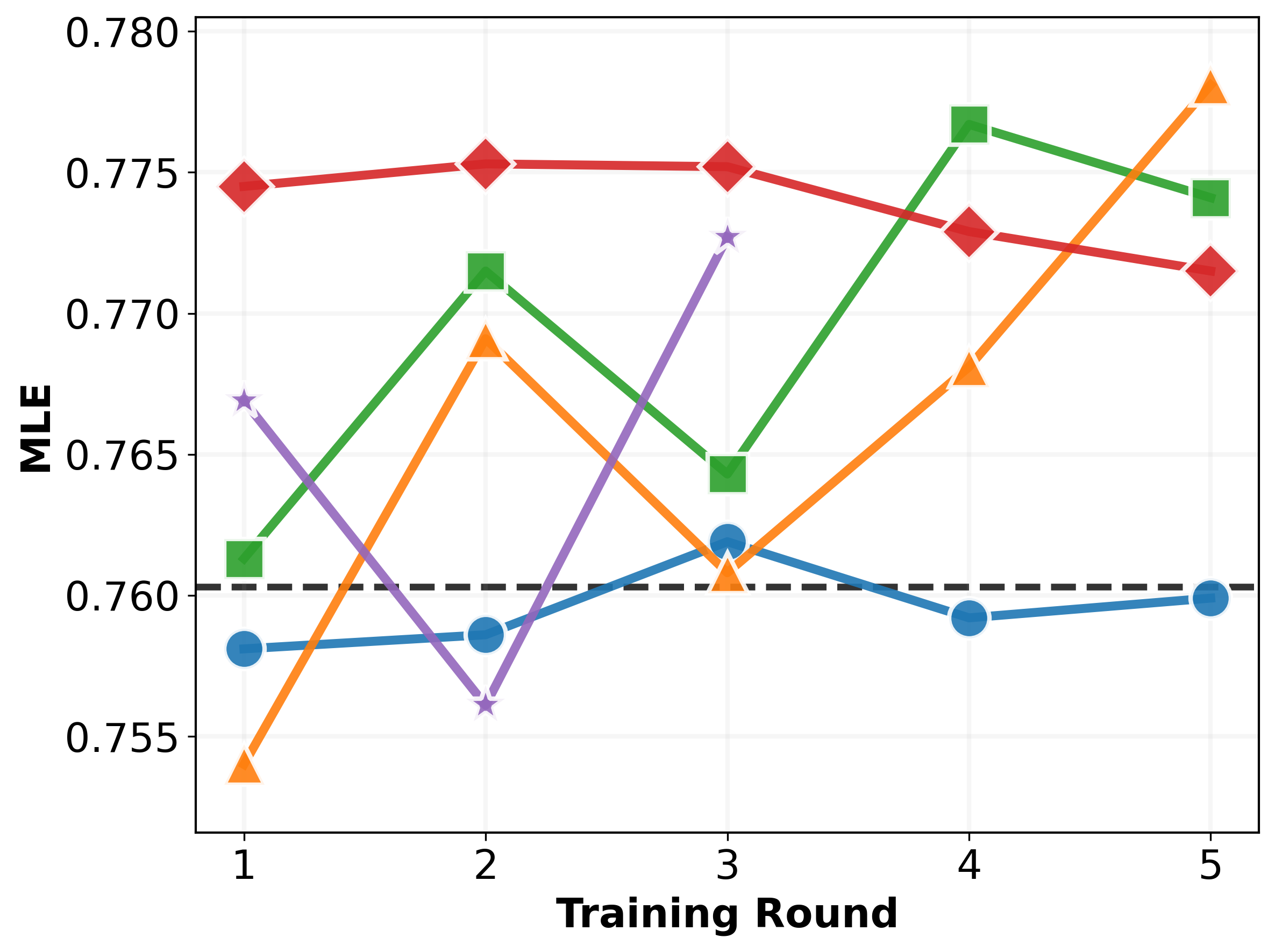}} &
        \\ [1.5ex]
    \end{tabular}
    \caption{Iterative performance progression across training rounds 1--5 on Default.}
    \label{fig:iterative_default}
\end{figure*}

\begin{figure*}[h]
    \centering
    \footnotesize
    \begin{tabular}{@{}c@{\hspace{0.25em}}c@{\hspace{0.25em}}c@{\hspace{0.25em}}c@{}}
        \subfloat[CDE$\uparrow$]{\includegraphics[width=0.23\textwidth]{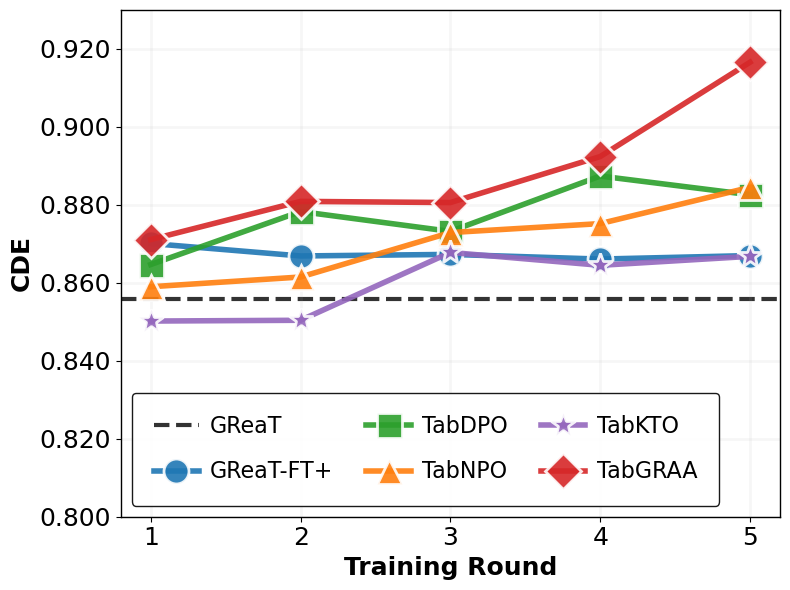}} & 
        \subfloat[PCC$\uparrow$]{\includegraphics[width=0.23\textwidth]{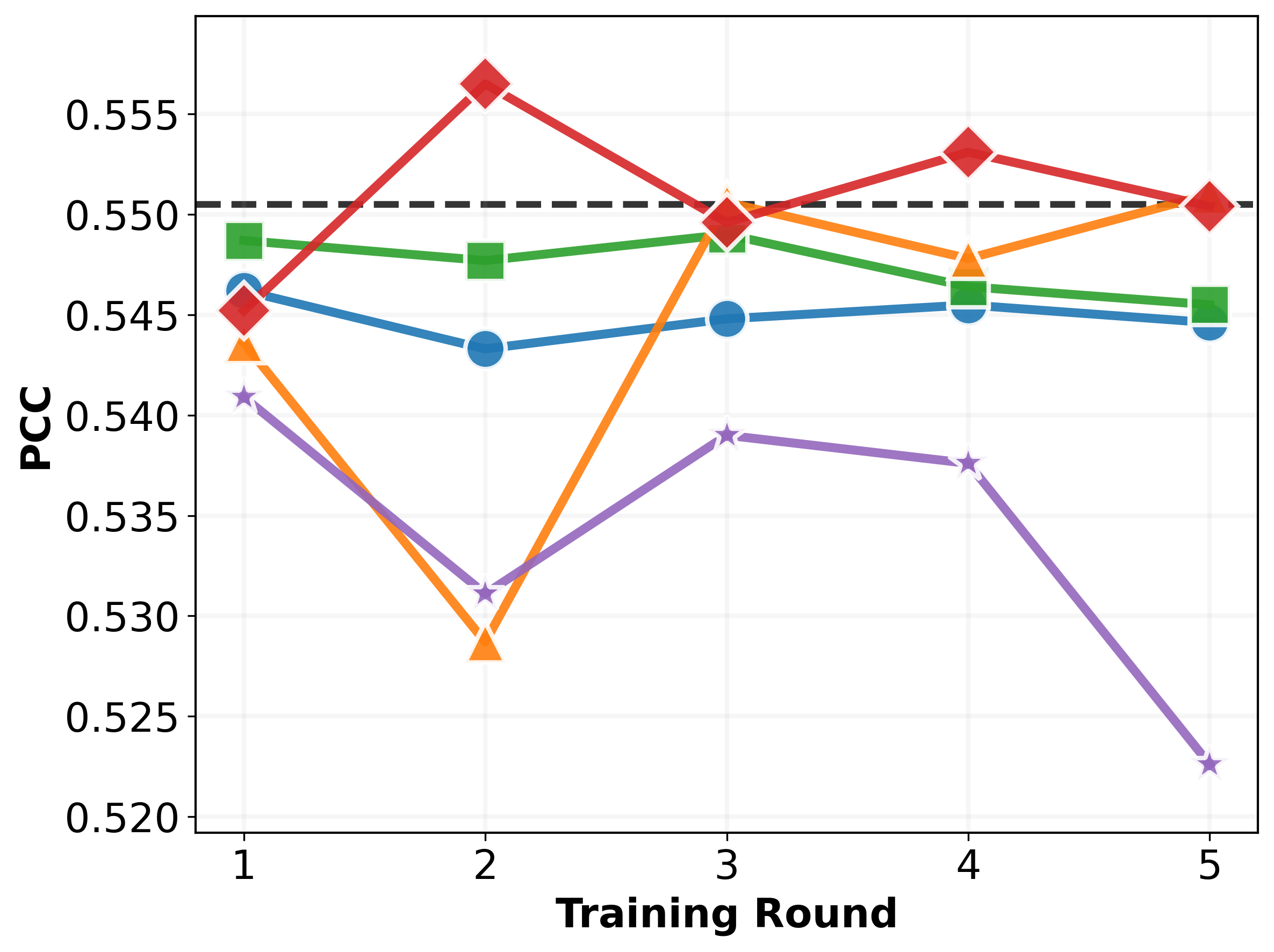}} & 
        \subfloat[$\alpha\uparrow$]{\includegraphics[width=0.23\textwidth]{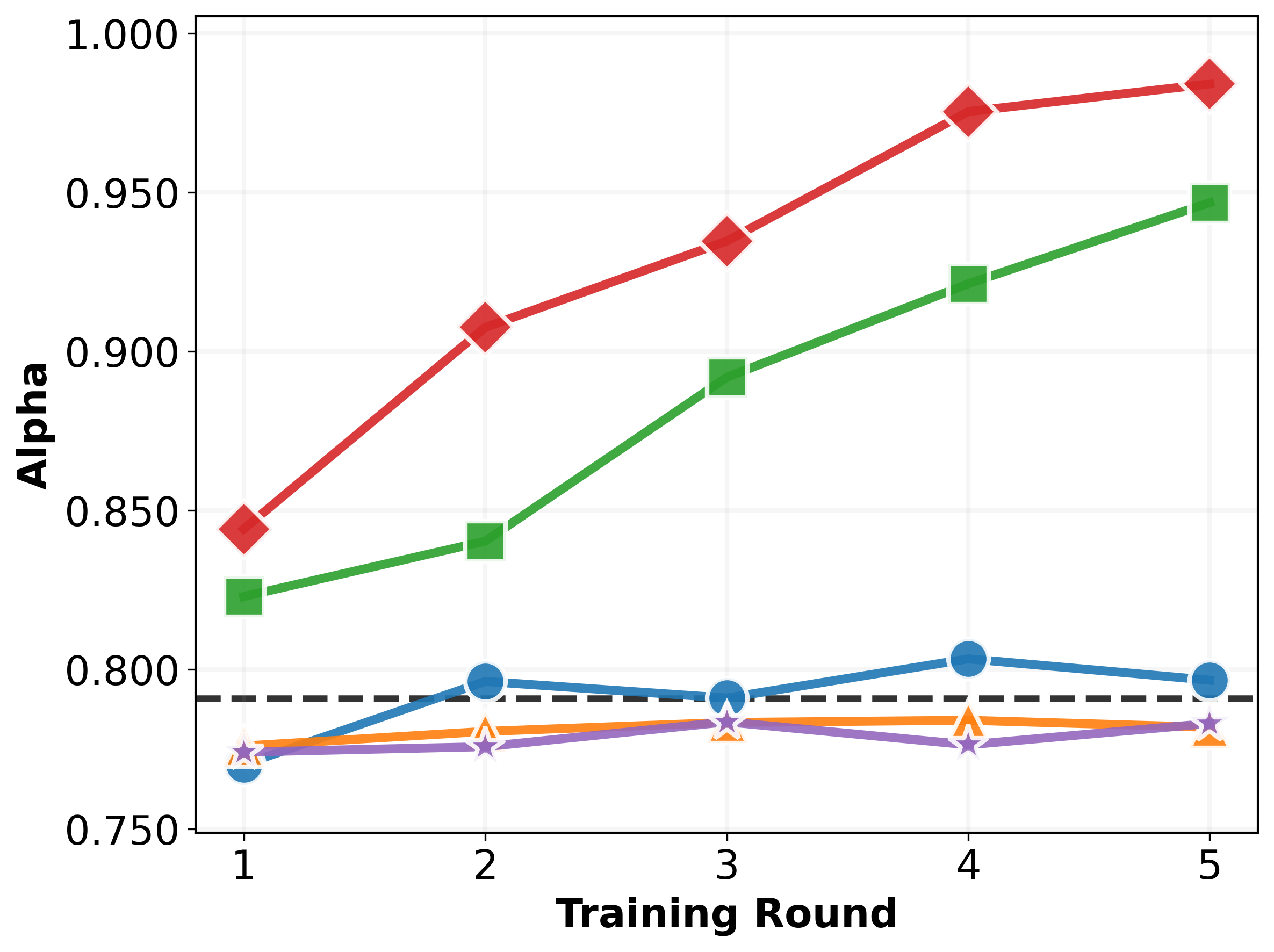}} &
        \subfloat[$\beta\uparrow$]{\includegraphics[width=0.23\textwidth]{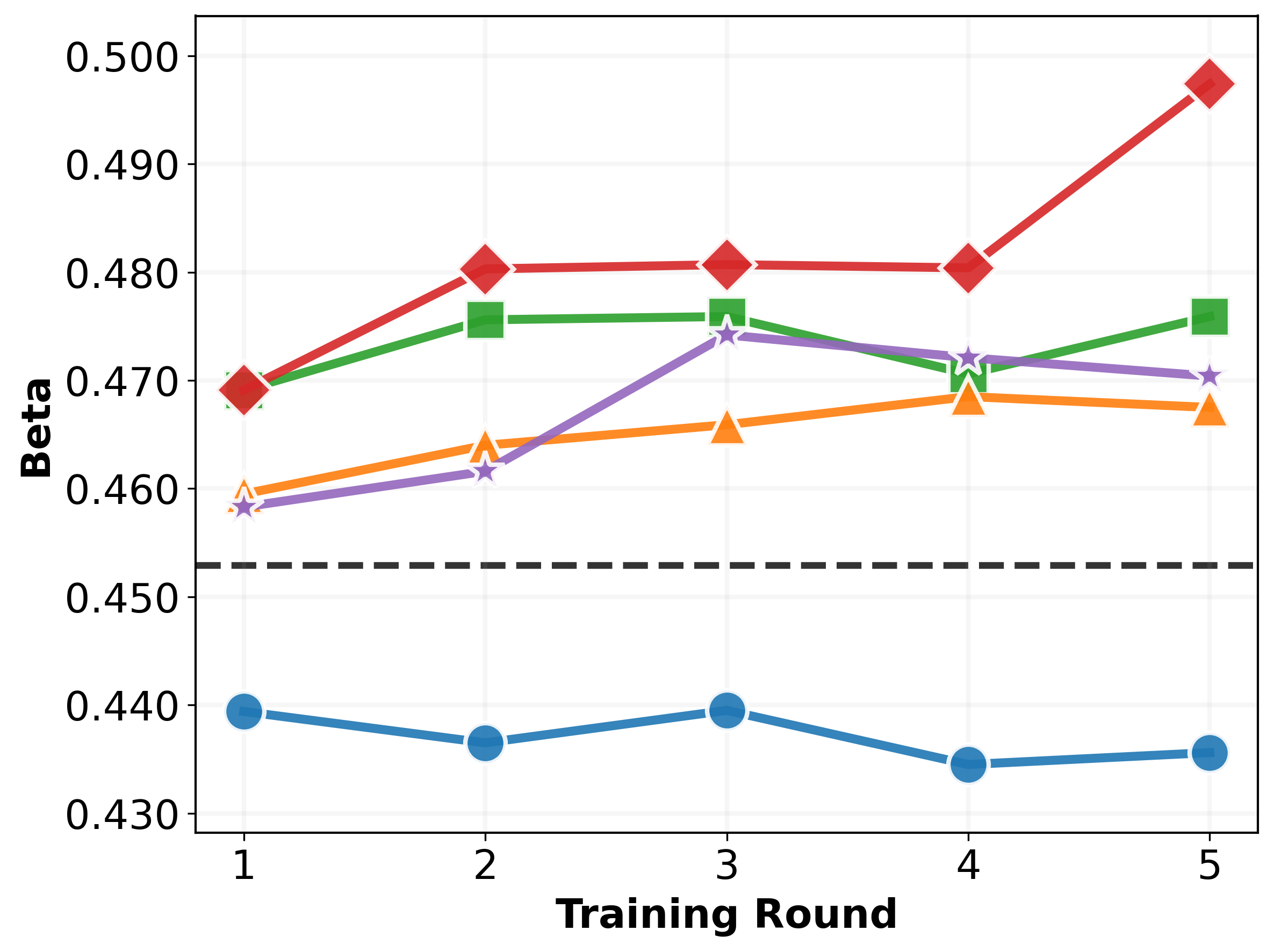}} \\ [1.5ex]
        \subfloat[C2ST$\uparrow$]{\includegraphics[width=0.23\textwidth]{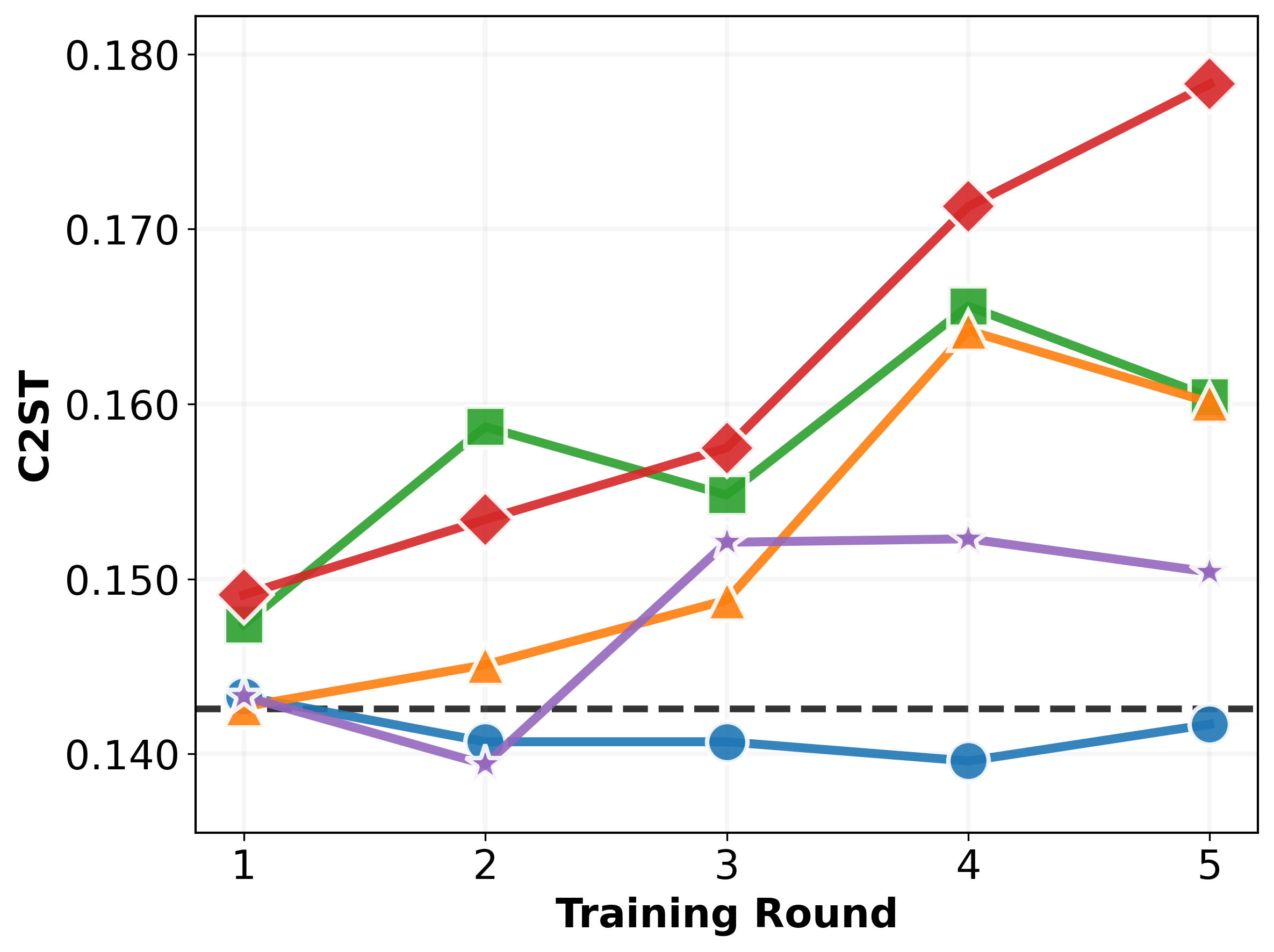}} &
        \subfloat[DA(AUC)$\to0.5$]{\includegraphics[width=0.23\textwidth]{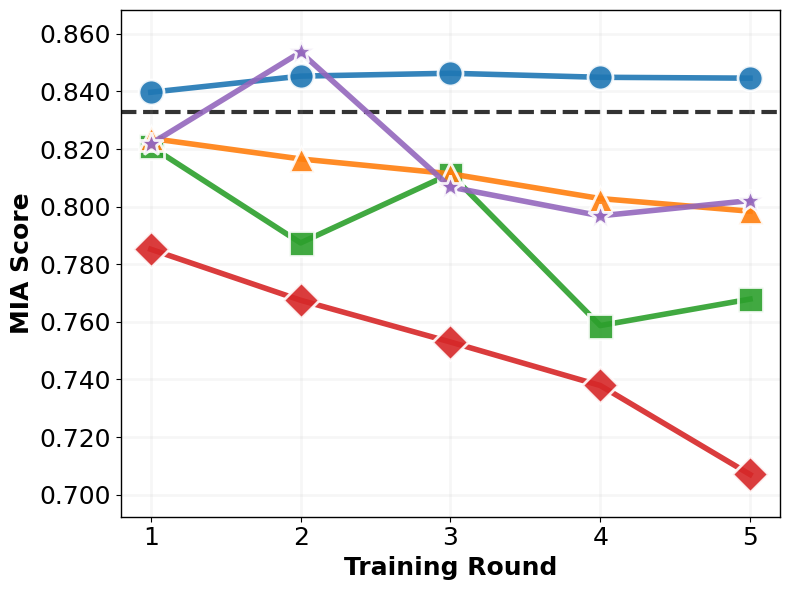}} &
        \subfloat[MLE$\uparrow$]{\includegraphics[width=0.23\textwidth]{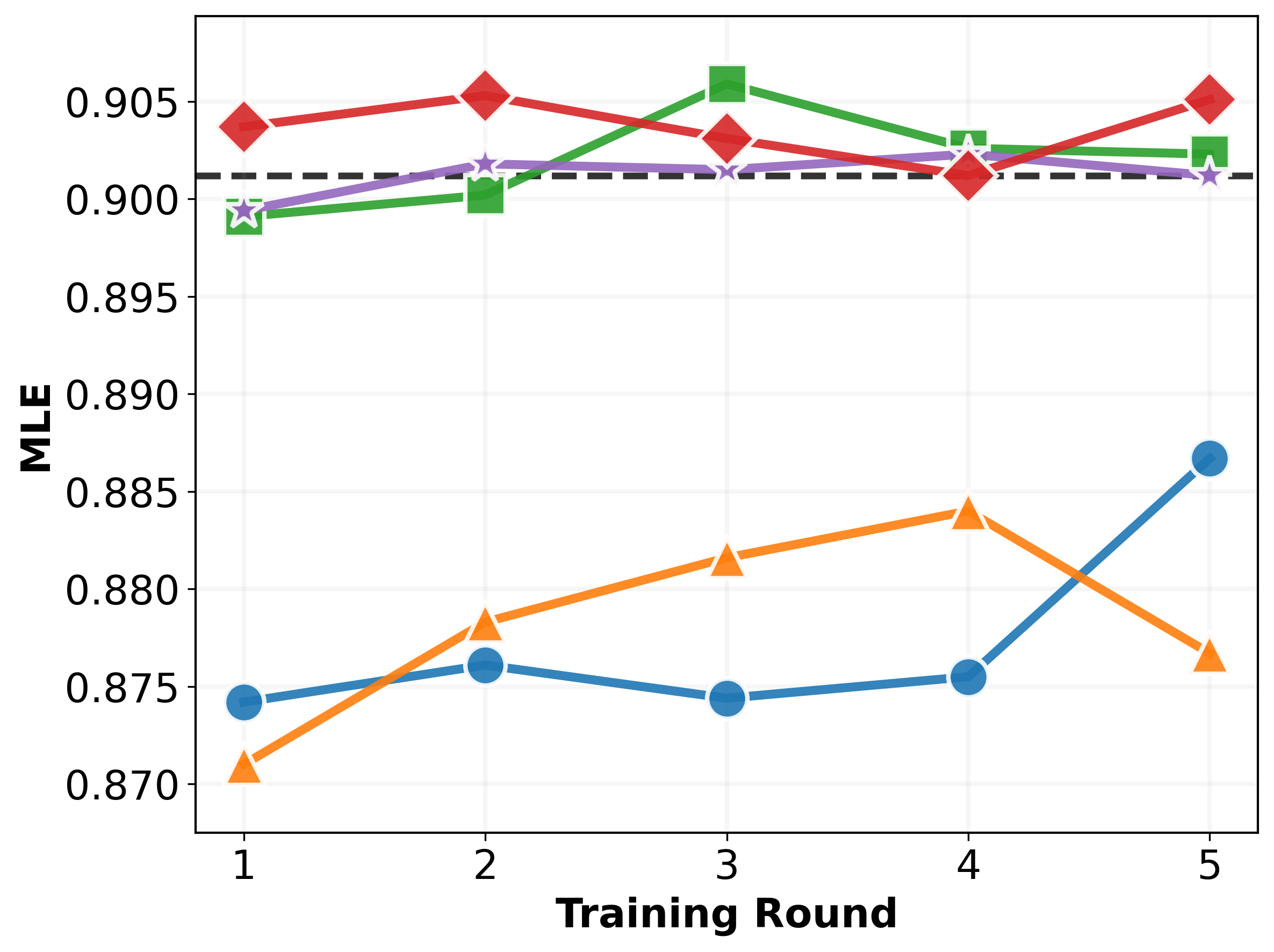}} &
        \\ [1.5ex]
    \end{tabular}
    \caption{Iterative performance progression across training rounds 1--5 on Shoppers.}
    \label{fig:iterative_shoppers}
\end{figure*}

\begin{figure*}[h]
    \centering
    \footnotesize
    \begin{tabular}{@{}c@{\hspace{0.25em}}c@{\hspace{0.25em}}c@{\hspace{0.25em}}c@{}}
        \subfloat[CDE$\uparrow$]{\includegraphics[width=0.23\textwidth]{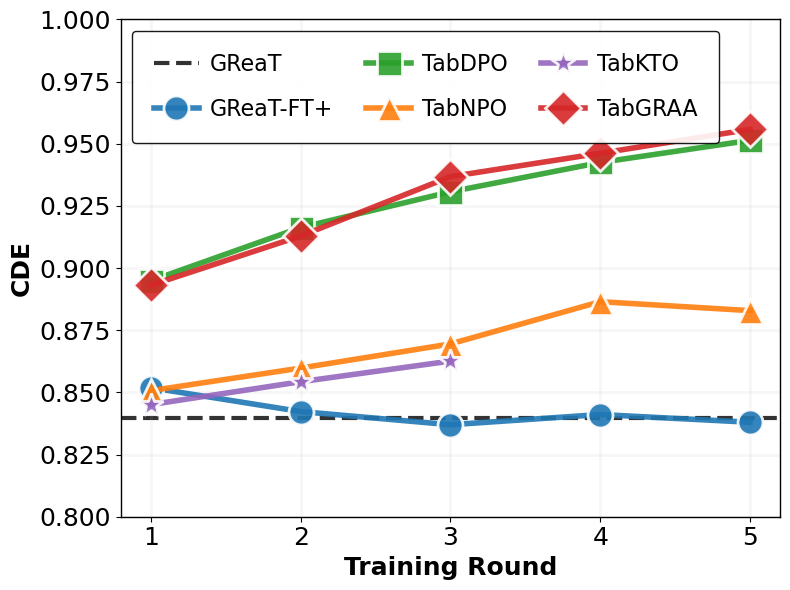}} & 
        \subfloat[PCC$\uparrow$]{\includegraphics[width=0.23\textwidth]{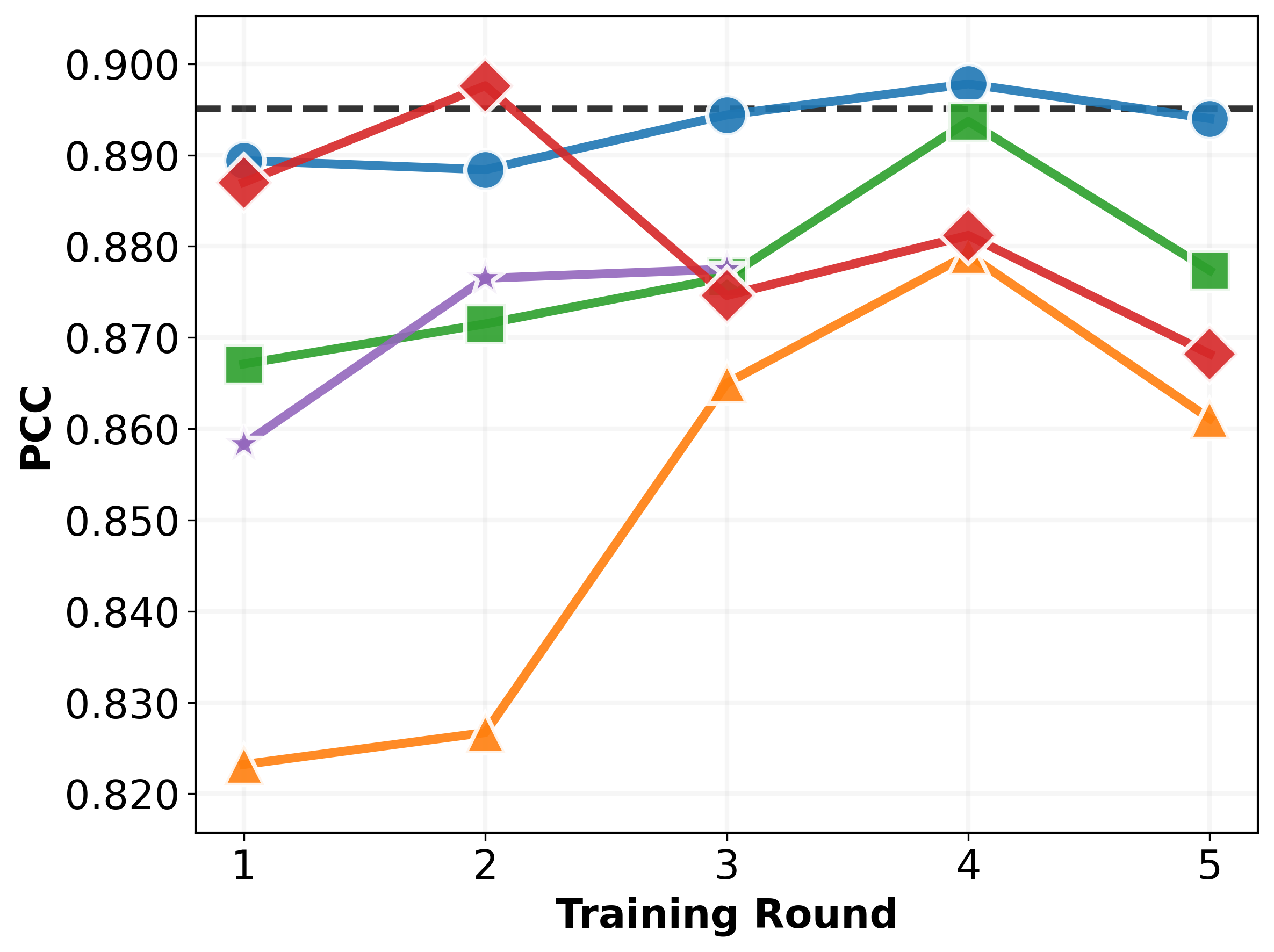}} & 
        \subfloat[$\alpha\uparrow$]{\includegraphics[width=0.23\textwidth]{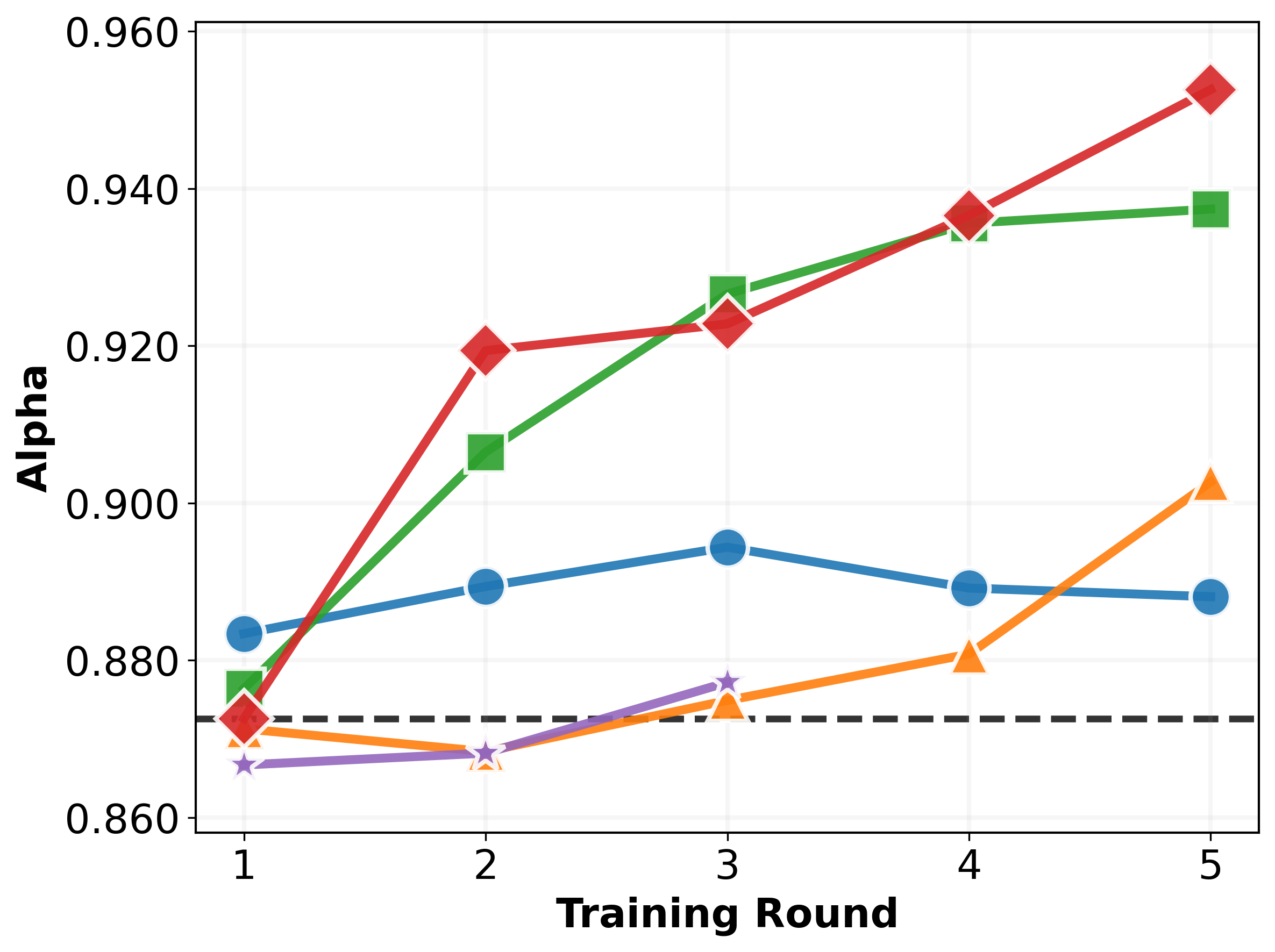}} &
        \subfloat[$\beta\uparrow$]{\includegraphics[width=0.23\textwidth]{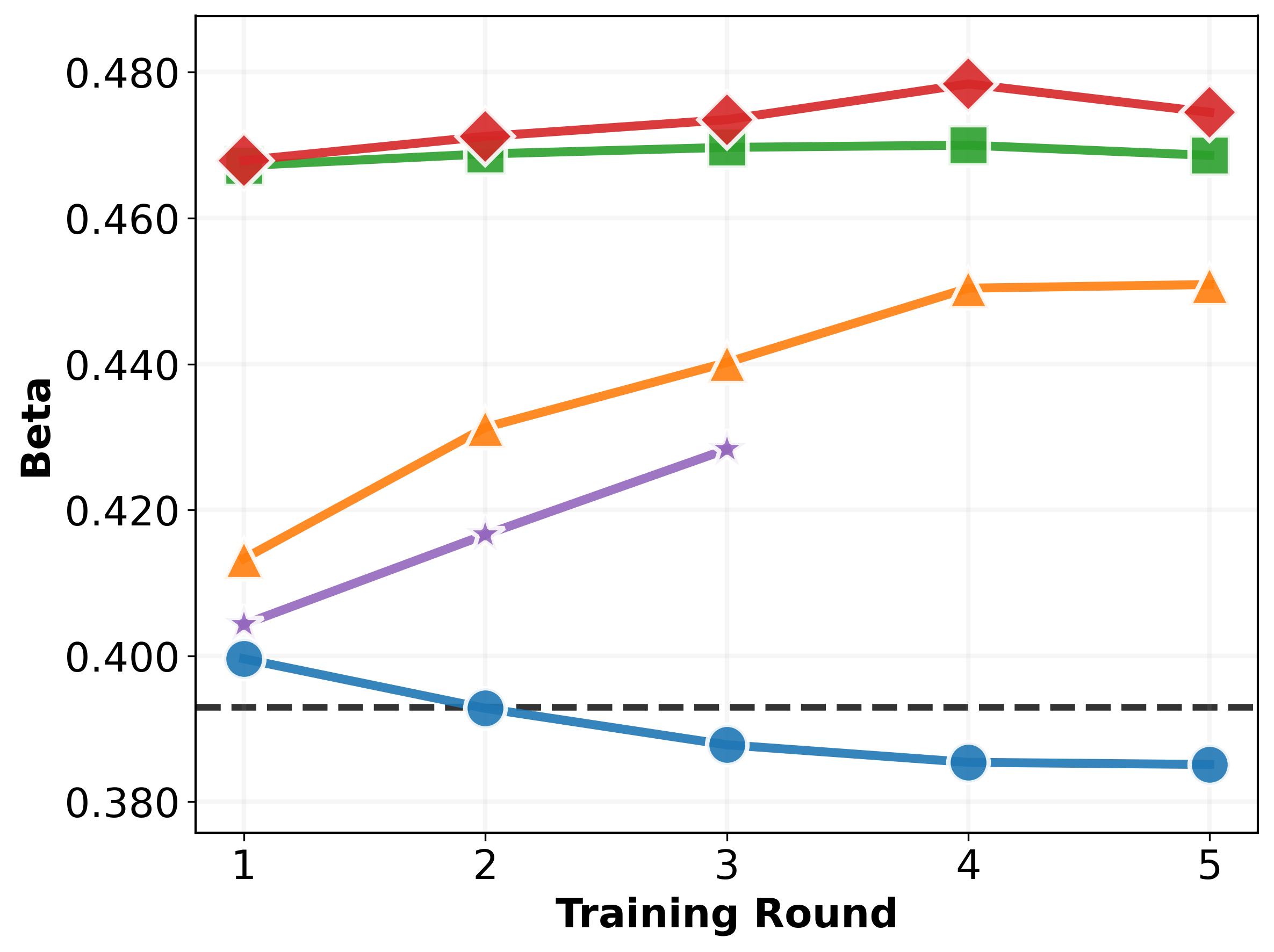}} \\ [1.5ex]
        \subfloat[C2ST$\uparrow$]{\includegraphics[width=0.23\textwidth]{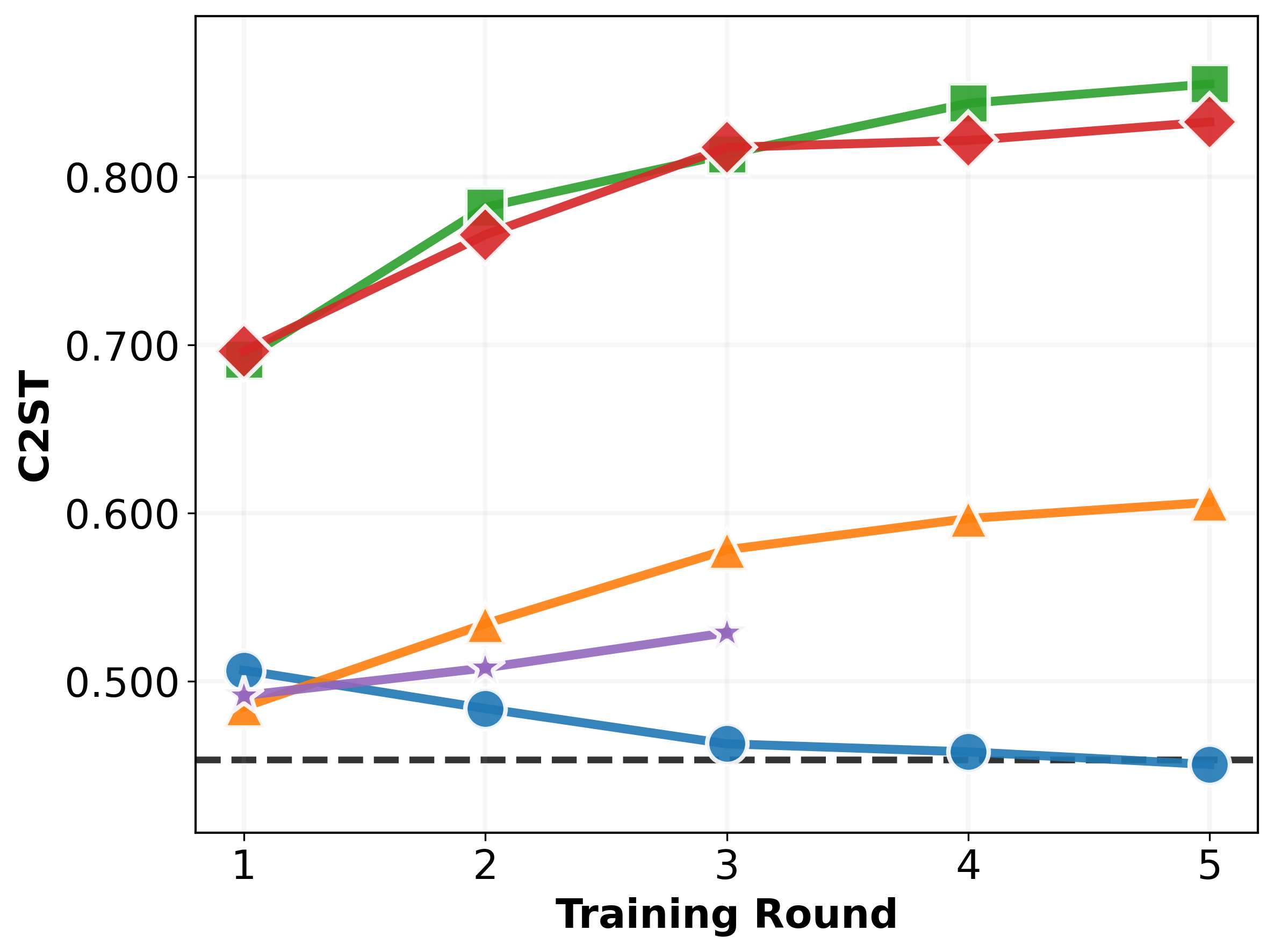}} &
        \subfloat[DA(AUC)$\to0.5$]{\includegraphics[width=0.23\textwidth]{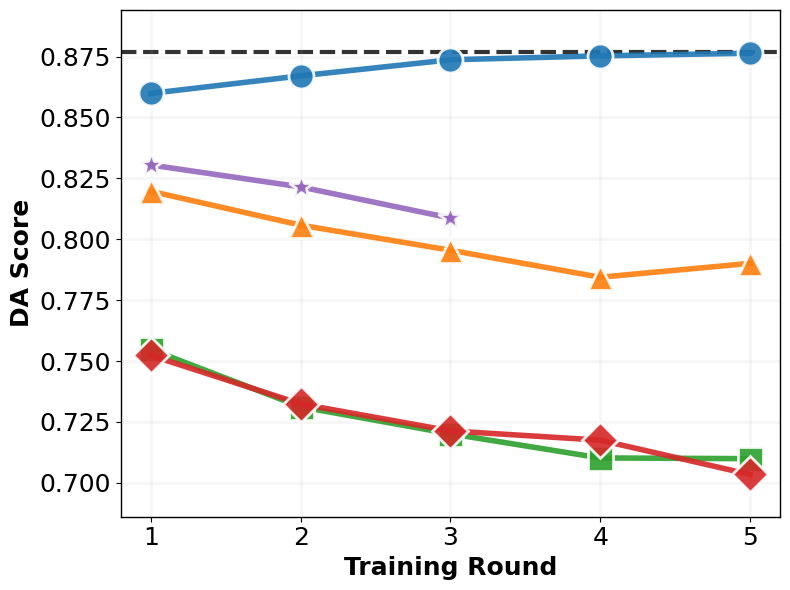}} &
        \subfloat[MLE$\uparrow$]{\includegraphics[width=0.23\textwidth]{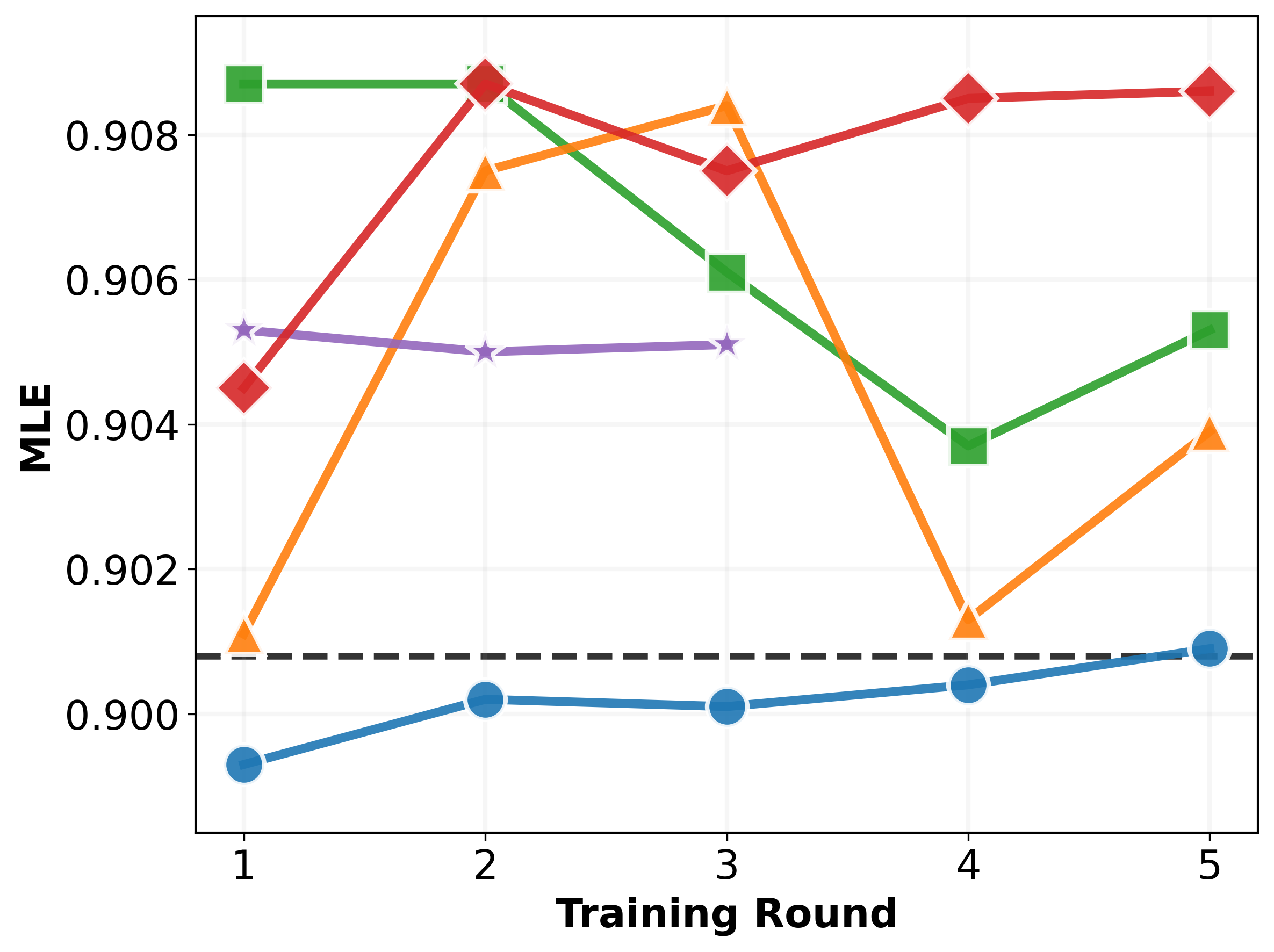}} &
        \\ [1.5ex]
    \end{tabular}
    \caption{Iterative performance progression across training rounds 1--5 on Magic.}
    \label{fig:iterative_magic}
\end{figure*}

\begin{figure*}[h!]
    \centering
    \footnotesize
    \begin{tabular}{@{}c@{\hspace{0.25em}}c@{\hspace{0.25em}}c@{\hspace{0.25em}}c@{}}
        \subfloat[CDE$\uparrow$]{\includegraphics[width=0.23\textwidth]{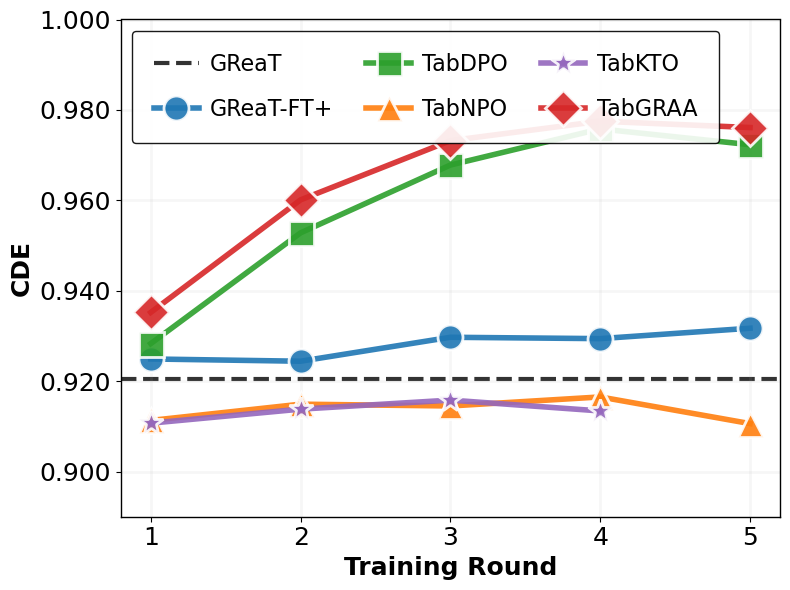}} & 
        \subfloat[PCC$\uparrow$]{\includegraphics[width=0.23\textwidth]{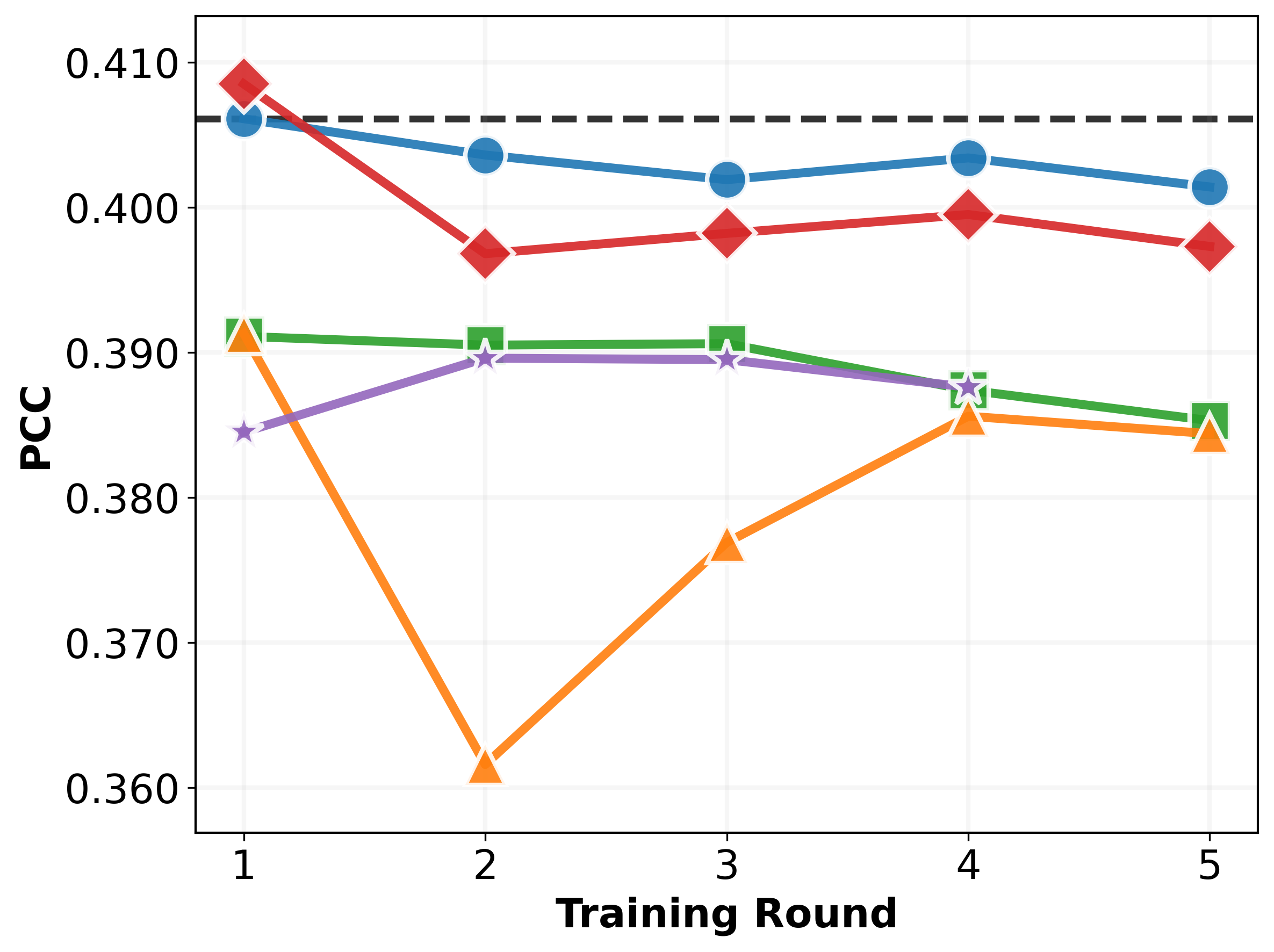}} & 
        \subfloat[$\alpha\uparrow$]{\includegraphics[width=0.23\textwidth]{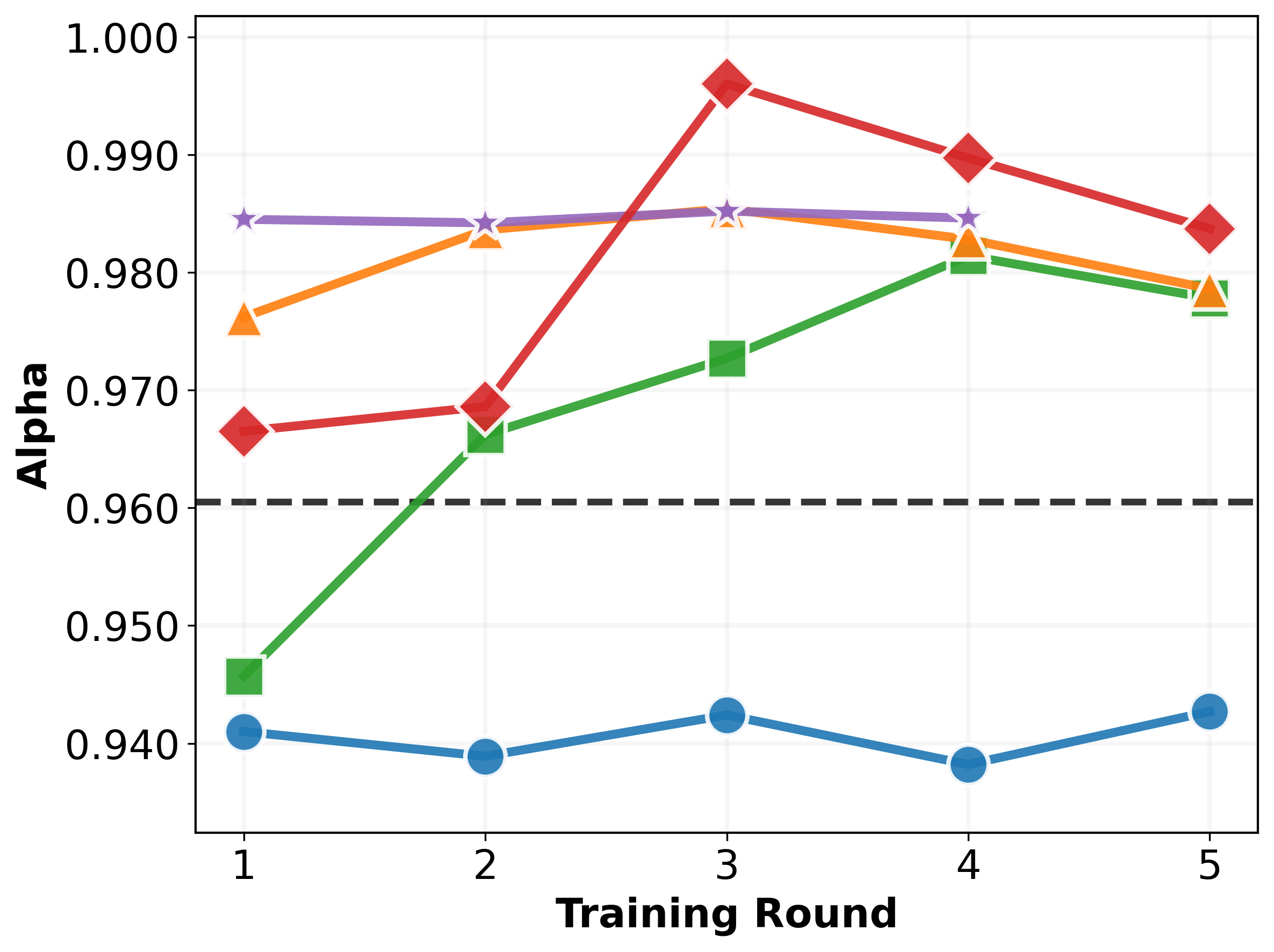}} &
        \subfloat[$\beta\uparrow$]{\includegraphics[width=0.23\textwidth]{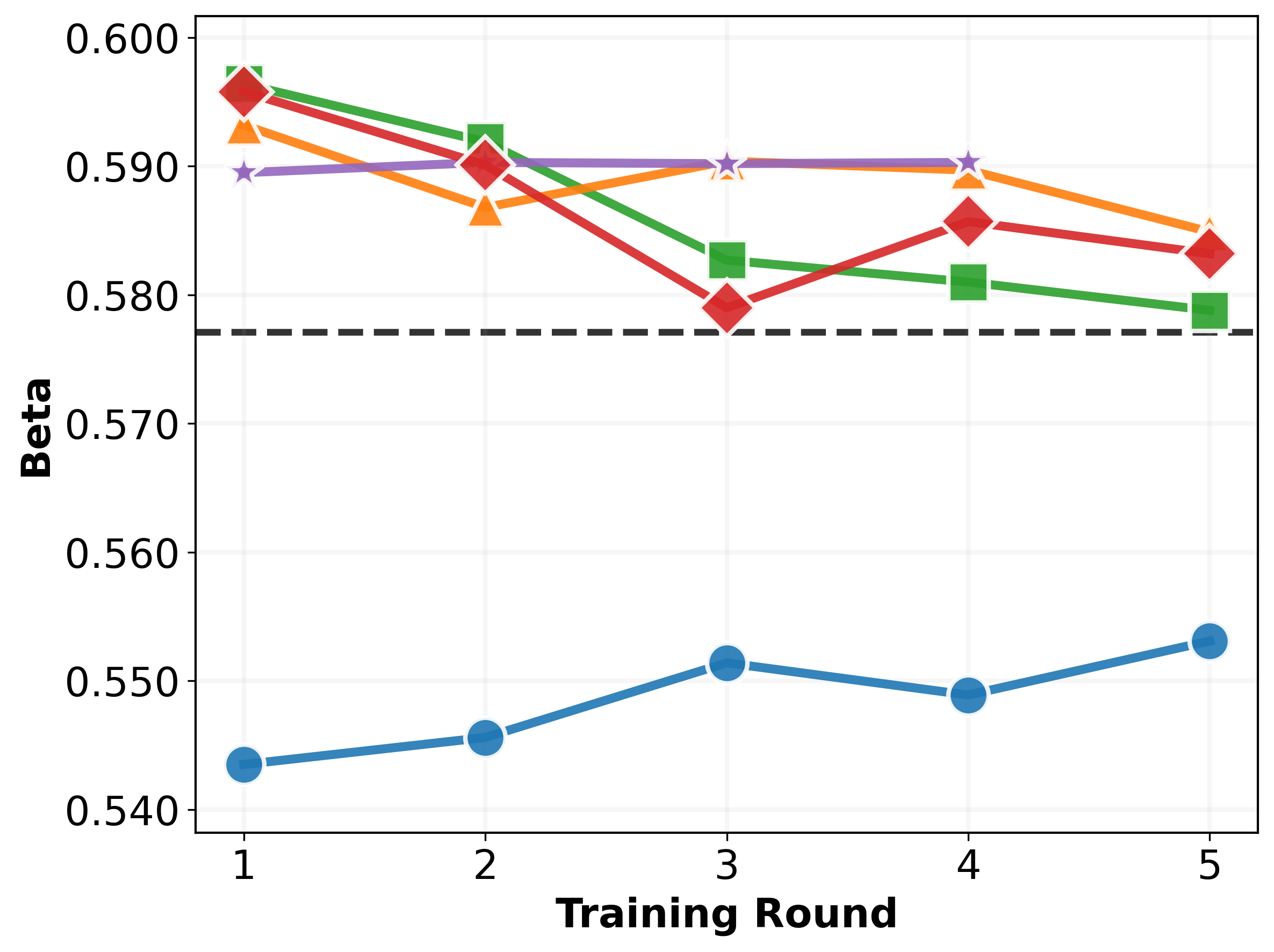}} \\ [1.5ex]
        \subfloat[C2ST$\uparrow$]{\includegraphics[width=0.23\textwidth]{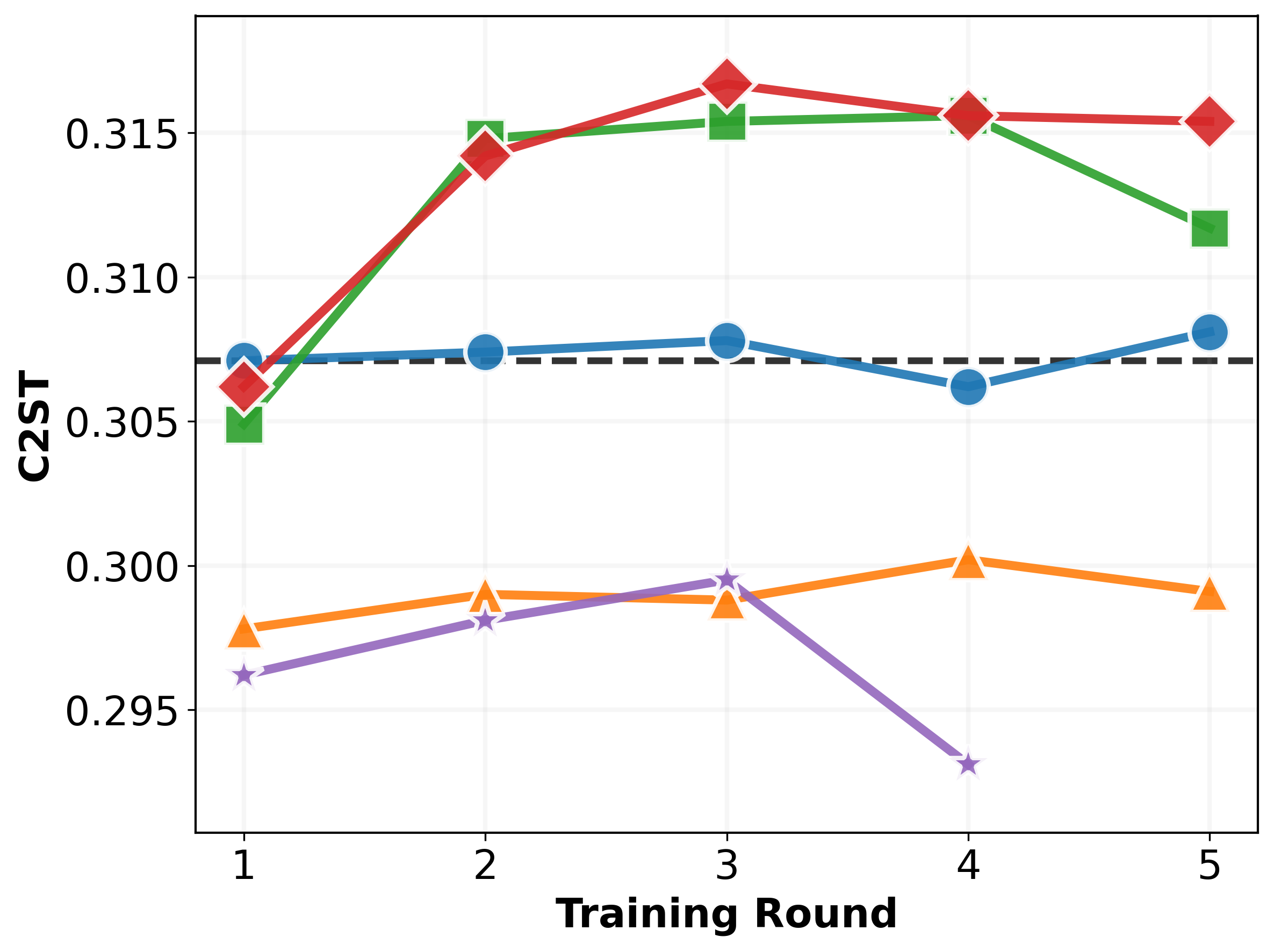}} &
        \subfloat[DA(AUC)$\to0.5$]{\includegraphics[width=0.23\textwidth]{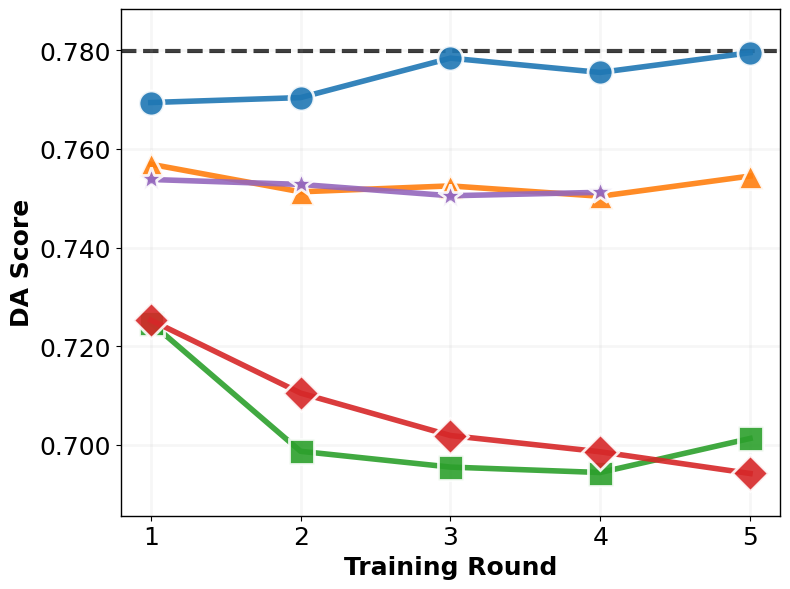}} &
        \subfloat[MLE$\downarrow$]{\includegraphics[width=0.23\textwidth]{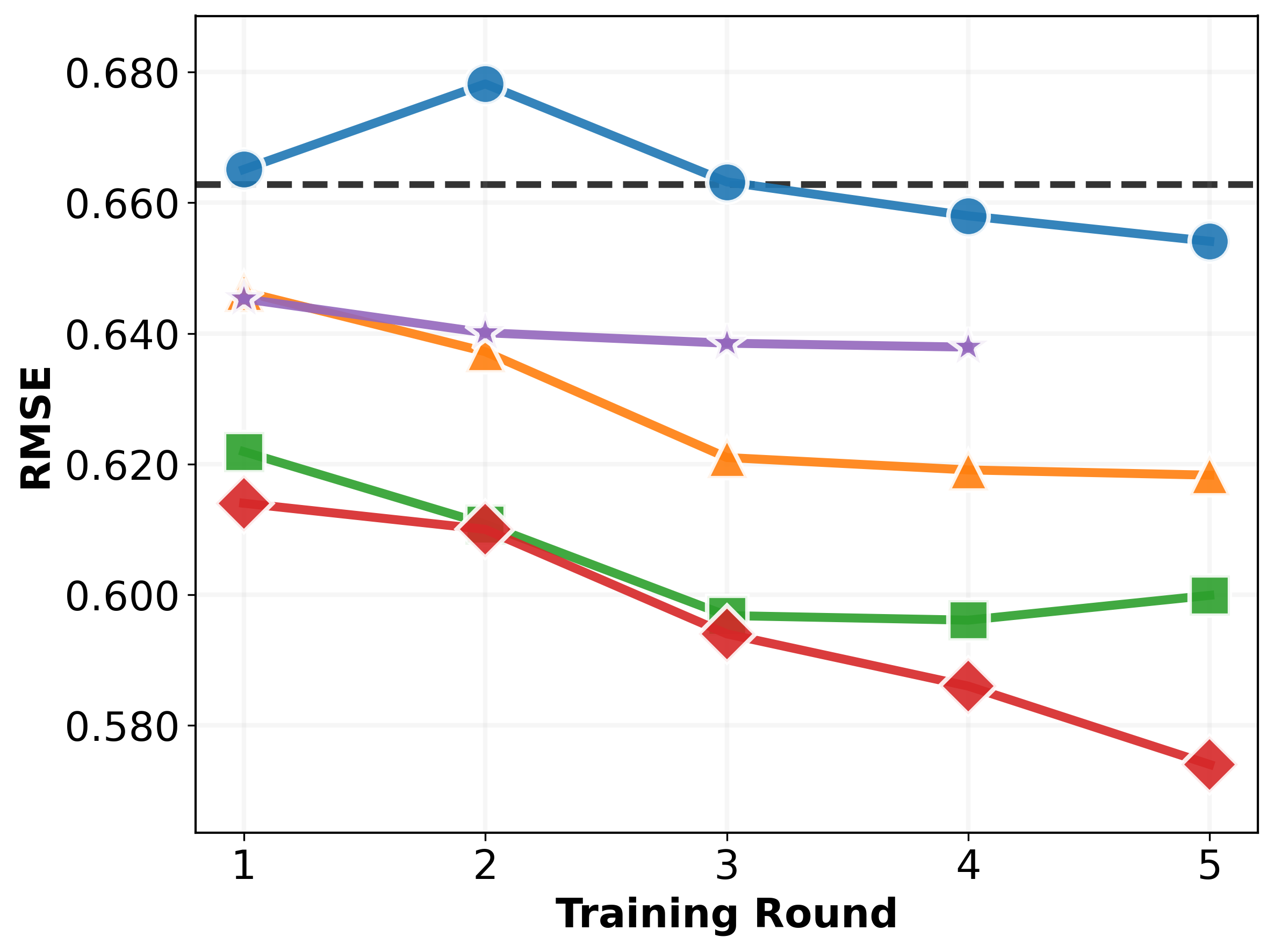}} &
        \\ [1.5ex]
    \end{tabular}
    \caption{Iterative performance progression across training rounds 1--5 on Beijing.}
    \label{fig:iterative_beijing}
\end{figure*}

\section{Comparison with non-LM generators}
\label{app:sota}

This section compares TabGRAA with GAN/VAE- and diffusion-based tabular
synthesizers. For this contextual comparison with non-LM generators, TabGRAA is
run for 10 post-training rounds. We report Shape error, Trend error,
$\alpha$-Precision, $\beta$-Recall, C2ST quality score, machine-learning
efficiency, and distinguishability-attack AUC in
Tables~\ref{tbl:exp-shape-updated}--\ref{tbl:exp-DA-updated}. TabGRAA
substantially improves the GReaT backbone across most metrics, narrowing the gap
between tabular language models and stronger non-LM generators. Diffusion models
remain stronger on several low-order distributional metrics such as Shape and
Trend error, while TabGRAA is competitive on higher-order support, downstream
utility, C2ST quality, and empirical distinguishability diagnostics.

\begin{table*}[t]
\centering
\caption{\textbf{Shape error comparison.} Performance comparison on Shape error rates (\%). Lower values are better.}
\label{tbl:exp-shape-updated}
\begin{threeparttable}
\scriptsize
\resizebox{\textwidth}{!}{
\begin{tabular}{lccccc}
\toprule
\textbf{Method} & \textbf{Adult} & \textbf{Default} & \textbf{Shoppers} & \textbf{Magic} & \textbf{Beijing} \\
\midrule
\multicolumn{6}{l}{\textit{GAN/VAE-based models}} \\
CTGAN & $16.84{\tiny\pm0.03}$ & $16.83{\tiny\pm0.04}$ & $21.15{\tiny\pm0.10}$ & $9.81{\tiny\pm0.08}$ & $21.39{\tiny\pm0.05}$ \\
TVAE  & $14.22{\tiny\pm0.08}$ & $10.17{\tiny\pm0.05}$ & $24.51{\tiny\pm0.06}$ & $8.25{\tiny\pm0.06}$ & $19.16{\tiny\pm0.06}$ \\
\midrule
\multicolumn{6}{l}{\textit{Diffusion-based models}} \\
TabDDPM & $1.75{\tiny\pm0.03}$ & $1.57{\tiny\pm0.08}$ & $2.72{\tiny\pm0.13}$ & $1.01{\tiny\pm0.09}$ & $1.30{\tiny\pm0.03}$ \\
TabSyn  & $0.91{\tiny\pm0.07}$ & $\mathbf{1.21}{\tiny\pm0.09}$ & $1.51{\tiny\pm0.05}$ & $1.09{\tiny\pm0.09}$ & $1.45{\tiny\pm0.04}$ \\ 
TabDiff & $\mathbf{0.63}{\tiny\pm0.05}$ & $1.24{\tiny\pm0.07}$ & $\mathbf{1.28}{\tiny\pm0.09}$ & $\mathbf{0.78}{\tiny\pm0.08}$ & $1.03{\tiny\pm0.05}$ \\
\midrule
\multicolumn{6}{l}{\textit{LM-based models}} \\
GReaT   & $12.12{\tiny\pm0.04}$ & $19.94{\tiny\pm0.06}$ & $14.51{\tiny\pm0.12}$ & $16.16{\tiny\pm0.09}$ & $8.25{\tiny\pm0.12}$ \\
TabGRAA & $0.87{\tiny\pm0.11}$ & $5.36{\tiny\pm0.14}$ & $9.63{\tiny\pm0.21}$ & $4.42{\tiny\pm0.09}$ & $\mathbf{1.02}{\tiny\pm0.10}$ \\
\bottomrule
\end{tabular}}
\end{threeparttable}
\end{table*}

\begin{table*}[t]
\centering
\caption{\textbf{Trend error comparison.} Performance comparison on Trend error rates (\%). Lower values are better.}
\label{tbl:exp-trend-updated}
\begin{threeparttable}
\scriptsize
\resizebox{\textwidth}{!}{
\begin{tabular}{lccccc}
\toprule
\textbf{Method} & \textbf{Adult} & \textbf{Default} & \textbf{Shoppers} & \textbf{Magic} & \textbf{Beijing} \\
\midrule
\multicolumn{6}{l}{\textit{GAN/VAE-based models}} \\
CTGAN & $20.23{\tiny\pm1.20}$ & $26.95{\tiny\pm0.93}$ & $13.08{\tiny\pm0.16}$ & $7.00{\tiny\pm0.19}$ & $22.95{\tiny\pm0.08}$ \\
TVAE  & $14.15{\tiny\pm0.88}$ & $19.50{\tiny\pm0.95}$ & $18.67{\tiny\pm0.38}$ & $5.82{\tiny\pm0.49}$ & $18.01{\tiny\pm0.08}$ \\
\midrule
\multicolumn{6}{l}{\textit{Diffusion-based models}} \\
TabDDPM & $3.01{\tiny\pm0.25}$ & $4.89{\tiny\pm0.10}$ & $6.61{\tiny\pm0.16}$ & $1.70{\tiny\pm0.22}$ & $2.71{\tiny\pm0.09}$ \\
TabSyn  & $1.93{\tiny\pm0.07}$ & $2.81{\tiny\pm0.48}$ & $2.13{\tiny\pm0.10}$ & $0.88{\tiny\pm0.18}$ & $3.13{\tiny\pm0.34}$ \\ 
TabDiff & $\mathbf{1.49}{\tiny\pm0.16}$ & $\mathbf{2.55}{\tiny\pm0.75}$ & $\mathbf{1.74}{\tiny\pm0.08}$ & $\mathbf{0.76}{\tiny\pm0.12}$ & $\mathbf{2.59}{\tiny\pm0.15}$ \\
\midrule
\multicolumn{6}{l}{\textit{LM-based models}} \\
GReaT   & $17.59{\tiny\pm0.22}$ & $70.02{\tiny\pm0.12}$ & $45.16{\tiny\pm0.18}$ & $10.23{\tiny\pm0.40}$ & $59.60{\tiny\pm0.55}$ \\
TabGRAA & $9.38{\tiny\pm0.15}$ & $69.09{\tiny\pm0.21}$ & $44.79{\tiny\pm0.31}$ & $7.88{\tiny\pm0.45}$ & $59.25{\tiny\pm0.49}$ \\
\bottomrule
\end{tabular}}
\end{threeparttable}
\end{table*}

\begin{table*}[h!]
\centering
\caption{\textbf{$\alpha$-Precision comparison.} Higher values are better.}
\label{tbl:exp-alpha2-updated}
\begin{threeparttable}
\scriptsize
\resizebox{\textwidth}{!}{
\begin{tabular}{lccccc}
\toprule
\textbf{Method} & \textbf{Adult} & \textbf{Default} & \textbf{Shoppers} & \textbf{Magic} & \textbf{Beijing} \\
\midrule
\multicolumn{6}{l}{\textit{GAN/VAE-based models}} \\
CTGAN & $77.74{\tiny\pm0.15}$ & $62.08{\tiny\pm0.08}$ & $76.97{\tiny\pm0.39}$ & $86.90{\tiny\pm0.22}$ & $96.27{\tiny\pm0.14}$ \\
TVAE  & $98.17{\tiny\pm0.17}$ & $85.57{\tiny\pm0.34}$ & $58.19{\tiny\pm0.26}$ & $86.19{\tiny\pm0.48}$ & $97.20{\tiny\pm0.10}$ \\
\midrule
\multicolumn{6}{l}{\textit{Diffusion-based models}} \\
TabDDPM & $96.39{\tiny\pm0.20}$ & $97.59{\tiny\pm0.36}$ & $88.55{\tiny\pm0.68}$ & $98.59{\tiny\pm0.17}$ & $97.93{\tiny\pm0.30}$ \\
TabSyn  & $99.39{\tiny\pm0.18}$ & $\mathbf{98.65}{\tiny\pm0.23}$ & $98.36{\tiny\pm0.52}$ & $\mathbf{99.42}{\tiny\pm0.28}$ & $87.51{\tiny\pm0.24}$ \\
TabDiff & $99.02{\tiny\pm0.20}$ & $98.49{\tiny\pm0.28}$ & $99.11{\tiny\pm0.34}$ & $\mathbf{99.42}{\tiny\pm0.21}$ & $98.06{\tiny\pm0.24}$ \\
\midrule
\multicolumn{6}{l}{\textit{LM-based models}} \\
GReaT   & $55.79{\tiny\pm0.03}$ & $85.90{\tiny\pm0.17}$ & $78.88{\tiny\pm0.13}$ & $85.46{\tiny\pm0.54}$ & $98.32{\tiny\pm0.22}$ \\
TabGRAA & $\mathbf{99.44}{\tiny\pm0.23}$ & $96.25{\tiny\pm0.42}$ & $\mathbf{99.61}{\tiny\pm0.31}$ & $95.26{\tiny\pm0.17}$ & $\mathbf{99.51}{\tiny\pm0.28}$ \\
\bottomrule
\end{tabular}}
\end{threeparttable}
\end{table*}

\begin{table*}[h!]
\centering
\caption{\textbf{$\beta$-Recall comparison.} Higher values are better.}
\label{tbl:exp-beta-updated}
\begin{threeparttable}
\scriptsize
\resizebox{\textwidth}{!}{
\begin{tabular}{lccccc}
\toprule
\textbf{Method} & \textbf{Adult} & \textbf{Default} & \textbf{Shoppers} & \textbf{Magic} & \textbf{Beijing} \\
\midrule
\multicolumn{6}{l}{\textit{GAN/VAE-based models}} \\
CTGAN & $30.80{\tiny\pm0.20}$ & $18.22{\tiny\pm0.17}$ & $31.80{\tiny\pm0.35}$ & $11.75{\tiny\pm0.20}$ & $34.80{\tiny\pm0.10}$ \\
TVAE  & $38.87{\tiny\pm0.31}$ & $23.13{\tiny\pm0.11}$ & $19.78{\tiny\pm0.10}$ & $32.44{\tiny\pm0.35}$ & $28.45{\tiny\pm0.08}$ \\
\midrule
\multicolumn{6}{l}{\textit{Diffusion-based models}} \\
TabDDPM & $47.05{\tiny\pm0.25}$ & $47.83{\tiny\pm0.35}$ & $47.79{\tiny\pm0.25}$ & $\mathbf{48.46}{\tiny\pm0.42}$ & $56.92{\tiny\pm0.13}$ \\
TabSyn  & $47.92{\tiny\pm0.23}$ & $46.45{\tiny\pm0.35}$ & $49.10{\tiny\pm0.60}$ & $48.03{\tiny\pm0.50}$ & $59.15{\tiny\pm0.22}$ \\ 
TabDiff & $51.64{\tiny\pm0.20}$ & $\mathbf{51.09}{\tiny\pm0.25}$ & $49.75{\tiny\pm0.64}$ & $48.01{\tiny\pm0.31}$ & $\mathbf{59.63}{\tiny\pm0.23}$ \\
\midrule
\multicolumn{6}{l}{\textit{LM-based models}} \\
GReaT   & $49.12{\tiny\pm0.18}$ & $42.04{\tiny\pm0.19}$ & $44.90{\tiny\pm0.17}$ & $34.91{\tiny\pm0.28}$ & $43.34{\tiny\pm0.31}$ \\
TabGRAA & $\mathbf{52.14}{\tiny\pm0.22}$ & $46.47{\tiny\pm0.32}$ & $\mathbf{50.22}{\tiny\pm0.35}$ & $47.48{\tiny\pm0.26}$ & $59.04{\tiny\pm0.52}$ \\
\bottomrule
\end{tabular}}
\end{threeparttable}
\end{table*}

\begin{table*}[t]
\centering
\caption{
\textbf{C2ST quality-score comparison.}
We report $100 \times$ the SDMetrics C2ST quality score, where higher values
indicate stronger real-vs-synthetic indistinguishability.
}
\label{tbl:exp-c2st-updated}
\begin{threeparttable}
\scriptsize
\resizebox{\textwidth}{!}{
\begin{tabular}{lccccc}
\toprule
\textbf{Method} & \textbf{Adult ($\uparrow$)} & \textbf{Default ($\uparrow$)}
& \textbf{Shoppers ($\uparrow$)} & \textbf{Magic ($\uparrow$)}
& \textbf{Beijing ($\uparrow$)} \\
\midrule
\multicolumn{6}{l}{\textit{GAN/VAE-based models}} \\
CTGAN & 59.49 & 48.75 & 74.88 & 67.28 & 75.31 \\
TVAE  & 63.15 & 65.47 & 29.62 & 77.06 & 86.59 \\
\midrule
\multicolumn{6}{l}{\textit{Diffusion-based models}} \\
TabDDPM & 97.55 & 97.12 & 83.49 & \textbf{99.98} & 95.13 \\
TabSyn  & 99.10 & \textbf{98.26} & 96.62 & 99.60 & 95.28 \\
TabDiff & \textbf{99.50} & 97.74 & \textbf{98.43} & 99.89 & \textbf{97.81} \\
\midrule
\multicolumn{6}{l}{\textit{LM-based models}} \\
GReaT   & 53.76 & 47.10 & 42.85 & 43.26 & 68.93 \\
TabGRAA & 96.27 & 97.31 & 97.83 & 98.23 & 96.74 \\
\bottomrule
\end{tabular}}
\end{threeparttable}
\end{table*}

\begin{table*}[h!]
\centering
\caption{\textbf{Machine learning efficiency comparison.}
For classification datasets, higher AUC is better; for Beijing, lower RMSE is
better. The Real row is included as a reference, and bold values mark the best
synthetic generator in each column.}
\label{tbl:exp-mle-updated}
\begin{threeparttable}
\scriptsize
\resizebox{\textwidth}{!}{
\begin{tabular}{lccccc}
\toprule
\textbf{Method} & \textbf{Adult ($\uparrow$)} & \textbf{Default ($\uparrow$)} & \textbf{Shoppers ($\uparrow$)} & \textbf{Magic ($\uparrow$)} & \textbf{Beijing ($\downarrow$)} \\
\midrule
\multicolumn{6}{l}{\textit{Real data}} \\
Real & $0.927{\tiny\pm0.000}$ & $0.770{\tiny\pm0.005}$ & $0.926{\tiny\pm0.001}$ & $0.946{\tiny\pm0.001}$ & $0.423{\tiny\pm0.003}$ \\
\midrule
\multicolumn{6}{l}{\textit{GAN/VAE-based models}} \\
CTGAN & $0.886{\tiny\pm0.002}$ & $0.696{\tiny\pm0.04}$ & $0.875{\tiny\pm0.009}$ & $0.855{\tiny\pm0.006}$ & $0.902{\tiny\pm0.019}$ \\
TVAE  & $0.878{\tiny\pm0.004}$ & $0.724{\tiny\pm0.005}$ & $0.871{\tiny\pm0.006}$ & $0.887{\tiny\pm0.003}$ & $0.770{\tiny\pm0.011}$ \\
\midrule
\multicolumn{6}{l}{\textit{Diffusion-based models}} \\
TabDDPM & $0.907{\tiny\pm0.001}$ & $0.758{\tiny\pm0.004}$ & $0.918{\tiny\pm0.005}$ & $0.935{\tiny\pm0.003}$ & $0.580{\tiny\pm0.009}$ \\
TabSyn  & $0.909{\tiny\pm0.001}$ & $0.763{\tiny\pm0.005}$ & $0.914{\tiny\pm0.004}$ & $\mathbf{0.937}{\tiny\pm0.002}$ & $0.580{\tiny\pm0.009}$ \\ 
TabDiff & $0.912{\tiny\pm0.002}$ & $0.763{\tiny\pm0.005}$ & $\mathbf{0.921}{\tiny\pm0.004}$ & $0.936{\tiny\pm0.003}$ & $\mathbf{0.555}{\tiny\pm0.013}$ \\
\midrule
\multicolumn{6}{l}{\textit{LM-based models}} \\
GReaT   & $0.913{\tiny\pm0.003}$ & $0.755{\tiny\pm0.006}$ & $0.902{\tiny\pm0.005}$ & $0.888{\tiny\pm0.008}$ & $0.653{\tiny\pm0.013}$ \\
TabGRAA & $\mathbf{0.921}{\tiny\pm0.003}$ & $\mathbf{0.784}{\tiny\pm0.005}$ & $0.911{\tiny\pm0.007}$ & $0.903{\tiny\pm0.001}$ & $0.574{\tiny\pm0.003}$ \\
\bottomrule
\end{tabular}}
\end{threeparttable}
\end{table*}

\begin{table*}[t]
\centering
\caption{\textbf{Distinguishability Attack (DA) AUC comparison.} Values closer to $0.5$ indicate weaker real-vs-synthetic distinguishability
attacks. Bold values mark the closest value to $0.5$ in each column.}
\label{tbl:exp-DA-updated}
\begin{threeparttable}
\scriptsize
\resizebox{\textwidth}{!}{
\begin{tabular}{lccccc}
\toprule
\textbf{Method} & \textbf{Adult} & \textbf{Default} & \textbf{Shoppers} & \textbf{Magic} & \textbf{Beijing} \\
\midrule
\multicolumn{6}{l}{\textit{GAN/VAE-based models}} \\
CTGAN & $1.000{\tiny\pm0.000}$ & $0.999{\tiny\pm0.000}$ & $1.000{\tiny\pm0.000}$ & $1.000{\tiny\pm0.000}$ & $1.000{\tiny\pm0.000}$ \\
TVAE  & $0.998{\tiny\pm0.000}$ & $1.000{\tiny\pm0.000}$ & $1.000{\tiny\pm0.000}$ & $0.998{\tiny\pm0.000}$ & $0.999{\tiny\pm0.000}$ \\
\midrule
\multicolumn{6}{l}{\textit{Diffusion-based models}} \\
TabDDPM & $0.428{\tiny\pm0.004}$ & $0.696{\tiny\pm0.003}$ & $0.684{\tiny\pm0.483}$ & $0.412{\tiny\pm0.09}$ & $\mathbf{0.466}{\tiny\pm0.006}$ \\
TabSyn  & $0.543{\tiny\pm0.005}$ & $0.736{\tiny\pm0.008}$ & $0.583{\tiny\pm0.004}$ & $0.562{\tiny\pm0.006}$ & $0.715{\tiny\pm0.005}$ \\ 
TabDiff & $0.481{\tiny\pm0.003}$ & $\mathbf{0.644}{\tiny\pm0.006}$ & $\mathbf{0.545}{\tiny\pm0.005}$ & $\mathbf{0.512}{\tiny\pm0.001}$ & $0.658{\tiny\pm0.003}$ \\
\midrule
\multicolumn{6}{l}{\textit{LM-based models}} \\
GReaT   & $0.720{\tiny\pm0.001}$ & $0.863{\tiny\pm0.004}$ & $0.832{\tiny\pm0.003}$ & $0.859{\tiny\pm0.005}$ & $0.748{\tiny\pm0.002}$ \\
TabGRAA & $\mathbf{0.499}{\tiny\pm0.003}$ & $0.760{\tiny\pm0.004}$ & $0.727{\tiny\pm0.006}$ & $0.7034{\tiny\pm0.008}$ & $0.6942{\tiny\pm0.03}$ \\
\bottomrule
\end{tabular}}
\end{threeparttable}
\end{table*}

\paragraph{Interpretation.}
The non-LM comparisons show that TabGRAA should be viewed as a post-training
framework for tabular language models, not as a uniform replacement for all
tabular synthesizers. It substantially improves the LM-based GReaT generator and
is competitive with stronger synthesizers on several metrics, but diffusion
models retain advantages on some low-order distributional statistics.

\section{Additional component ablations}
\label{app:ablation}

\subsection{Classifier variants and retraining strategy}
\label{app:classifier_variants}


The default reward uses a Random Forest real-vs-synthetic distinguishability
classifier. We also test XGBoost-based rewards and fixed versus retrained scorer
strategies. Figure~\ref{fig:training} shows that retraining the scorer at each
round keeps the reward better aligned with the evolving generator, whereas a
fixed scorer can become stale and may degrade performance as the number of
post-training iterations increases. A fixed scorer is trained against an earlier generator distribution and can become
misaligned as the generator changes. As post-training proceeds, the generator may
learn to exploit stale scorer boundaries rather than improve the true tabular
distribution, which can degrade performance. Retraining the scorer at each round
refreshes the reward signal against the current generator and reduces this stale
reward effect. Figure~\ref{fig:classifier_variants_comparison}
shows that the effect of classifier choice varies across datasets.





\begin{figure*}[h]
    \centering
    \includegraphics[width=\linewidth]{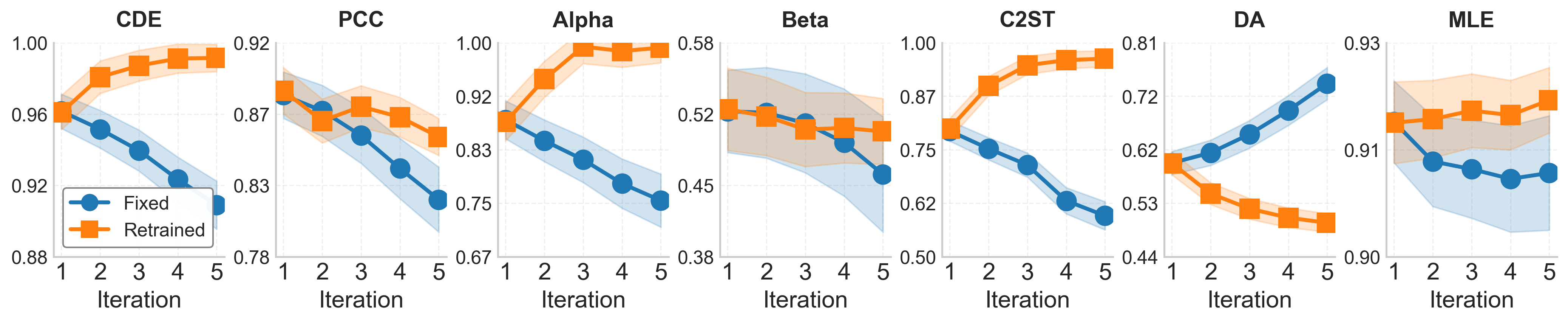}
    \caption{\textbf{Comparison of classifier training strategies on Adult.}
    Retraining the scorer at each iteration maintains an informative reward signal as the generator changes, while a fixed scorer becomes stale.}
    \label{fig:training}
\end{figure*}

\begin{figure*}[h!]
\centering
\begin{minipage}[t]{0.18\textwidth}
\centering
\includegraphics[width=\linewidth]{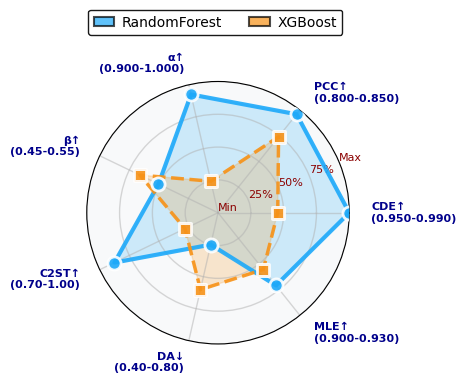}
\end{minipage}
\hfill
\begin{minipage}[t]{0.18\textwidth}
\centering
\includegraphics[width=\linewidth]{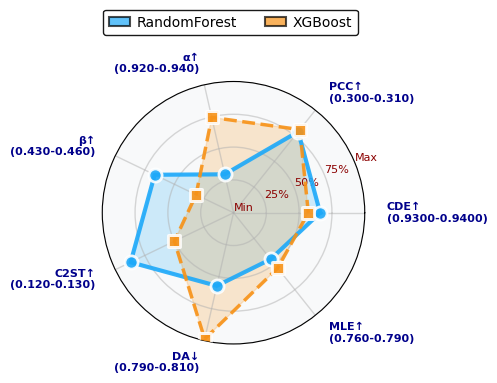}
\end{minipage}
\hfill
\begin{minipage}[t]{0.18\textwidth}
\centering
\includegraphics[width=\linewidth]{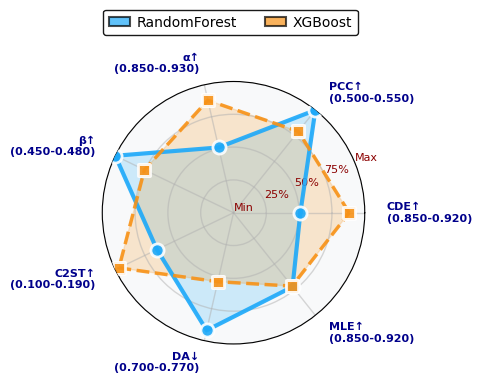}
\end{minipage}
\hfill
\begin{minipage}[t]{0.18\textwidth}
\centering
\includegraphics[width=\linewidth]{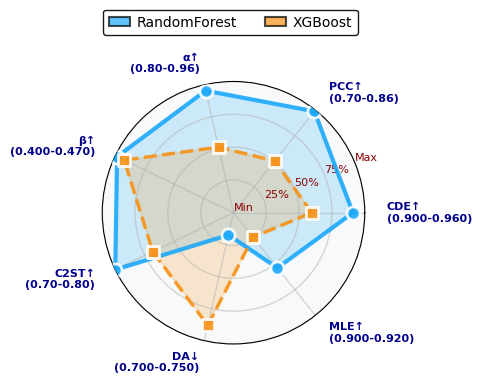}
\end{minipage}
\hfill
\begin{minipage}[t]{0.18\textwidth}
\centering
\includegraphics[width=\linewidth]{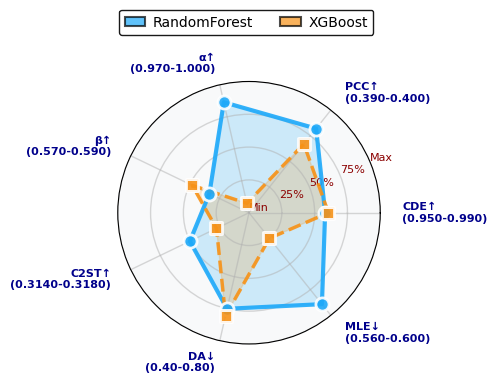}
\end{minipage}

\vspace{5pt}
(a) Adult \hfill (b) Default \hfill (c) Shoppers \hfill (d) Magic \hfill (e) Beijing

\caption{\textbf{Classifier reward variants across datasets.}
Comparison of classifier-based reward variants across five datasets. Values are scaled for visualization.}
\label{fig:classifier_variants_comparison}
\end{figure*}

\subsection{Beta sensitivity}
\label{app:beta}

\begin{figure*}[h]
    \centering
    \includegraphics[width=\linewidth]{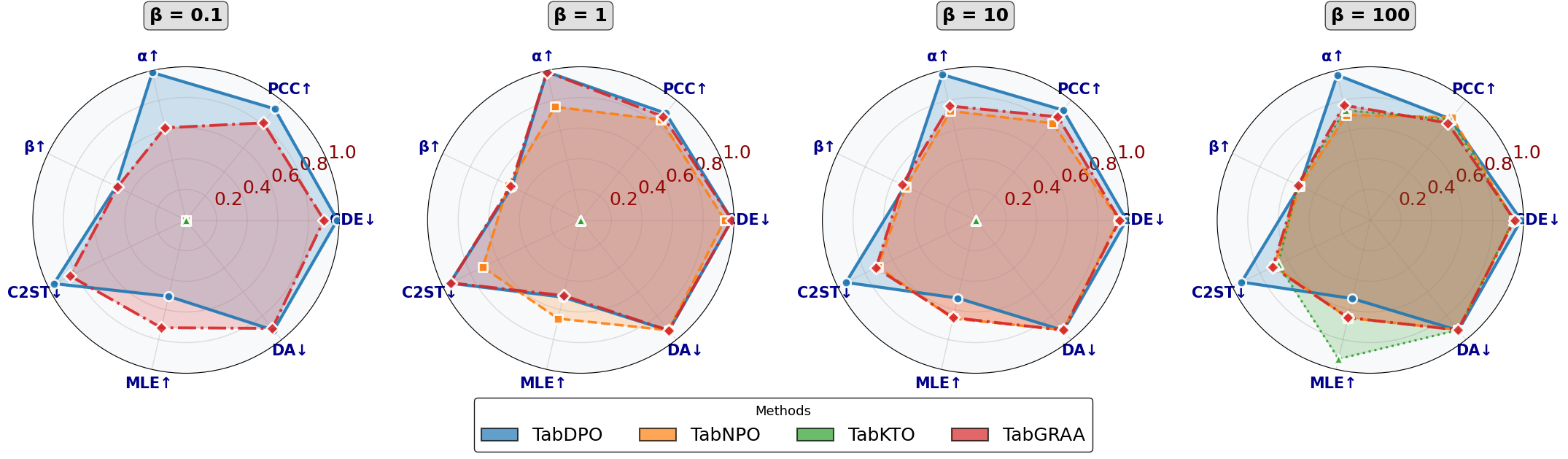}
    \caption{\textbf{Alignment-strength sensitivity on Adult.}
    Radar plot comparing tabular LM post-training methods across $\beta\in\{0.1,1,10,100\}$. Arrows indicate the preferred direction for each metric.}
    \label{fig:beta_comparison_radar_adult}
\end{figure*}

\begin{figure}[h]
    \centering
    \includegraphics[width=\linewidth]{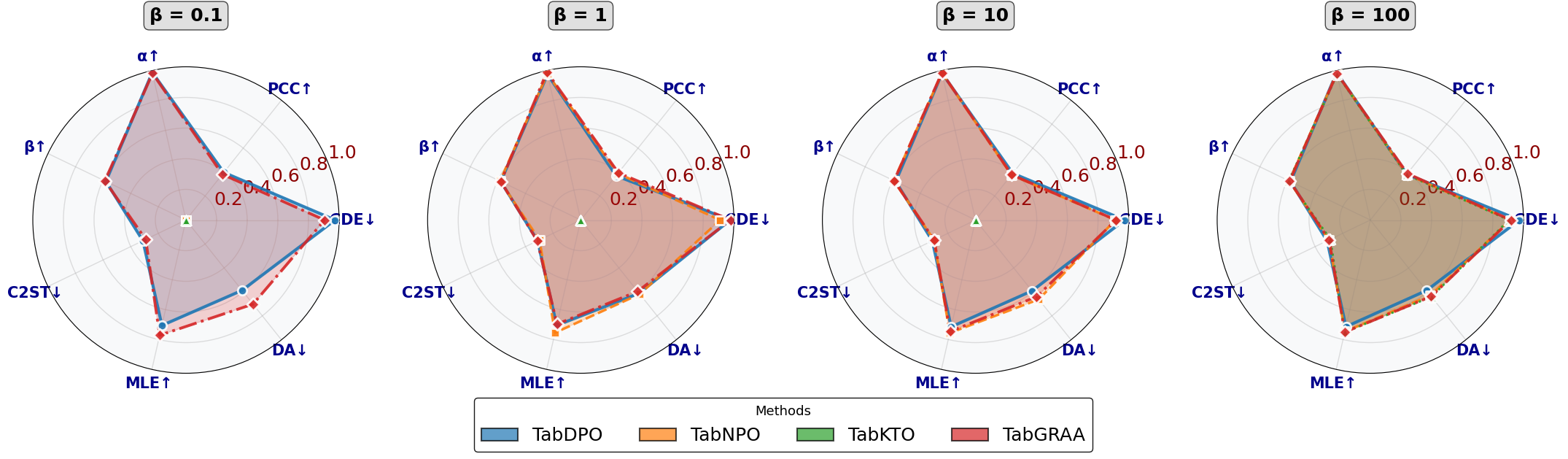}
    \caption{\textbf{Alignment-strength sensitivity on Beijing.}
    Radar plot comparing tabular LM post-training methods across $\beta\in\{0.1,1,10,100\}$. Arrows indicate the preferred direction for each metric.}
    \label{fig:beijing_radar}
\end{figure}

\begin{figure}[h]
    \centering
    \includegraphics[width=\linewidth]{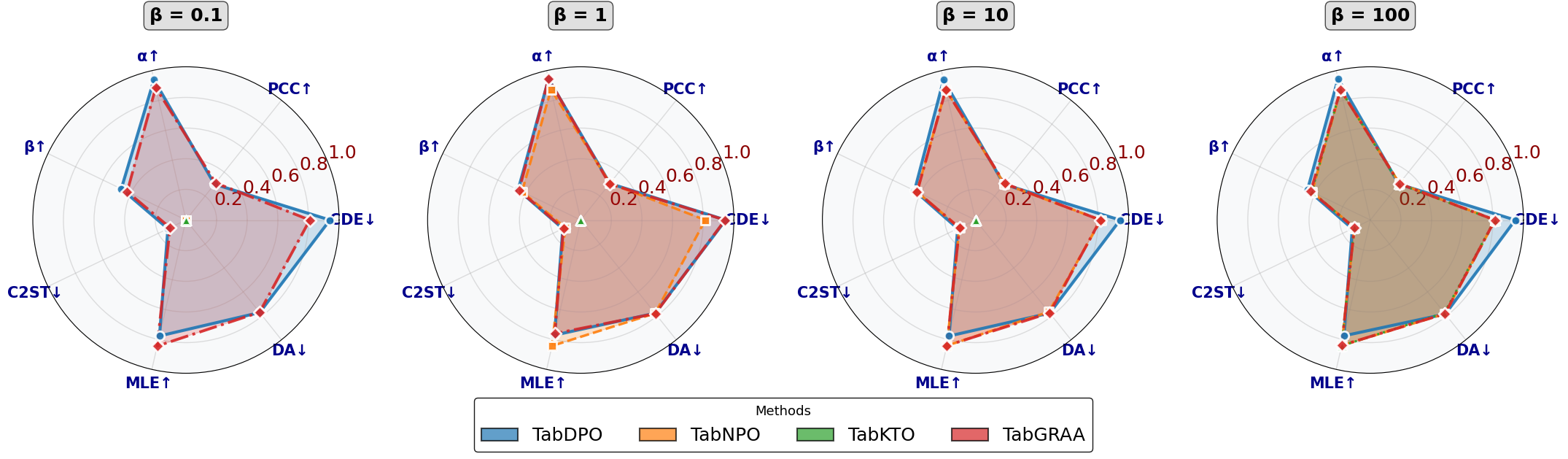}
    \caption{\textbf{Alignment-strength sensitivity on Default.}
    Radar plot comparing tabular LM post-training methods across $\beta\in\{0.1,1,10,100\}$. Arrows indicate the preferred direction for each metric.}
    \label{fig:default_radar}
\end{figure}

\begin{figure}[h]
    \centering
    \includegraphics[width=\linewidth]{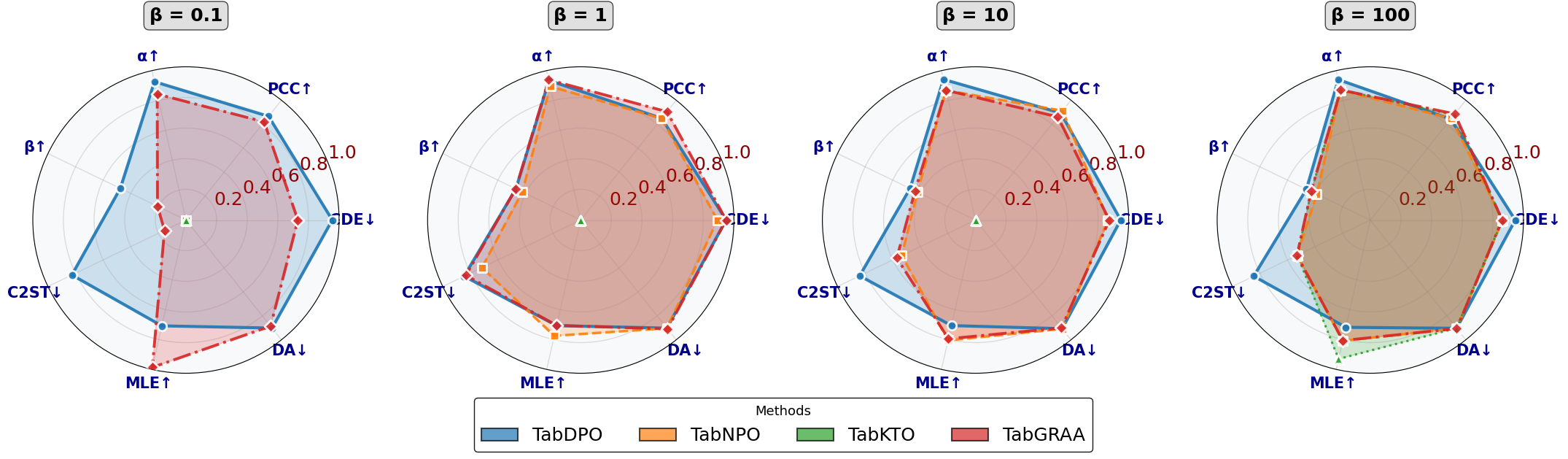}
    \caption{\textbf{Alignment-strength sensitivity on Magic.}
    Radar plot comparing tabular LM post-training methods across $\beta\in\{0.1,1,10,100\}$. Arrows indicate the preferred direction for each metric.}
    \label{fig:magic_radar}
\end{figure}

\begin{figure}[h]
    \centering
    \includegraphics[width=\linewidth]{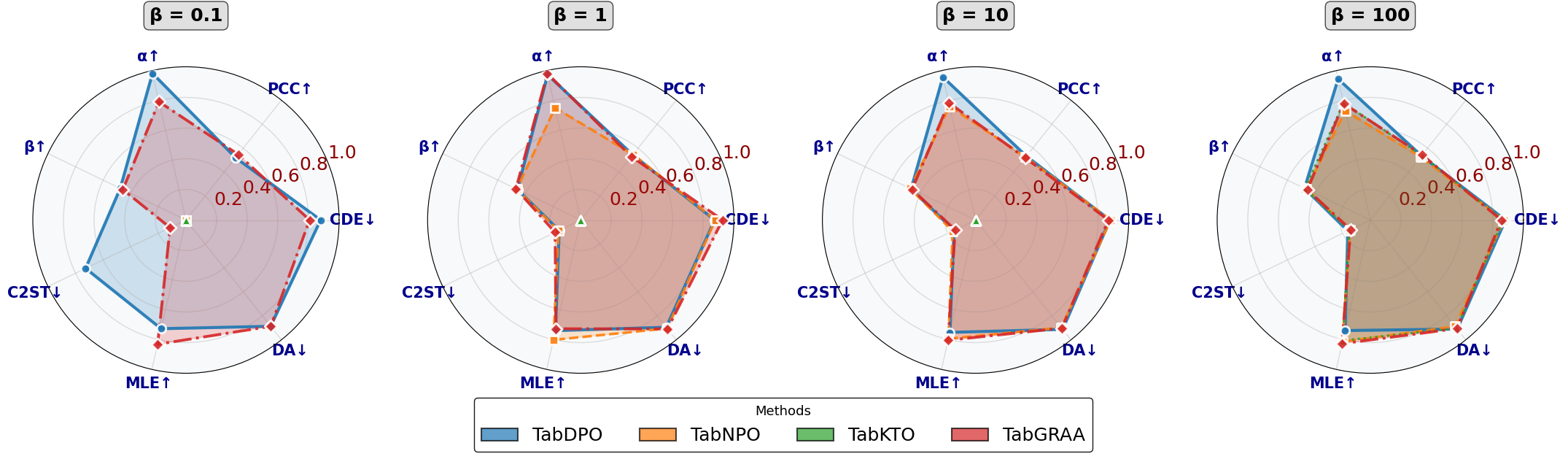}
    \caption{\textbf{Alignment-strength sensitivity on Shoppers.}
    Radar plot comparing tabular LM post-training methods across $\beta\in\{0.1,1,10,100\}$. Arrows indicate the preferred direction for each metric.}
    \label{fig:shoppers_radar}
\end{figure}

We evaluate alignment strength $\beta$ across $\{0.1,1,10,100\}$, with results shown in Figures~\ref{fig:beta_comparison_radar_adult}--\ref{fig:shoppers_radar}. TabGRAA is less sensitive to $\beta$ than negative-only or asymmetric alignment baselines. Extremely small $\beta$ gives weak updates, while overly large $\beta$ can destabilize post-training or overemphasize the reward contrast.

\subsection{Backbone architecture}
\label{app:architecture}

Table~\ref{tbl:architecture} evaluates TabGRAA with DistilGPT-2, GPT-2, and GPT-Neo backbones on Adult. TabGRAA remains effective across all three autoregressive LMs, showing that the method is not tied to a single backbone. The strongest metric differs by architecture: DistilGPT-2 performs best on $\alpha$-precision, GPT-Neo provides strong CDE/PCC/DA, and GPT-2 performs best on C2ST and slightly best on MLE. This supports the view that TabGRAA is a post-training framework for tabular language models rather than a backbone-specific tuning trick.

\begin{table*}[t!]
\centering
\caption{TabGRAA implemented across diverse language model architectures on Adult dataset. Best results are in \bf{bold}.}
\label{tbl:architecture}
\begin{threeparttable}
\resizebox{\textwidth}{!}{
\begin{tabular}{lccccccc}
\toprule[1pt]
\textbf{Model (Params)} & \textbf{CDE$\uparrow$} & \textbf{PCC$\uparrow$} & \textbf{$\alpha$$\uparrow$} & \textbf{$\beta$$\uparrow$} & \textbf{C2ST$\uparrow$} & \textbf{DA $\to 0.5$} & \textbf{MLE$\uparrow$} \\
\midrule

DistilGPT-2 (82M)  & $98.85{\tiny\pm0.03}$ & $90.15{\tiny\pm1.64}$ & $\bf{99.57}{\tiny\pm0.12}$ & $50.25{\tiny\pm0.33}$ & $95.76{\tiny\pm0.41}$ & $0.5038{\tiny\pm0.0043}$ & $0.9225{\tiny\pm0.0015}$ \\

GPT-NEO (125M)  & $\bf{99.09}{\tiny\pm0.05}$ & $\bf{90.53}{\tiny\pm1.26}$ & $98.19{\tiny\pm0.32}$ & $\bf{52.79}{\tiny\pm0.35}$ & $95.48{\tiny\pm0.27}$ & $\bf{0.4961}{\tiny\pm0.0033}$ & $0.9188{\tiny\pm0.0025}$ \\

GPT2 (124M)  & $99.03{\tiny\pm0.04}$ & $85.53{\tiny\pm1.57}$ & $98.98{\tiny\pm0.41}$ & $50.21{\tiny\pm0.22}$ & $\bf{96.09}{\tiny\pm0.35}$ & $0.5179{\tiny\pm0.0022}$ & $\bf{0.9228}{\tiny\pm0.0032}$ \\

\bottomrule[1pt]
\end{tabular}}
\end{threeparttable}
\end{table*}

\subsection{Computational cost}
\label{app:compute}

We report wall-clock cost on a single NVIDIA RTX 4090. On Adult with group size $B=4$, one TabGRAA iteration takes approximately 12 minutes end-to-end: roughly 6 minutes for synthetic sampling, 20 seconds for reward scoring, 5 seconds for group construction, and 5 minutes for the alignment update. The autoregressive LM forward/backward pass therefore dominates runtime, while reward scoring is a small fraction of the total cost. The adapted alignment baselines have comparable per-iteration cost because they share the same sampling, scoring, and LM-update pipeline. Five TabGRAA iterations take approximately 60 minutes, compared with 4--6 hours for full GReaT supervised fine-tuning over 100 epochs, making iterative post-training substantially cheaper than retraining from scratch.

\subsection{Reward group construction}
\label{app:ablation_group}

We test whether GRAA requires explicit top-vs-bottom index matching when forming high- and low-reward groups. Table~\ref{tbl:ablation-group-membership} compares two group-selection strategies: random sampling within the top/bottom reward halves and explicit top-vs-bottom matching (T-vs-B). The two strategies perform similarly across datasets and metrics, indicating that GRAA mainly relies on separating high- and low-reward strata rather than on one-to-one pairwise correspondence.

\begin{table*}[h]
\centering
\caption{\textbf{Group-selection strategies for TabGRAA.}
Random sampling within the top/bottom reward halves is compared with T-vs-B, where T-vs-B denotes explicit top-vs-bottom index matching. Results are shown after 5 iterations with $B=4$. Similar performance indicates that GRAA does not rely on pairwise correspondence. Lower Wasserstein/MMD/JSD is better.}
\label{tbl:ablation-group-membership}
\tiny
\setlength{\tabcolsep}{6pt}
\begin{tabular}{lcccccc}
\toprule
\textbf{Dataset} 
& \multicolumn{2}{c}{\textbf{Wasserstein$\downarrow$}} 
& \multicolumn{2}{c}{\textbf{MMD$\downarrow$}} 
& \multicolumn{2}{c}{\textbf{JSD$\downarrow$}} \\
\cmidrule(lr){2-3}
\cmidrule(lr){4-5}
\cmidrule(lr){6-7}
& \textbf{Random} & \textbf{T-vs-B}
& \textbf{Random} & \textbf{T-vs-B}
& \textbf{Random} & \textbf{T-vs-B} \\
\midrule
Adult    
& $0.0316{\tiny\pm0.0062}$ & $0.0278{\tiny\pm0.0004}$ 
& $0.0014{\tiny\pm0.0002}$ & $0.0012{\tiny\pm0.0002}$ 
& $0.0033{\tiny\pm0.0003}$ & $0.0034{\tiny\pm0.0000}$ \\
Shoppers 
& $0.2529{\tiny\pm0.0012}$ & $0.2531{\tiny\pm0.0019}$ 
& $0.1470{\tiny\pm0.0065}$ & $0.1366{\tiny\pm0.0046}$ 
& $0.1713{\tiny\pm0.0004}$ & $0.1699{\tiny\pm0.0008}$ \\
Beijing  
& $0.3449{\tiny\pm0.0003}$ & $0.3451{\tiny\pm0.0003}$ 
& $0.0986{\tiny\pm0.0004}$ & $0.0987{\tiny\pm0.0002}$ 
& $0.2332{\tiny\pm0.0001}$ & $0.2330{\tiny\pm0.0002}$ \\
Default  
& $0.4219{\tiny\pm0.0012}$ & $0.4211{\tiny\pm0.0001}$ 
& $0.3589{\tiny\pm0.0002}$ & $0.3590{\tiny\pm0.0004}$ 
& $0.2927{\tiny\pm0.0010}$ & $0.2919{\tiny\pm0.0003}$ \\
Magic    
& $0.0329{\tiny\pm0.0090}$ & $0.0328{\tiny\pm0.0099}$ 
& $0.0041{\tiny\pm0.0005}$ & $0.0041{\tiny\pm0.0005}$ 
& $0.0170{\tiny\pm0.0018}$ & $0.0148{\tiny\pm0.0027}$ \\
\bottomrule
\end{tabular}
\end{table*}

\section{Broader impacts}
\label{app:broader_impacts}

Improved synthetic tabular data generation can support data sharing, benchmarking, and model development when direct access to sensitive tabular data is restricted. However, higher-fidelity synthetic data can also increase risks if users overinterpret empirical attack diagnostics as formal privacy guarantees, or if generated tables are used in downstream decision-making without auditing for bias, subgroup coverage, and distribution shift. TabGRAA does not provide differential privacy or certified unlearning, and deployment in sensitive domains should therefore require additional privacy, fairness, and domain-specific validation.

\clearpage

\end{document}